\documentclass[]{academic_template}
\usepackage[toc,page,header]{appendix}

\usepackage{marvosym}

\usepackage{bbding}

\usepackage{amssymb}
\usepackage{amsmath}

\usepackage{makecell}
\usepackage{algorithm}      
\usepackage{algpseudocode}  

\usepackage{graphicx}

\usepackage{listings}
\usepackage{hyperref}
\usepackage{url}
\usepackage{booktabs}
\usepackage{wrapfig}

\usepackage{colortbl}
\usepackage{caption}
\usepackage{subcaption}

\usepackage{amsthm}
\newtheorem{theorem}{Theorem}
\newtheorem{assumption}{Assumption}
\newtheorem{remark}{Remark}

\setlogoheight{8mm}   

\setlogospacing{0mm}
\setsjtublue 

\settitlerulethickness{3pt}

\abstractboxon 
\setabstractframecolor{gray} 
\setabstractbgcolor{gray!10} 
\setlogotolineshift{0mm}

\settoprulethickness{2.5pt}  
\setbottomrulethickness{1.5pt} 

\usepackage{amsmath,amsfonts,bm}

\def\eqref#1{equation~\ref{#1}}

\def\1{\bm{1}}

\DeclareMathAlphabet{\mathsfit}{\encodingdefault}{\sfdefault}{m}{sl}
\SetMathAlphabet{\mathsfit}{bold}{\encodingdefault}{\sfdefault}{bx}{n}

\DeclareMathOperator*{\argmin}{arg\,min}

\usepackage{dblfloatfix}

\usepackage{tabularx}

\usepackage{multicol}
\usepackage{multirow}
\usepackage[table]{xcolor}

\usepackage{natbib}
\setcitestyle{numbers}
\setcitestyle{square}
\usepackage{graphicx}

\newcommand{\OurMethod}{ACE-Brain-0}

\title{{\OurMethod}: Spatial Intelligence as a Shared Scaffold for Universal Embodiments}

\author[1,2*]{Ziyang Gong}
\author[1*]{Zehang Luo}
\author[1*]{Anke Tang}
\author[1,5*]{Zhe Liu}
\author[3]{Shi Fu}
\author[1\dagger,\ddagger]{Zhi Hou}
\author[6]{Ganlin Yang}
\author[7]{Weiyun Wang}
\author[1]{Xiaofeng Wang}
\author[1]{Jianbo Liu}
\author[8]{Gen Luo}
\author[5]{Haolan Kang}
\author[3]{Shuang Luo}
\author[9]{Yue Zhou}
\author[10]{Yong Luo}
\author[11]{Li Shen}
\author[7]{Xiaosong Jia}
\author[2]{Yao Mu}
\author[2\ddagger]{Xue Yang}
\author[1]{Chunxiao Liu}
\author[2]{Junchi Yan}
\author[5]{Hengshuang Zhao}
\author[3\ddagger]{Dacheng Tao}
\author[1\ddagger]{Xiaogang Wang}

\contribution[*]{Equal contribution}
\contribution[\dagger]{Project Leader}
\contribution[\ddagger]{Corresponding author}

\affiliation[1]{ACE Robotics}
\affiliation[2]{Shanghai Jiao Tong University}
\affiliation[3]{Nanyang Technological University}
\affiliation[4]{The Chinese University of Hong Kong}
\affiliation[5]{The University of Hong Kong}

\affiliation[6]{University of Science and Technology of China}
\affiliation[7]{Fudan University}
\affiliation[8]{Xiamen University}

\affiliation[9]{East China Normal University}
\affiliation[10]{Wuhan University}
\affiliation[11]{Sun Yat-sen University}

\abstract{

Universal embodied intelligence demands robust generalization across heterogeneous embodiments, such as autonomous driving, robotics, and unmanned aerial vehicles (UAVs).  However, existing embodied brain in training a unified model over diverse embodiments frequently triggers long-tail data, gradient interference, and catastrophic forgetting, making it notoriously difficult to balance universal generalization with domain-specific proficiency. In this report, we introduce \textbf{\OurMethod}, a generalist foundation brain that unifies spatial reasoning, autonomous driving, and embodied manipulation within a single multimodal large language model~(MLLM). Our key insight is that spatial intelligence serves as a universal scaffold across diverse physical embodiments: although vehicles, robots, and UAVs differ drastically in morphology, they share a common need for modeling 3D mental space, making spatial cognition a natural, domain-agnostic foundation for cross-embodiment transfer. Building on this insight, we propose the \textbf{Scaffold-Specialize-Reconcile}~(SSR) paradigm, which first establishes a shared spatial foundation, then cultivates domain-specialized experts, and finally harmonizes them through \textbf{data-free model merging}. Furthermore, we adopt Group Relative Policy Optimization~(GRPO) to strengthen the model's comprehensive capability. Extensive experiments demonstrate that \OurMethod{} achieves competitive and even state-of-the-art performance across 24 spatial and embodiment-related benchmarks. 
}

\date{\today}

\checkdata[Project Page]{\url{https://ace-brain-team.github.io/ACE-Brain-0/}}
\checkdata[Code]{\url{https://github.com/ACE-BRAIN-Team/ACE-Brain-0}}
\checkdata[Hugging Face]{\url{https://huggingface.co/ACE-Brain/ACE-Brain-0-8B}}

\begin{document}

\maketitle

\newpage
\tableofcontents

\newpage

\section{Introduction}

Building embodied agents capable of perceiving, reasoning, and acting in the physical world demands intelligence far beyond isolated tasks or single modalities. In practice, such physical intelligence requires integrating temporal-spatial understanding, decision-making, planning, and so on. These capabilities are indispensable across diverse domains, including autonomous driving~\cite{robotron-drive,drivelm,drivemm,drivelmm-o1,drivegpt4,drivepi}, low-altitude sensing~\cite{mllm_uav_swarm,aircopbench,uav-vl-r1,airspatialbot,zhao2025urbanvideo,dvgbench,geonav,sautenkov2025uav,uav_survey,airvista,airvista2}, and embodied interaction~\cite{robotron-manip,roboafford,embodiedr1,llarva,rovi,gpt4robo,robospatial,eo1}.
Recent advances in multimodal large language models~(MLLMs)~\cite{gemini,glm-4.1v,kimi-k2.5,blip2,qwen2.5-vl,qwen2vl,qwen3,internvl,internvl2,internvl3,internvl3.5} have demonstrated impressive generalization across vision-language tasks, inspiring a surge of research into spatial understanding \cite{cambrian-s,sensenova,sensenova_1.5,internspatial,spatialtuning,spacevista,3dr1,spatial_ssrl,multi_spatialmllm,leovl,tiger} and embodied foundation models~\cite{robobrain,robobrain2,robobrain2.5,mimo-embodied,vlaser,vebrain,pelican-vl,nvidia2025cosmos,fang2025robix}.
Despite rapid advances in MLLMs, developing a \textit{generalist embodied foundation brain} that unifies these heterogeneous capabilities within a single model remains a central challenge.

\begin{figure*}[t!]
    \centering
    \includegraphics[width=\textwidth]{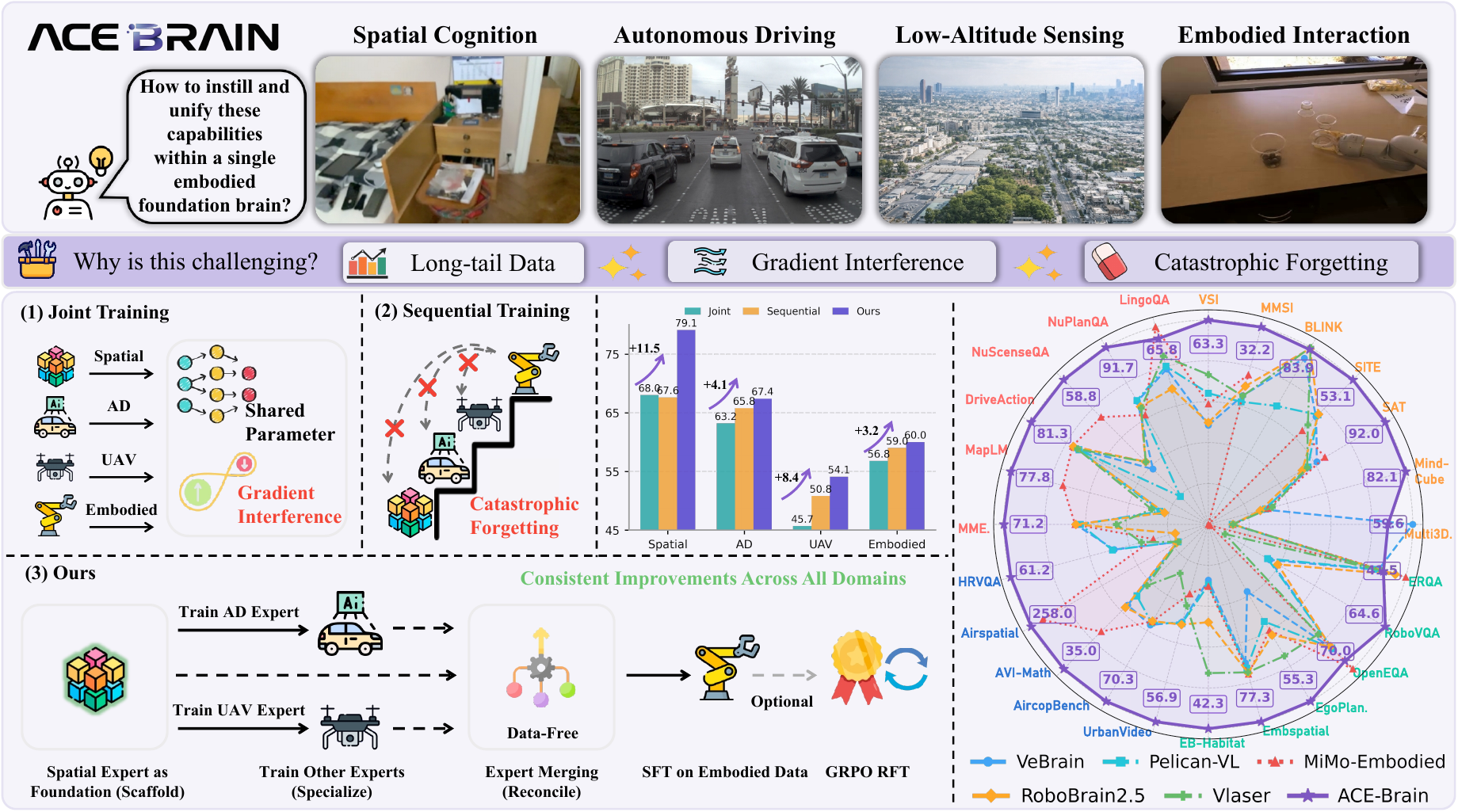}
    \caption{\textbf{Cross-Embodiment Learning Paradigm of \OurMethod{} and Performance Comparison with other Embodied Brains.}   \OurMethod{} unifies tasks from four domains, Spatial Cognition, Autonomous Driving, Low-Altitude Sensing, and Embodied Manipulation. We hope to answer: ``How can we instill and unify these capabilities within a single embodied foundation brain?'' Conventional joint training mixes multi-domain data with shared parameters, which often causes gradient interference across tasks; sequential training accumulates skills via stage-wise fine-tuning, but tends to overwrite previously learned capabilities and leads to catastrophic forgetting. In contrast, we propose our Scaffold-Specialize-Reconcile paradigm: We first construct a Spatial Expert as a universal foundational model, then train the AD and UAV experts separately to acquire domain-specific skills while enabling coarse-grained spatial reasoning, and subsequently combine their expertise into a unified model via data-free expert merging. We further perform Embodied SFT, optionally followed by GRPO-based RFT for reward-guided post-training alignment. This pipeline delivers consistent and stable improvements across all four domains. The radar chart on the right further compares \OurMethod{} against representative embodied brains across multiple benchmarks, showing stronger overall performance on a broader set of tasks, and validating the unified cross-embodiment capability of \OurMethod{}.}
    \label{teaser}
\end{figure*}

Existing approaches toward this goal fall short in two complementary ways. Joint training over mixed embodiment data frequently suffers from long-tail distributions, severe task interference, and diluted domain specialization, as conflicting gradients from heterogeneous domains compromise each other's optimization landscape. Alternatively, sequential domain-specific fine-tuning can sharpen performance on a target domain but inevitably incurs catastrophic forgetting of previously acquired abilities. These failure modes reveal that the core bottleneck is not merely data diversity or model capacity, but rather the absence of a principled mechanism to \textbf{organize, integrate, and preserve} cross-embodiment physical knowledge.

In this work, we identify a key structural insight that opens a path forward: \textbf{spatial intelligence serves as a shared scaffold across diverse embodiments}. Although autonomous driving, robotic interaction, and low-altitude sensing differ drastically in morphology and action space, they share a fundamental reliance on 3D spatial understanding, perceiving object layouts, thinking about geometric relations, and predicting spatial consequences of actions. This common denominator makes spatial modeling a natural, domain-agnostic foundation that can scaffold and catalyze learning across otherwise disparate physical domains.

Furthermore, this spatial foundation naturally anchors a \textbf{coarse-to-fine cognitive progression: from spatial perception, to high-level planning, and ultimately to fine-grained action}. Specifically, autonomous driving and low-altitude sensing primarily demand spatial-aware planning capabilities, functioning as Vision-and-Language Navigation~(VLN) tasks that focus on trajectory planning or behavior decision-making. Conversely, embodied interaction requires fine-grained execution, aligning with Vision-Language-Action~(VLA) paradigms that govern low-level kinematic control and precise object manipulation. 

Building on this insight, we introduce \textbf{\OurMethod}, a generalist foundation brain that unifies spatial cognition, autonomous driving, low-altitude  sensing~\cite{cao2021visdrone,zhou2021egoplanner,zhou2025multimodal,cao2025proximal,wang2025jtd,chu2024towards,zhang2025your,guo2025bedi,wang2025uav,yao2024aeroverse}, and embodied interaction within a single MLLM. To effectively harness spatial intelligence as a universal scaffold while preserving domain-specific proficiency, we propose the \textbf{Scaffold-Specialize-Reconcile~(SSR)} training paradigm. As shown in Fig.~\ref{teaser}, SSR operates in three phases:1)~\textbf{Scaffold:}\quad Establish a shared spatial foundation that encodes domain-agnostic 3D understanding as a universal structural prior.
2)~\textbf{Specialize:}\quad Cultivate domain-specialized experts that build upon the spatial scaffold to acquire embodiment-specific capabilities. 3)~\textbf{Reconcile:}\quad Harmonize heterogeneous experts into a unified model through \textbf{data-free model merging}, avoiding both gradient interference from joint training and catastrophic forgetting from sequential training. 
The SSR-trained model thus serves as a new foundation for subsequent capability expansion. On top of this, we further integrate embodied interaction data to enable finer-grained embodied knowledge acquisition. Finally, a Reinforcement Fine-Tuning~(RFT) strategy can be optionally employed to amplify targeted competencies.

Extensive evaluation across \textbf{24} embodiment-related benchmarks demonstrates that \OurMethod{} achieves competitive and even state-of-the-art performance across all targeted domains. In visual spatial intelligence, it attains top results on SAT~\cite{sat}~(92.0\%) and Mindcube-Tiny~\cite{yin2025spatial}~(82.1\%), significantly outperforming both open-source and closed-source models. In autonomous driving, \OurMethod{} reaches 71.2\% on MME-RealWorld~\cite{zhang2024mme} and 91.7\%
    on NuPlanQA~\cite{park2025nuplanqa}, while setting new records on low-altitude benchmarks, including UrbanVideo-Bench~\cite{zhao2025urbanvideo}~(56.9\%) and AircopBench~\cite{aircopbench}~(70.3\%). Crucially, ablation studies confirm that the SSR paradigm consistently avoids the catastrophic forgetting observed in sequential training and surpasses the limited gains of joint training, validating our core hypothesis that principled expert synthesis is fundamental to organizing heterogeneous physical knowledge.

In summary, our main contributions are as follows: 1)~We identify \textbf{spatial intelligence as a shared scaffold} for cross-embodiment transfer, empirically demonstrating that a shared spatial foundation substantially boosts learning across
    diverse physical domains; 2)~We propose the \textbf{Scaffold-Specialize-Reconcile} training paradigm, which decouples shared spatial structure from domain-specific specialization and reconciles heterogeneous experts via data-free model merging, effectively resolving the stability-plasticity dilemma. 3)~We build \textbf{\OurMethod}, a generalist foundation brain that achieves competitive and even state-of-the-art performance across \textbf{24} benchmarks spanning spatial cognition, autonomous driving, low-altitude sensing, and embodied interaction (shown in Fig.~\ref{four_domain}), which provides a principled blueprint for generalist embodied AI.

The remainder of this report is organized as follows. Section~\ref{sec:arch} presents the multimodal auto-regressive architecture of \OurMethod{}. The proposed Scaffold-Specialize-Reconcile training strategy, covering five stages from base knowledge acquisition to GRPO reinforcement learning, is detailed in Section~\ref{sec:sr}. Section~\ref{sec:experiments} reports extensive experimental results and Section~\ref{sec:ab} presents the ablation analyses that validate our core findings on spatial scaffolding and expert reconciliation. Section~\ref{sec:data} describes the multi-domain training corpus, spanning general multimodal instructions, embodied manipulation, autonomous driving, and low-altitude aerial datasets. Section~\ref{sec:bench} introduces the evaluation benchmarks. Finally, Section~\ref{sec:con} concludes with perspectives on future generalist embodied agents.

\begin{figure*}[t!]
    \centering
    \includegraphics[width=\textwidth]{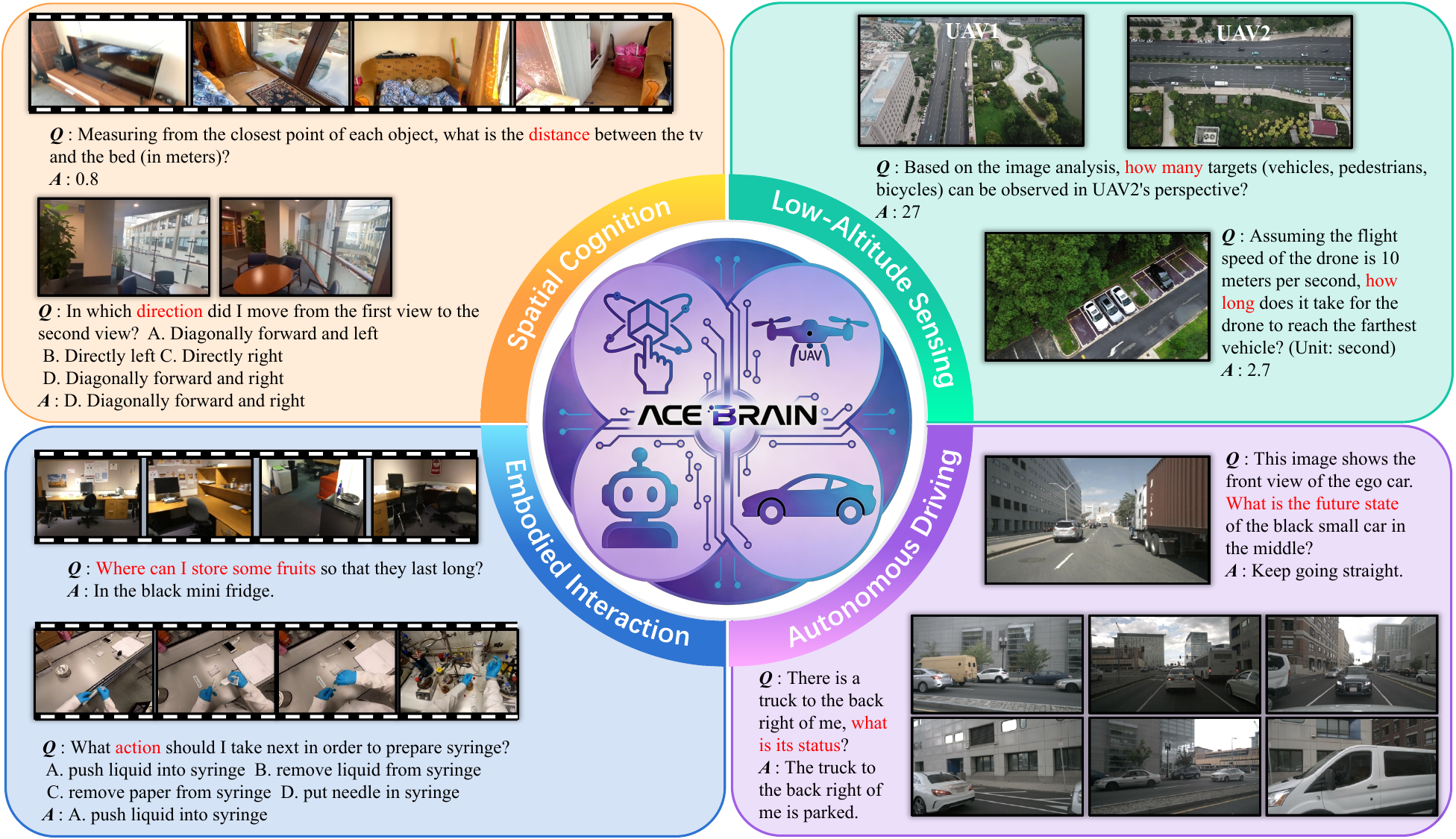}
    \caption{\textbf{Overview of \OurMethod{} Capabilities.} \OurMethod{} is a spatial-centric foundation
    brain that supports Spatial Intelligence, Embodied Manipulation, Low-Altitude Sensing, and Autonomous Driving.
    Specifically, \OurMethod{} is evaluated on 7 benchmarks for Spatial Cognition, 6 benchmarks for Autonomous Driving, 5 benchmarks for Low-Altitude Sensing, and 6 benchmarks for Embodied Interaction. \OurMethod{}'s ability to integrate perception, decision, and planning across diverse real-world embodied scenarios, highlighting its generalization capability as a universal embodied intelligence model.
}
    \label{four_domain}
\end{figure*}

\section{\OurMethod{} Architecture}
\label{sec:arch}

\begin{figure}
    \includegraphics[width=\textwidth]{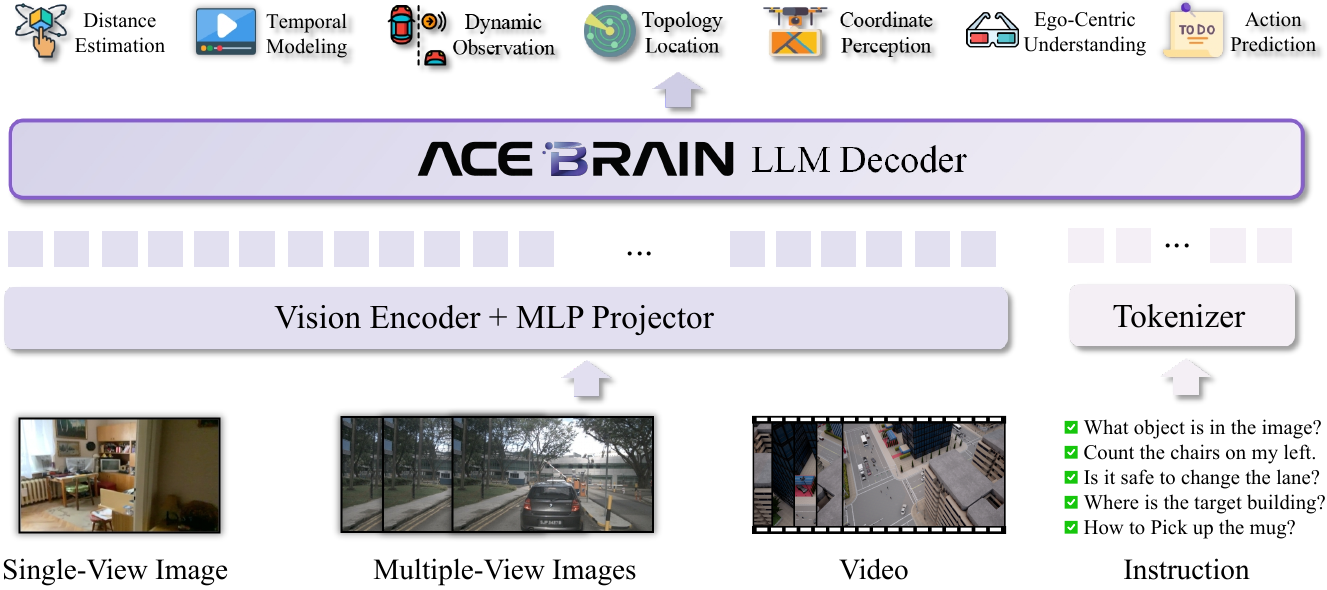}
    \caption{\textbf{\OurMethod{}’s unified multimodal architecture and cross-domain capability coverage.}
        \OurMethod{} supports inputs including single-view images, multi-view images, and videos; the instruction examples illustrate that the model can perform Q\&A-style tasks across domains (General/Spatial/Driving/Aerial/Embodied).
        The top row summarizes \OurMethod’s core capability spectrum for cross-embodiment scenarios, such as Spatial Perception and Temporal Modeling, enabling unified representation and compositional generalization across domains.}
    \label{fig:arch}
\end{figure}

\subsection{Task Formulation}

We consider a cross-domain embodiment learning setting, where a single MLLM is trained to perform tasks arising from distinct embodied agent forms. We denote the set of domains as $\mathcal{M} = \{ m_1, m_2, \dots, m_K \}$. Specifically, we consider the following domains in this work:
\begin{equation}
    \mathcal{M} = \{m_{\text{general}}, m_\text{embodied}, m_{\text{spatial}}, m_{\text{driving}}, m_{\text{aerial}}\},
\end{equation}
encompassing general, spatial, autonomous driving, and low-altitude domains.
Each $m_k \in \mathcal{M}$ induces a specific task distribution $\mathcal{D}_{m_k}$ over training samples $(o, c, y)$, where (1) $o \in \mathcal{O}_{m_k}$ denotes multimodal observations (images or video sequences);
(2) $c \in \mathcal{C}$ denotes task conditioning, including natural language instructions, queries, or high-level goals;
and (3) $y \in \mathcal{Y}_{m_k}$ denotes the target output, which may be textual responses, reasoning traces, spatial descriptions, action sequences, or planning trajectories depending on the task.
Despite substantial heterogeneity across observation spaces $\mathcal{O}_{m_k}$ and output formats $\mathcal{Y}_{m_k}$, we model all tasks using a unified conditional autoregressive formulation:
\begin{equation}
    p_\theta(y \mid o, c),
\end{equation}
where $\theta$ denotes the parameters of a single shared MLLM. This design choice enforces a common representational backbone and thinking substrate across all embodiments, enabling knowledge transfer and compositional generalization.

\subsection{Multimodal Architecture}

As shown in Figure~\ref{fig:arch}, \OurMethod{} adopts a multimodal autoregressive architecture, following recent designs of MLLM~\citep{bai2025qwen3vltechnicalreport}.
The model consists of three core components: a Vision Encoder paired with an MLP Projector, a Tokenizer, and the \OurMethod{} LLM Decoder.

The model accepts diverse visual inputs, including single-view images, multiple-view images, and video, together with natural language instructions. This flexibility enables \OurMethod{} to handle a broad spectrum of embodied tasks, from static scene understanding to temporal reasoning over video sequences.
All visual inputs are processed by a Vision Encoder, which extracts rich visual features regardless of the input modality or domain. The extracted features are then projected into the language model's embedding space through an MLP Projector. Notably, the resulting visual tokens are conceptually organized by domain into five categories-\textit{General}, \textit{Spatial}, \textit{Driving}, \textit{Aerial}, and \textit{Embodied}.
Natural language instructions and queries are converted into text tokens by the Tokenizer. These text tokens serve as task conditioning, specifying the desired output format, domain context, or action space for each query.

The domain-organized visual tokens and text tokens are concatenated into a unified sequence and fed into the \OurMethod{} LLM Decoder, which autoregressively generates output tokens. This unified decoder enables the model to jointly attend to visual and textual information, supporting a wide range of output capabilities, such as \textit{Spatial Perception}, \textit{Temporal Modeling}, \textit{Trajectory Prediction}, \textit{Safety Control}, \textit{Aerial Location}, \textit{Multi-UAV Cooperation}, \textit{Embodied Interaction}, \textit{Task Planning}, and so on.

Formally, given visual observations $o \in \mathbb{R}^{T \times H \times W \times 3}$ (e.g., single-view images with $T{=}1$, multiple-view images, or video frames) and textual conditioning $c$ (e.g., instructions or queries), the model predicts the next-token distribution as follows:
\begin{equation}
    p = \mathcal{F}_{\text{dec}}\Big(
    t_N \mid
    \mathcal{F}_{\text{proj}}\big(\mathcal{F}_{\text{enc}}(o; \theta_{\text{enc}}); \theta_{\text{proj}}\big),
    \mathcal{F}_{\text{tok}}(c),
    t_{0:N-1}; \theta_{\text{dec}}
    \Big),
\end{equation}
where $\mathcal{F}_{\text{enc}}(\cdot; \theta_{\text{enc}})$ denotes the Vision Encoder, $\mathcal{F}_{\text{proj}}(\cdot; \theta_{\text{proj}})$ denotes the MLP Projector, $\mathcal{F}_{\text{tok}}(\cdot)$ denotes the Tokenizer, and $\mathcal{F}_{\text{dec}}(\cdot; \theta_{\text{dec}})$ denotes the LLM Decoder. The output $p \in \mathbb{R}^{m}$ represents the probability distribution over a vocabulary of size $m$, and $t_i$ denotes the $i$-th generated token.

\subsection{Multimodal Autoregressive Objective}

Given a training sample $(o, c, y)$ where $y$ denotes the target output (e.g., textual answers, thinking traces, or action sequences), the MLLM processes the observation $o$ and conditioning text $c$ into a unified token sequence $\mathbf{s} = (s_1, \dots, s_L)$, where the first tokens correspond to encoded visual features and input text, followed by the target output tokens from $y$. We adopt the standard left-to-right autoregressive objective:
\begin{equation}
    \mathcal{L}_{\text{full}}(\theta)
    = - \sum_{i=1}^{L} w_i \log p_\theta(s_i \mid s_{<i}),
\end{equation}
where $w_i$ denotes the loss weight of token $s_i$, and $s_{<i} = (s_1, \dots, s_{i-1})$ denotes the preceding context.
Following common practice in MLLM training, loss computation is restricted to text tokens only, while visual tokens serve as conditioning context and are not directly predicted. This yields the supervised objective:
\begin{equation}
    \mathcal{L}_{\text{Text}}(\theta)
    = - \sum_{\substack{i=1, s_i \in \text{Text}}}^{L}
    w_i \log p_\theta(s_i \mid s_{<i}).
\end{equation}
Regarding the choice of token weights $w_i$, naive token averaging or sample averaging may introduce biases toward longer or shorter responses. To mitigate this issue, we adopt square averaging, which balances gradient contributions across samples with different sequence lengths.

\section{Training Strategy}
\label{sec:sr}

In this section, we describe the training methodology of \textbf{\OurMethod}.
Our training strategy follows a staged paradigm, which progressively builds shared multimodal representations, morphology-specific expertise, and finally a unified policy capable of generalizing across embodied domains.

\subsection{Stage 1: Spatial Scaffold Training}

In Stage 1, our primary objective is to train an expert model with spatial cognition capabilities. This stage consists of two main steps: first, we train a base model $\theta_{\text{base}}$ based on Qwen3-VL $\theta$ using general data~\cite{cambrian} to perform early activation through instruction tuning; second, we train a spatial expert model using large-scale spatial data from $\theta_{\text{base}}$. The spatial expert $\theta_{\text{spatial}}$ will serve as the central node to provide a shared universal scaffold for the following experts' training.

\subsection{Stage 2: Supervised Specialized Expert Fine-Tuning}
\label{sec:expert_finetune}

Starting from the spatial expert model $ \{ \theta_{\text{spatial}}$,
$ \theta_{\text{uav}}, \theta_{\text{ad}} \}$,
where each expert is initialized independently on data sampled from its corresponding distribution.
This isolation prevents gradient interference between domains with conflicting optimization objectives.

Specifically, we have: 
(1) $\theta_{\text{spatial}}$: Expert model specialized in spatial cognition modeling, trained on spatial intelligence datasets.
(2) $\theta_{\text{uav}}$: Expert model trained on $\theta_{\text{spatial}}$ and specialized in low-altitude sensing, location, and navigation for Unmanned Aerial Vehicles (UAV).
(3) $\theta_{\text{ad}}$: Expert model trained on $\theta_{\text{spatial}}$ and specialized in autonomous driving perception, planning, and control, trained on driving-specific datasets.

\subsection{Stage 3: Across-Embodiment Reconcile Model Merging}
\label{sec:stage_merging}

At this stage, we merge these expert models into a single unified model through a cross-embodiment merging procedure in a data-free manner. 
The goal of this stage is to synthesize complementary capabilities learned by each expert while mitigating interference.
The first systematic attempt to introduce adaptivity into multi-task model merging was made by AdaMerging~\citep{yangadamerging}, which learns task-specific merging coefficients and demonstrates clear advantages over fixed-weight baselines.
Building on this line of work, Shen et al.~\citep{shen2025efficient} propose what is arguably the most effective and efficient weight-ensembling mixture-of-experts framework to date, providing both theoretical justification and strong empirical evidence on large-scale multi-task benchmarks.
We adopt the optimization-based Merging algorithm~\citep{chengWhoeverStartedInterference2025,weiUnifyingMultimodalLarge2025}, which approximates the linear subspace of fine-tuning data for each expert by the task vector $\tau$, which is computed as the difference between the expert model parameters and the base model parameters.
The merging process begins with an initialization of the merged model as the average of all expert models $\theta_{\text{merge}}^{(0)} = \frac{1}{K} \sum_{i=1}^{K} \theta_i$.
Then, we iteratively optimize the merged model parameters to minimize the task interference across all experts as follows:
\begin{align}
    \theta_{\text{merge},l}^*
     & = \argmin_{\theta_{\text{merge},l}} \sum_{i=1}^K \mathbb{E}_{x_{i,l} \sim \mathcal{D}_{m_i,l}}\left\|\theta_{i,l} x_{i,l} - \theta_{\text{merge},l} x_{i,l}\right\|_2^2            \\
     & = \theta_{l} + \argmin_{\tau_{\text{merge},l}} \sum_{i=1}^K \mathbb{E}_{x_{i,l} \sim \mathcal{D}_{m_i,l}}\left\|\tau_{i,l} x_{i,l} - \tau_{\text{merge},l} x_{i,l}\right\|_2^2, \label{eq_wudi_1}
\end{align}
where $\theta$ denotes the parameters of the base model, $l$ denotes the $l$-th layer of the model, and $\mathcal{D}_{m_i,l}$ denotes the approximated data distribution for the $i$-th morphology expert at layer $l$.
By analyzing the update dynamics of the task vector, we derive the following upper bound on task interference:
\begin{equation}
\mathbb{E}_{\boldsymbol{x}_{i,l}\sim \mathcal{D}_{m_i,l}}\|(\theta_{i,l} - \tau_{\text{merge},l}) {x}_{i,l}\|_2^2\leq\omega_{i,l}^1\cdot\left\|(\theta_{i,l} - \tau_{\text{merge},l})({\tau}_{i,l})^\top\right\|_F^2 +\omega_{i,l}^2\cdot\left\|\theta_{i,l} - \tau_{\text{merge},l}\right\|_F^2,
\end{equation}
where $\omega_{i,l}^1$ and $\omega_{i,l}^2$ are constants, and the proof is provided in~\citep{chengWhoeverStartedInterference2025}.
Therefore, Eq.(\ref{eq_wudi_1}) can be rewrite as follows
\begin{equation}
     \theta_{\text{merge},l}^* \approx \theta_{pre,l} + \argmin_{\tau_{\text{merge,l}}} \sum_{i=1}^K \frac{1}{\|\tau_{i,l}\|_F^2} \left\| (\tau_{\text{merge},l} - \tau_{i,l})\tau_i^\top \right\|_F^2,
\end{equation}
The optimization of the merged task vector $\tau_{\text{merge}}$ is performed using the Adam optimizer with a learning rate of 1e-5, weight decay set to 0, and 1,000 iterations, implemented via the FusionBench framework~\citep{tangFusionBenchUnifiedLibrary2025}. The merging methods also can be changed to any other data-free methods, and here we also explored alternative merging techniques, including SVD-based Task Singular Vector Merging (TSVM)~\citep{gargiuloTaskSingularVectors2025} and vanilla parameter averaging (also known as ModelSoups)~\citep{wortsmanModelSoupsAveraging2022a,cheginiModelSoupBetter2024}.
We provide detailed comparisons between merging-based approaches and data mixing strategies in Section~\ref{sec:experiments}.

\subsection{Stage 4: Supervised Fine-Tuning on Embodied Data}

After cross-embodiment merging, the unified model $\theta_{\text{merged}}$ undergoes embodied-enhanced supervised fine-tuning to obtain $\theta_{\text{embodied}}$.
This stage focuses on further strengthening models' embodied capability.
The training data comprises large-scale embodied and ego-centric multimodal data pairs that emphasize consistent embodied interaction, task planning, and action prediction.
This refinement step ensures that the model can reliably interpret embodied instructions and generate appropriate responses while maintaining the spatial and cross-embodied capabilities acquired through merging.

\subsection{Stage 5: Reinforcement Learning with GRPO}

Finally, the $\theta_{\text{embodied}}$ is further refined through preference-based reinforcement learning to obtain $\theta_{\text{GRPO}}$ with 100k mixed data from spatial, ad, uav, and embodied corpus.
We adopt Group Relative Policy Optimization (GRPO)~\citep{shao2024deepseekmath}, which optimizes the model using relative rewards computed from multiple sampled responses to the same query.
Specifically, for each question $q$, GRPO samples a group of $G$ outputs $\{o_1, o_2, \ldots, o_G\}$ from the old policy $\pi_{\theta_{\text{old}}}$ and optimizes the policy by maximizing:
\begin{small}
    \begin{align}
        \mathcal{J}_{\text{GRPO}}(\theta) = \mathbb{E}_{\substack{q \sim P(Q) \\ \{o_i\}_{i=1}^G \sim \pi_{\theta_{\text{old}}}(\cdot|q)}}
        \frac{1}{G}\sum_{i=1}^{G} \frac{1}{|o_i|}\sum_{t=1}^{|o_i|}
        \left[
            \min\!\left(
            \frac{\pi_\theta(o_{i,t}|q, o_{i,<t})}{\pi_{\theta_{\text{old}}}(o_{i,t}|q, o_{i,<t})} \hat{A}_{i,t},\;
            \text{clip}\!\left(
            \frac{\pi_\theta(o_{i,t}|q, o_{i,<t})}{\pi_{\theta_{\text{old}}}(o_{i,t}|q, o_{i,<t})},\, 1{-}\varepsilon,\, 1{+}\varepsilon
            \right) \hat{A}_{i,t}
            \right)
            \right],
    \end{align}
\end{small}
where $\varepsilon$ is the clipping hyper-parameter for stabilizing training and $\hat{A}_{i,t}$ is the group-relative advantage.
Note that, refer to \cite{internvl3.5}, we omit the KL divergence penalty term against a reference policy in our implementation, as we empirically find that the clipped surrogate objective alone provides sufficient regularization for stable training in our embodied setting.
The advantage $\hat{A}_{i,t}$ is computed from group-normalized rewards without requiring a learned value function.
A reward model scores each output $o_i$, yielding rewards $\{r_1, r_2, \ldots, r_G\}$.
Under outcome supervision, the rewards are normalized within the group and assigned uniformly to all tokens:
\begin{equation}
    \hat{A}_{i,t} = \tilde{r}_i = \frac{r_i - \text{mean}(\mathbf{r})}{\text{std}(\mathbf{r})},
\end{equation}
where $\mathbf{r} = \{r_1, \ldots, r_G\}$.
By leveraging GRPO, the model learns to optimize for decision quality under uncertainty and the ability to handle multi-step task planning required in complex spatial and embodied scenarios.

\begin{table}[!t]
    \vspace{-0.2cm}
    \setlength{\tabcolsep}{6pt}
    \scriptsize
    \centering
    \caption{\textbf{Details of Scaffold-Specialize-Reconcile training strategy of ACE-Brain-0.}}
    \label{tab:training-recipe}
    \vspace{2mm}
    \renewcommand{\arraystretch}{1.3}
    \resizebox{\linewidth}{!}{
        \begin{tabular}{l|c|c|c|c|c}
            \toprule
                                           & \textbf{Stage-1}  & \textbf{Stage-2}                              & \textbf{Stage-3}          & \textbf{Stage-4}        & \textbf{Stage-5}   \\ \midrule
            \textbf{Target Objective}      & \textbf{Scaffold SFT} & \textbf{Specialize SFT}                         & \textbf{Expert Reconcile} & \textbf{Embodied SFT}   & \textbf{RTF}       \\
            \textbf{Data Domain}           & \tiny{Spatial}    & \tiny{AD,UAV} & - (Data-Free)             & \tiny{Embodied} & \tiny{Mixed}       \\
            \textbf{Base Model}            & \tiny{$\theta_{\text{base}}$}   & \tiny{$\theta_{\text{spatial}}$}            & \tiny{$\theta$, $\theta_{\text{spatial}}$, $\theta_{\text{ad}}$, $\theta_{\text{uav}}$}           & \tiny{$\theta_{\text{merge}}$}  & \tiny{$\theta_{\text{embodied}}$} \\ \midrule
            \textbf{Trainable Part}        & MLP, LLM          & MLP, LLM                                      & MLP, LLM                  & MLP, LLM                & MLP, LLM           \\
            \textbf{Per-device Batch Size} & 8                 & 8                                             & -                         & 8                       & 8 / 256            \\
            \textbf{Gradient Accumulation} & 4                 & 4                                             & 0                         & 4                       & 8                  \\
            \textbf{Epoch/Step}                 & 1                 & 1                                             & 1,000 steps               & 1                       & 1                  \\
            \textbf{Optimizer}             & AdamW             & AdamW                                         & Adam                      & AdamW                   & AdamW              \\
            \textbf{LR}                    & $5\times10^{-6}$  & $5\times10^{-6}$                              & $1\times10^{-5}$          & $5\times10^{-6}$        & $1\times10^{-6}$   \\
            \textbf{Deepspeed}             & Zero2             & Zero2                                         & -                         & Zero2                   & -                  \\
            \textbf{Weight Decay}          & 0.0               & 0.0                                           & 0.0                       & 0.0                     & 0.01               \\
            \textbf{Warmup Ratio}          & 0.03              & 0.03                                          & -                         & 0.03                    & 0.0                \\
            \textbf{LR Schedule}           & cosine            & cosine                                        & -                         & cosine                  & constant           \\
            \textbf{Min Pixels}            & 50176             & 50176                                         & -                         & 50176                   & 50176              \\
            \textbf{Max Pixels}            & 50176             & 50176                                         & -                         & 50176                   & 50176              \\
            \textbf{Video Min Pixels}      & 200,704           & 200,704                                       & -                         & 200,704                 & -                  \\
            \textbf{Video Max Pixels}      & 802,816           & 802,816                                       & -                         & 802,816                 & -                  \\
            \textbf{Model Max Length}      & 16384              & 16384                                         & -                         & 16384                   & 8192               \\ \midrule
        \end{tabular}}
    \vspace{-0.3cm}
    \label{table:training-recipe}
\end{table}

\section{Experiments}
\label{sec:experiments}

\subsection{Spatial Intelligence}

As shown in Table~\ref{tab:spatial_comprehension_comparison}, we evaluate \OurMethod{} on seven visual spatial benchmarks~(\textit{i.e.}, VSI~\cite{vsi}, MMSI~\cite{mmsi}, BLINK~\cite{blink}, SITE~\cite{site}, SAT~\cite{sat}, MindCube~\cite{yin2025spatial}, and Multi3DRef~\cite{multi3drefer}), collectively covering spatial relationship comprehension, 3D referring grounding, temporal memorization, viewpoint transformation, and mental spatial modeling. \OurMethod{}-8B achieves consistently strong performance across all benchmarks, matching or surpassing both closed-source MLLMs and state-of-the-art embodied brains.
    In spatial relationship and distance understanding, \OurMethod{} reaches 83.9\% on BLINK, outperforming Gemini2.5-Pro~(81.8\%), and 63.3\% on VSI, surpassing both Gemini2.5-Pro~(47.8\%) and the strongest embodied brain Vlaser~(60.3\%). For viewpoint transformation and mental modeling, it obtains 92.0\% on SAT, largely exceeding Gemini2.5-Pro~(79.3\%) and MiMo-Embodied-7B~(78.7\%), while on MindCube it achieves 82.1\%, far ahead of Gemini2.5-Pro~(57.6\%), GPT-4o~(46.1\%), and Vlaser-8B~(34.6\%).
    In scene understanding and 3D grounding, \OurMethod{} reaches 53.1\% on SITE, exceeding InternVL3.5-8B~(50.1\%) and RoboBrain2.5-8B~(52.6\%) while remaining competitive with Gemini2.5-Pro~(57.0\%); on Multi3DRef, our result of 59.6\% surpasses most open-source MLLMs and embodied brains, though VeBrain-7B~(67.8\%) retains the lead.

\begin{table*}[t]
\centering
\caption{\textbf{Performance Comparison on Spatial Benchmarks.} \textbf{Bold} numbers indicate the best results, \underline{underlined} numbers indicate the second-best results, and results marked with $^*$ are obtained using our evaluation framework.}
\label{tab:spatial_comprehension_comparison}
\setlength{\tabcolsep}{10pt}
\renewcommand{\arraystretch}{1.15}
\resizebox{\textwidth}{!}{
\begin{tabular}{l|ccccccc}
\toprule
\textbf{Model} &
\textbf{VSI} &
\textbf{MMSI} &
\textbf{BLINK} &
\textbf{SITE} &
\textbf{SAT} &
\textbf{Mindcube} &
\textbf{Multi3DRef}
\\
\midrule
\multicolumn{6}{l}{\textit{Closed-source MLLMs}} \\
\midrule
\textbf{GPT-4o~\cite{chatgpt4o}}                  & 43.6 & \underline{30.3} & 77.9 & 37.8 & 66.7 &46.1 & 8.1 \\
\textbf{Gemini-2.5-Pro~\cite{gemini}}          & \textbf{47.8} &\textbf{ 38.0} & \textbf{81.8 }& 57.0 & \textbf{79.3} &57.6 & -- \\
\textbf{Claude-4-Sonnet~\cite{claude4}}         & \underline{47.0} & --   & \underline{78.1} & --   & \underline{75.3} &36.6 & -- \\
\textbf{Qwen-VL-Max~\cite{bai2025qwen3vltechnicalreport}}             & 41.8 & --   & --   & --   & 56.7 &--   & -- \\

\midrule
\multicolumn{6}{l}{\textit{Open-source general-purpose MLLMs}} \\
\midrule
\textbf{MiMo-VL-7B~\cite{mimovl}}              & 36.4 & --   & --   & 37.6 & 59.3          &--       & 8.1 \\
\textbf{Qwen2.5-VL-7B-Inst.~\cite{qwen2.5-vl}}     & 32.3 & 26.8 & 82.5$^*$ & 31.4 & 52.0      &\underline{36.0}       & 21.1$^*$ \\
\textbf{InternVL3-8B~\cite{internvl3}}            & 42.1 & 25.7 & 82.9$^*$   & 41.1 & \textbf{72.7}$^*$      &\textbf{41.5}      & 8.1$^*$ \\
\textbf{InternVL3.5-8B~\cite{internvl3.5}}          & \underline{56.3} & \underline{30.2}$^*$ & \underline{84.1}$^*$   & \textbf{50.1}$^*$   & 59.3      &35.1$^*$   & 8.1$^*$ \\
\textbf{Qwen3-VL-2B-Inst.~\cite{bai2025qwen3vltechnicalreport}}       & 53.9 & 29.7$^*$ & 74.9 & 35.6 & \underline{67.3}$^*$  &34.5      & -- \\
\textbf{Qwen3-VL-8B-Inst.~\cite{bai2025qwen3vltechnicalreport}}       & \textbf{59.4} & \textbf{31.0}$^*$ & \textbf{85.2} & \underline{45.8} & 66.0$^*$  &29.4      & \textbf{54.6}$^*$ \\

\midrule
\multicolumn{6}{l}{\textit{Embodied Brain MLLMs}} \\
\midrule
\textbf{RoboBrain2.0-7B~\cite{robobrain2}}         & 36.1     & 27.9     & 81.4$^*$   & 49.2$^*$ & 75.3         &31.2$^*$     & 8.1$^*$ \\
\textbf{RoboBrain2.5-8B~\cite{robobrain2.5}}         & 41.0$^*$ & 29.3$^*$ & \underline{84.3}$^*$   & \underline{52.6}$^*$ & 63.3$^*$     &28.1$^*$     & 8.1$^*$ \\
\textbf{VeBrain-7B~\cite{vebrain}}              & 39.9     & 27.3$^*$ & 79.7       & 51.4$^*$ & 73.3$^*$     &30.1$^*$     & \textbf{67.8}$^*$ \\
\textbf{Pelican-VL-7B~\cite{pelican-vl}}           & 52.8     & 26.0$^*$ & 56.8       & 52.3$^*$ & 67.3$^*$     &31.0$^*$     & 7.9$^*$ \\
\textbf{MiMo-Embodied-7B~\cite{mimo-embodied}}        & 48.5     & \underline{31.7}$^*$ & 0.0$^*$    & 44.8     & \underline{78.7}         &32.3$^*$     & 8.1$^*$ \\
\textbf{Vlaser-8B~\cite{vlaser}}               & \underline{60.3}	    & 27.2	   &\textbf{84.9}$^*$	&47.5$^*$  &66.7$^*$	  &\underline{34.6}$^*$     & 8.1$^*$\\
\textbf{ACE-Brain-0-8B}            & \textbf{63.3} & \textbf{32.2} & 83.9 & \textbf{53.1} & \textbf{92.0} & \textbf{82.1}   & \underline{59.6} \\
\bottomrule
\end{tabular}}
\vspace{-0.25cm}
\end{table*}

\subsection{Autonomous Driving Intelligence}
\begin{table*}[t]
    \centering
    \caption{\textbf{Performance Comparison on Autonomous Driving Benchmarks.}}
    \label{tab:performance_comparison_driving}
    \setlength{\tabcolsep}{3pt}
    \renewcommand{\arraystretch}{1.15}
    \resizebox{\textwidth}{!}{
        \begin{tabular}{l|cccccc}
            \toprule
            \textbf{Model}               &
            \textbf{MME-RealWorld}       &
            \textbf{MAPLM}               &
            \textbf{DriveAction}         &
            \textbf{NuscenesQA}          &
            \textbf{NuPlanQA}            &
            \textbf{LingoQA}\\
            \midrule
            \multicolumn{6}{l}{\textit{Closed-source MLLMs}} \\
            \midrule
            \textbf{GPT-4o~\cite{chatgpt4o}}              & 58.0    & \textbf{26.6}  & 72.5  & \textbf{34.3}  & \textbf{81.5}  & 56.0   \\
            \textbf{Gemini2.5-pro~\cite{gemini}}       & \textbf{67.0}    & \underline{26.1}  & \textbf{73.5}  & \underline{16.1}  & --    & \textbf{64.1} \\
            \textbf{Claude-4-Sonnet~\cite{claude4}}     & --    & --    & --    & --    & --    & --   \\
            \textbf{Qwen-VL-Max~\cite{bai2025qwen3vltechnicalreport}}         & \underline{61.7}  & 24.8  & \underline{72.6}  & 6.7   & --    & \underline{58.8} \\

            \midrule
            \multicolumn{6}{l}{\textit{Open-source general-purpose MLLMs}} \\
            \midrule
            \textbf{MiMo-VL-7B~\cite{mimovl}}          & 54.1     & 31.0      & \underline{78.9}      & \textbf{33.9}       & --  & 54.8 \\
            \textbf{Qwen2.5-VL-7B-Inst.~\cite{qwen2.5-vl}} & \underline{58.6}     & 24.8      & 73.4      & \underline{25.8}       & 41.8      & \underline{55.6} \\
            \textbf{InternVL3-8B~\cite{internvl3}}        & 52.1$^*$ & \underline{31.0}$^*$  & 77.6$^*$  & 26.6$^*$    & 82.7$^*$  & 50.8$^*$ \\
            \textbf{InternVL3.5-8B~\cite{internvl3.5}}      & 49.2     & 14.2    & 78.1      & 17.2       & 83.9$^*$  & 46.7 \\
            \textbf{Qwen3-VL-2B-Inst.~\cite{bai2025qwen3vltechnicalreport}}   & --       & 20.5$^*$  & 77.1$^*$  & --         & 66.7$^*$        & 34.8$^*$   \\
            \textbf{Qwen3-VL-8B-Inst.~\cite{bai2025qwen3vltechnicalreport}}   & \textbf{63.3}$^*$ & \textbf{31.5}$^*$  & \textbf{79.0}$^*$  & 22.8$^*$   & \textbf{82.2}$^*$  & \textbf{57.0}$^*$   \\

            \midrule
            \multicolumn{6}{l}{\textit{Embodied Brain MLLMs}} \\
            \midrule
            \textbf{RoboBrain2.0-7B~\cite{robobrain2}}     & 59.6$^*$ & 31.7$^*$ & 80.9$^*$  & 32.3$^*$     & 82.8$^*$ & 39.2$^*$    \\
            \textbf{RoboBrain2.5-8B~\cite{robobrain2.5}}     & 60.0$^*$  & 22.5$^*$ & 80.5$^*$ & 33.2$^*$     & 79.3$^*$   & 48.0$^*$    \\
            \textbf{VeBrain-7B~\cite{vebrain}}          & 60.1$^*$ & 22.9$^*$ & 78.3$^*$ & 29.3$^*$     & 82.9$^*$ & 55.0$^*$    \\
            \textbf{Pelican-VL-7B~\cite{pelican-vl}}       & 57.9$^*$ & 24.4$^*$ & 77.2$^*$ & 14.8        & 83.4$^*$ & 56.0$^*$    \\
            \textbf{MiMo-Embodied-7B~\cite{mimo-embodied}}    & \underline{60.3}     & \underline{74.5}     & \underline{81.0}     & \underline{56.7}        & 73.7$^*$ & \textbf{69.9 }\\
            \textbf{Vlaser-8B~\cite{vlaser}}           & 41.6$^*$	& 29.1$^*$ & 78.1$^*$ & 33.1$^*$	    & 78.3$^*$ & 59.6* \\
            \textbf{ACE-Brain-0-8B}                & \textbf{71.2}     & \textbf{77.8}       &\textbf{81.3 }     &\textbf{58.8}    &\textbf{91.7}       & \underline{65.8}   \\

            \bottomrule
        \end{tabular}
    }
    
\end{table*}

As shown in Table~\ref{tab:performance_comparison_driving}, we evaluate \OurMethod{} on six autonomous driving benchmarks (MME-RealWorld~\cite{zhang2024mme}, MAPLM~\cite{maplm}, DriveAction~\cite{driveaction}, NuscenesQA~\cite{nuscenesqa}, NuPlanQA~\cite{park2025nuplanqa}, and LingoQA~\cite{lingoqa}), collectively covering multi-view traffic perception, planning-aware language modeling, action recognition, ego-centric scene understanding, and language-grounded driving semantics.

\OurMethod{}-8B achieves consistently strong results across all
benchmarks, outperforming closed-source MLLMs, open-source general MLLMs, and state-of-the-art embodied brains in most cases. In object understanding under the driving scenarios, \OurMethod{} reaches 71.2\% on MME-RealWorld, surpassing Gemini2.5-Pro~(67.0\%), Qwen3-VL-8B-Inst~(63.3\%), and MiMo-Embodied-7B (60.3\%), and obtains 77.8\% on MAPLM, substantially exceeding the strongest embodied brain baseline MiMo-Embodied-7B~(74.5\%). In action understanding and scene QA, it achieves 81.3\% on DriveAction, outperforming Gemini2.5-Pro~(73.5\%) and MiMo-Embodied-7B~(81.0\%), while on NuscenesQA it reaches 58.8\%, largely exceeding GPT-4o~(34.3\%) and MiMo-Embodied-7B~(56.7\%). In physical kinematics comprehension and decision making, \OurMethod{} attains 91.7\% on NuPlanQA, where models need integrate surround-view inputs with kinematic cues to produce driving justifications, outperforming Pelican-VL-7B (83.4\%) and RoboBrain2.0-7B~(82.8\%); on LingoQA, it achieves 65.8\%, exceeding Gemini2.5-Pro~(64.1\%) and Vlaser-8B~(59.6\%), demonstrating the ability to generate interpretable, causally grounded behavior descriptions.

Overall, these results suggest that \OurMethod{} does not merely memorize driving templates but learns an ego-centric, multi-view consistent driving representation that bridges perception, kinematics, and language into decision-relevant reasoning, supporting reliable next-step prediction and interpretable interaction within complex traffic environments.

\subsection{Low-Altitude Intelligence}

\begin{table*}[t]
\centering
\caption{\textbf{Performance Comparison on Low-Altitude Benchmarks.}}
\label{tab:uav_comprehension_comparison}
\setlength{\tabcolsep}{6pt}
\renewcommand{\arraystretch}{1.15}
\resizebox{\textwidth}{!}{
\begin{tabular}{l|ccccc}
\toprule
\textbf{Model} &
\textbf{UrbanVideo-Bench} &
\textbf{AircopBench} &
\textbf{Avi-Math} &
\textbf{Airspatial-VQA} &
\textbf{HRVQA}\\
\midrule
\multicolumn{6}{l}{\textit{Closed-source MLLMs}} \\
\midrule
\textbf{GPT-4o~\cite{chatgpt4o}}               & \underline{43.6} & \textbf{51.8} &\textbf{33.5} & \textbf{192.4} & \textbf{36.9} \\
\textbf{Gemini2.5-pro~\cite{gemini}}        &  --  & 49.1 &  --  & -- & -- \\
\textbf{Claude-4-Sonnet~\cite{claude4}}      &  --  & \underline{50.7} &  --  & -- & -- \\
\textbf{Qwen-VL-Max~\cite{bai2025qwen3vltechnicalreport}}          & \textbf{45.5} & 50.5 &  --  & -- & -- \\

\midrule
\multicolumn{6}{l}{\textit{Open-source general-purpose MLLMs}} \\
\midrule
\textbf{MiMo-VL-7B~\cite{mimovl}}                   & --         & 48.6$^*$ & --       & --           & 27.3$^*$ \\
\textbf{Qwen2.5-VL-7B-Inst.~\cite{qwen2.5-vl}}          & \underline{34.6}$^*$   & 47.3     & \textbf{27.9}     & --           & 27.1$^*$ \\
\textbf{InternVL3-8B~\cite{internvl3}}                 & 30.8$^*$   & 52.2     & \underline{26.8}     & \textbf{337.0}        & \textbf{37.6}$^*$ \\
\textbf{InternVL3.5-8B~\cite{internvl3.5}}               & \underline{34.6}$^*$   & 23.6$^*$ & 23.3$^*$ & 1507.0       & 14.6$^*$ \\
\textbf{Qwen3-VL-2B-Inst.~\cite{bai2025qwen3vltechnicalreport}}            & 30.5$^*$   & 38.3$^*$ & 11.6$^*$ & --           & \underline{37.5}$^*$ \\
\textbf{Qwen3-VL-8B-Inst.~\cite{bai2025qwen3vltechnicalreport}}            & \textbf{39.}2$^*$   & 48.7$^*$ & 25.6$^*$ & \underline{1254.4}$^*$   & 37.4$^*$ \\

\midrule
\multicolumn{6}{l}{\textit{Embodied Brain MLLMs}} \\
\midrule
\textbf{RoboBrain2.0-7B~\cite{robobrain2}}              & 30.0$^*$   & 47.2$^*$ & 22.1$^*$ & 764.4$^*$    & 19.2$^*$ \\
\textbf{RoboBrain2.5-8B~\cite{robobrain2.5}}              & \underline{37.5}$^*$ & 49.9$^*$ & 26.1$^*$ & 1509.3$^*$   & 13.4$^*$ \\
\textbf{VeBrain-7B~\cite{vebrain}}                   & 36.5$^*$ & \underline{51.9}$^*$ & 25.4$^*$ & 1583.4$^*$   & 37.9$^*$ \\
\textbf{Pelican-VL-7B~\cite{pelican-vl}}                & 37.1$^*$ & 50.8$^*$ & 22.5$^*$ & 1586.6$^*$   & \underline{38.6}$^*$ \\
\textbf{MiMo-Embodied-7B~\cite{mimo-embodied}}             & 26.0$^*$ & 50.2$^*$ & \underline{33.7}$^*$ & \underline{289.4}$^*$    & 22.2$^*$ \\
\textbf{Vlaser-8B~\cite{vlaser}}                    & 30.4$^*$ & 25.3$^*$	& 19.3$^*$ & 1597.7$^*$   & 27.0$^*$   \\
\textbf{ACE-Brain-0-8B}                 & \textbf{56.9}    & \textbf{70.3}     &\textbf{35.0}     & \textbf{258.0} & \textbf{61.2} \\

\bottomrule
\end{tabular}
}
\end{table*}

As shown in Table~\ref{tab:uav_comprehension_comparison}, we evaluate \OurMethod{} on five low-altitude UAV benchmarks~(\textit{i.e.}, UrbanVideo-Bench~\cite{zhao2025urbanvideo}, AircopBench~\cite{aircopbench}, Avi-Math~\cite{zhou2025multimodal}, Airspatial-VQA~\cite{airspatialbot}, and HRVQA~\cite{hrvqa}), collectively covering aerial location, navigation, bird's-eye traffic-scene reasoning, and geometry-aware visual computation under drastic viewpoint changes.

\OurMethod{}-8B delivers consistently strong results across all
benchmarks, with clear advantages in the most decision-relevant UAV
settings. In aerial location and safety-critical scene reasoning,
\OurMethod{} achieves 56.9\% on UrbanVideo-Bench, outperforming
Qwen-VL-Max~(45.5\%), GPT-4o~(43.6\%), and RoboBrain2.5-8B~(37.5\%),
and reaches 70.3\% on AircopBench, which requires resolving
topology-aware spatial relations such as crosswalk occupancy and
lane-aware ordering, largely surpassing GPT-4o~(51.8\%) and InternVL3-8B~(52.2\%). In quantitative aerial reasoning and high-resolution
understanding, \OurMethod{} obtains 35.0\% on Avi-Math, outperforming
Qwen2.5-VL-7B-Inst.~(27.9\%) and MiMo-Embodied-7B~(33.7\%),
demonstrating competence in region-conditioned counting, altitude
estimation, and physics-based computation (Fig.~\ref{avimath57}); on HRVQA,it achieves 61.2\%, significantly exceeding InternVL3-8B~(37.6\%) and Pelican-VL-7B~(38.6\%).

\subsection{Embodied Egocentric Intelligence}

\begin{table*}[t]
    \centering
    \caption{\textbf{Performance Comparison on Embodied Benchmarks.}}
    \label{tab:performance_comparison_embodied}
    \setlength{\tabcolsep}{8.3pt}
    \renewcommand{\arraystretch}{1.15}
    \resizebox{\textwidth}{!}{
        \begin{tabular}{l|cccccc}
            \toprule
            \textbf{Model} &
            \textbf{ERQA} &
            \textbf{RoboVQA} &
            \textbf{OpenEQA} &
            \textbf{EmbSpatial} &
            \textbf{Ego-Plan2} &
            \textbf{EB-Habitat}
            \\
            \midrule
            \multicolumn{6}{l}{\textit{Closed-source MLLMs}} \\
            \midrule
            \textbf{GPT-4o~\cite{chatgpt4o}}          & \underline{47.0}    & 3.3   & 56.4    & \underline{71.9}  & 41.8  & \textbf{59.0}   \\
            \textbf{Gemini2.5-pro~\cite{gemini}}   & \textbf{48.3}  & --   & --    & \textbf{78.7}  & \underline{42.9}  & --   \\
            \textbf{Claude-4-Sonnet~\cite{claude4}} & --    & --   & --    & 64.3  & 41.3  & --   \\
            \textbf{Qwen-VL-Max~\cite{bai2025qwen3vltechnicalreport}}     & --    & --  & --    & --    & \textbf{44.7}  & \underline{45.3} \\

            \midrule
            \multicolumn{6}{l}{\textit{Open-source general-purpose MLLMs}} \\
            \midrule
            \textbf{MiMo-VL-7B~\cite{mimovl}}          & 37.8  & 35.3      & --       & --       & 34.1      & --   \\
            \textbf{Qwen2.5-VL-7B-Inst.~\cite{qwen2.5-vl}} & 38.8  & \textbf{57.2}$^*$  & 59.9$^*$ & --       & 39.7      & 14.3 \\
            \textbf{InternVL3-8B~\cite{internvl3}}        & 35.3  & 29.8$^*$  & \textbf{68.3}$^*$ & \underline{73.9}$^*$ & 37.9$^*$  & 24.3 \\
            \textbf{InternVL3.5-8B~\cite{internvl3.5}}      & \underline{41.5}  & 28.6      & 64.9$^*$ & 70.3     & \underline{42.9}      & \textbf{32.0}$^*$   \\
            \textbf{Qwen3-VL-2B-Inst.~\cite{bai2025qwen3vltechnicalreport}}   & 28.3  & 18.2$^*$  & 56.4$^*$ & 69.2$^*$ & 33.6$^*$  & --   \\
            \textbf{Qwen3-VL-8B-Inst.~\cite{bai2025qwen3vltechnicalreport}}   & \textbf{45.8}  & \underline{47.0}$^*$  & \underline{67.1}$^*$ & \textbf{78.5}$^*$ & \textbf{53.5}$^*$  & \underline{27.7}$^*$ \\

            \midrule
            \multicolumn{6}{l}{\textit{Embodied Brain MLLMs}} \\
            \midrule
            \textbf{RoboBrain2.0-7B~\cite{robobrain2}}  & 42.5$^*$ & 6.6$^*$  & 60.0$^*$   & \underline{76.3}$^*$ & 33.2     & 29.3   \\
            \textbf{RoboBrain2.5-8B~\cite{robobrain2.5}}  & \underline{44.3$^*$} & 18.7$^*$ & 62.6$^*$ & 75.6$^*$ & 44.9$^*$ & 26.3$^*$   \\
            \textbf{VeBrain-7B~\cite{vebrain}}       & 40.3$^*$ & 24.7$^*$  & 63.8$^*$ & 70.5$^*$ & 27.3     & 15.0$^*$   \\
            \textbf{Pelican-VL-7B~\cite{pelican-vl}}    & 39.8     & 23.6$^*$ & 63.3$^*$ & 73.2$^*$ & 39.4$^*$ & 16.3$^*$   \\
            \textbf{MiMo-Embodied-7B~\cite{mimo-embodied}} & \textbf{46.8}     & \underline{32.8}$^*$      & \textbf{74.1}$^*$ & 76.2$^*$ & 43.0 & 16.7$^*$   \\
            \textbf{Vlaser-8B~\cite{vlaser}}        &41.0	     &7.9$^*$	&56.3$^*$   &75.3$^*$   &\underline{53.4}    &\underline{40.0}   \\
            \textbf{ACE-Brain-0-8B}     & 41.5     & \textbf{64.6} & \underline{70.0} & \textbf{77.3} & \textbf{55.3} & \textbf{42.3}  \\

            \bottomrule
        \end{tabular}
    }
\end{table*}

As shown in Table~\ref{tab:performance_comparison_embodied}, we evaluate \OurMethod{} on six embodied benchmarks, including ERQA~\cite{gemini_robotics}, RoboVQA~\cite{robovqa}, OpenEQA~\cite{majumdar2024openeqa}, EmbSpatial-Bench~\cite{embspatial}, EgoPlan-Bench2~\cite{egoplanbench}, and EmbodiedBench(EB)-Habitat~\cite{embodiedbench}, collectively covering embodied interaction, egocentric planning, and {next-step prediction} under temporal observations. 

\OurMethod{}-8B achieves consistently strong results across all
benchmarks, outperforming or matching both general-purpose MLLMs and
state-of-the-art embodied brains. In embodied interaction and egocentric scene understanding, \OurMethod{} reaches 64.6\% on RoboVQA, surpassing GPT-4o~(34.5\%), Gemini2.5-Pro (33.9\%), Qwen2.5-VL-7B-Inst.~(57.2\%), and MiMo-Embodied-7B (32.8\%), and obtains 70.0\% on OpenEQA, exceeding Qwen3-VL-8B-Inst.~(67.1\%), VeBrain-7B~(63.8\%), and RoboBrain2.5-8B~(62.6\%), while remaining competitive with MiMo-Embodied-7B~(74.1\%). In sequential decision-making and temporal reasoning, \OurMethod{} achieves 55.3\% on EgoPlan-Bench2, delivering the best overall result and outperforming Qwen3-VL-8B-Inst.~(53.5\%), Vlaser-8B~(53.4\%), and RoboBrain2.5-8B~(44.9\%); on EB-Habitat, it obtains 41.7\%, exceeding Vlaser-8B~(40.0\%) and RoboBrain2.0-7B~(29.3\%), showing solid generalization to longer-horizon embodied environments. In embodied spatial comprehension,
\OurMethod{} reaches 77.3\% on EmbSpatial-Bench, largely exceeding
RoboBrain2.0-7B~(76.3\%) and InternVL3-8B~(73.9\%), while staying close
to the strongest results of Gemini2.5-Pro~(78.7\%) and Qwen3-VL-8B-Inst.~(78.5\%).

\section{Ablation Study}
\label{sec:ab}
\subsection{Spatial Intelligence as a Shared Scaffold}

It is widely believed that strong visual spatial intelligence benefits physical world comprehension, yet its actual impact is rarely quantified, especially across embodiments. To explicitly measure
how spatial knowledge transfers, we compare multiple training and adaptation routes on three domains: autonomous driving~(AD), low-altitude
aerial intelligence~(UAV), and embodied intelligence~(Embodied). 
As summarized in Table~\ref{tab:spatial_scaffold_transfer}, domain experts trained directly from the base model~(Qwen3-VL-8B-Instruct) already yield
clear gains over the base model on AD and UAV benchmarks, confirming that domain-specific data alone improves in-domain performance. However, we carefully find that there is 1.9\% performance degradation on the Embodied benchmark compared with the base model. Compared with AD and UAV benchmarks, we argue that the Embodied benchmark needs more fine-grained capability in action understanding~(fine-grained manipulation vs. coarse-grained planning), making it difficult to transfer directly from the general domain of the base model to the embodied domain.
Crucially, when initializing these experts from a spatial-centric pretrained checkpoint, our framework yields substantial and consistent improvements: +19.3\% in AD, +16.5\% in UAV, and +5.4\% in Embodied over the base model. These results provide compelling empirical evidence that spatial knowledge functions as a transferable structural scaffold that catalyzes learning across universe embodiments, rather than an isolated capability confined to spatial understanding benchmarks.

\begin{table*}[t]
\centering
\caption{\textbf{Spatial knowledge consistently improves expert performance.}
We compare different pretraining routes for adapting to three domains (AD, UAV, and Embodied).
The average score of AD is computed over NuscenesQA, NuPlanQA, and LingoQA benchmarks; 
the average score of UAV is computed over UrbanVideoBench, AircopBench, and Avi-Math; 
the average score of Embodied is computed over RoboVQA and EgoPlan. 
\textbf{Bold} indicates the best score, and improvements ($\Delta$) are reported over Qwen3-VL-8B-Instruct. 
}
\label{tab:spatial_scaffold_transfer}
\scriptsize
\setlength{\tabcolsep}{8pt}
\renewcommand{\arraystretch}{1.36}
\resizebox{\textwidth}{!}{
\begin{tabular}{l|cc|cc|cc}
\toprule
\multirow{2}{*}{\textbf{Initialization / Route}} 
& \multicolumn{2}{c|}{\textbf{AD}} 
& \multicolumn{2}{c|}{\textbf{UAV}} 
& \multicolumn{2}{c}{\textbf{Embodied}} \\
& \textbf{Avg.} & $\Delta$ 
& \textbf{Avg.} & $\Delta$ 
& \textbf{Avg.} & $\Delta$ \\
\midrule

\textbf{Qwen3-VL-8B-Instruct ($\theta$)} 
& 47.0 & -- 
& 37.8 & -- 
& \underline{52.7} & -- \\

\textbf{AD Experts} ($\theta \rightarrow \theta_A$) 
& \underline{58.1} & \underline{+11.1}
& -- & --
& -- & -- \\
\textbf{UAV Experts} ($\theta \rightarrow \theta_U$) 
& -- & --
& \underline{48.8} & \underline{+11.0} 
& -- & -- \\
\textbf{Embodied Experts} ($\theta \rightarrow \theta_E$) 
& -- & --
& --& --
& 50.8 & -1.9 \\
\midrule
\textbf{Spatial $\rightarrow$ AD Expert} ($\theta_{S}\rightarrow \theta_A$) 
& \textbf{72.6} & \textbf{+25.6} 
& -- & --
& -- & --\\
\textbf{Spatial $\rightarrow$ UAV Expert} ($\theta_{S}\rightarrow \theta_U$) 
& --& --
& \textbf{54.3} & \textbf{+16.5}  
& -- & -- \\
\textbf{Spatial $\rightarrow$ Embodied Expert} ($\theta_{S}\rightarrow \theta_E$) 
& -- & --
& -- & --
& \textbf{58.1} & \textbf{+5.4 }\\

\bottomrule
\end{tabular}
}
\end{table*}

\subsection{Importance of Data-free Model Merging~(Reconcile)}

A fundamental challenge in developing generalist physical intelligence lies in the effective integration of heterogeneous domain knowledge. In this work, we investigate the efficient composition of domain-specific experts into a unified model through data-free parameter merging. Specifically, we evaluate three merging strategies, which include the naive weight averaging, TSVM, and WUDI on three domains: Spatial, AD, and UAV. Notably, all methods operate without requiring additional training data during composition, with results summarized in Tab.~\ref{tab:merge_vs_joint}. Across all domains, parameter merging consistently outperforms the base model, demonstrating its ability to effectively synthesize domain expertise. While simple averaging yields noticeable improvements, TSVM further enhances performance, suggesting that domain-specific knowledge is largely complementary and can be reconciled at the parameter level. Among the evaluated strategies, WUDI achieves the most robust results. It not only leads in all domains but also surpasses the strongest individual specialists~(\textit{e.g.}, 76.7\% in Spatial and 68.1\% in AD). This highlights a super-additive composition effect rather than mere parameter ensembling, positioning optimized merging as a pivotal mechanism for integrating complementary expertise into unified models.

\begin{table*}[t]
    \centering
    \caption{\textbf{Optimized Merging effectively combines each domain.} The average score
        of Spatial is computed over VSI, SAT, and MindCube benchmarks.
    }
    \label{tab:merge_vs_joint}
    \scriptsize
    \setlength{\tabcolsep}{10pt}
    \renewcommand{\arraystretch}{1.5}
    \resizebox{\textwidth}{!}{
        \begin{tabular}{l|cc|cc|cc}
            \toprule
            \multirow{2}{*}{\textbf{Training / Synthesis Strategy}}
                                                           & \multicolumn{2}{c|}{\textbf{Spatial}}
                                                           & \multicolumn{2}{c|}{\textbf{AD}}
                                                           & \multicolumn{2}{c}{\textbf{UAV}}                                                                       \\
                                                           & \textbf{Avg.}                         & $\Delta$
                                                           & \textbf{Avg.}                         & $\Delta$
                                                           & \textbf{Avg.}                         & $\Delta$                                                       \\
            \midrule
            \textbf{Qwen3-VL-8B-Instruct} ($\theta$)       & 51.6                                  & --             & 47.0 & --             & 37.8 & --             \\

            \textbf{Spatial} ($\theta_{S}$)
                                                           & 72.5                                  & +20.9          & --   & --             & --   & --             \\

            \textbf{Spatial$\rightarrow$AD} ($\theta_{S} \rightarrow \theta_A$)
                                                           & --                                    & --             & 72.6 & +25.6          & -- & --         \\
            \textbf{Spatial$\rightarrow$UAV} ($\theta_{S} \rightarrow \theta_U$)
                                                           & --                                    & --             & -- & --          & \textbf{54.3} & \textbf{+16.5}          \\

            \midrule
            \rowcolor{gray!12}
\multicolumn{7}{l}{Merging weights of $\theta_0$, $\theta_{0\rightarrow S}$, $\theta_{S\rightarrow A}$, $\theta_{S\rightarrow U}$} \\
            \textbf{AVG Merging} ($\theta_{\text{avg}}$) & 71.6                                    & +20.0     & 66.6  & +19.6         & 48.0   & +10.2         \\
            \textbf{TSVM Merging} ($\theta_{\text{tsvm}}$)   & \underline{74.8}                                    & \underline{+23.2}          & \underline{72.8}   & \underline{+25.8}         & 51.4  & +13.6          \\
            
            \textbf{WUDI Merging} ($\theta_{\text{wudi}}$)   & \textbf{76.7}                                   & \textbf{+25.1}   & \textbf{72.9} & \textbf{+25.9}   & \underline{52.6}& \underline{+14.8} \\
            \bottomrule
        \end{tabular}
    }
\end{table*}

\subsection{Effectiveness of Scaffold-Specialize-Reconcile Training Paradigm}
\begin{table*}[t]
\centering
\caption{\textbf{Training paradigm comparison.}
We compare four paradigms: 
\textbf{Joint} trains on mixed \{Spatial, AD, UAV, Embodied\} data; 
\textbf{Sequential} performs stage-wise adaptation Spatial$\rightarrow$AD$\rightarrow$UAV$\rightarrow$Embodied; 
\textbf{SR} follows Spatial$\rightarrow$Merging$\rightarrow$Embodied. 
}
\label{tab:training_paradigm}
\scriptsize
\setlength{\tabcolsep}{8.3pt}
\renewcommand{\arraystretch}{1.36}
\resizebox{\textwidth}{!}{
\begin{tabular}{l|cc|cc|cc|cc}
\toprule
\multirow{2}{*}{\textbf{Training Paradigm}} 
& \multicolumn{2}{c|}{\textbf{Spatial}} 
& \multicolumn{2}{c|}{\textbf{AD}} 
& \multicolumn{2}{c|}{\textbf{UAV}} 
& \multicolumn{2}{c}{\textbf{Embodied}} \\
& \textbf{Avg.} & $\Delta$ 
& \textbf{Avg.} & $\Delta$ 
& \textbf{Avg.} & $\Delta$ 
& \textbf{Avg.} & $\Delta$ \\
\midrule
\textbf{Spatial} ($\theta_{S}$)
    & 72.5 & -- & -- & -- & --  & -- & -- & --\\
\textbf{Spatial $\rightarrow$ AD Expert} ($\theta_{S}\rightarrow \theta_A$) &  -- & --
& 72.6 & --
& -- & --
& -- & --\\
\textbf{Spatial $\rightarrow$ UAV Expert} ($\theta_{S}\rightarrow \theta_U$) & -- & --
& --& --
& \textbf{54.3} & --
& -- & -- \\
\textbf{Spatial $\rightarrow$ Embodied Expert} ($\theta_{S}\rightarrow \theta_E$) & --& --
& -- & --
& -- & --
& 58.1 & --\\
\textbf{Joint Training} & 68.0 & -4.5 & 65.3 & -7.3 & 45.7 & -8.6 & 56.8 & -1.3 \\
\textbf{Sequential Training} & 67.6 & -4.9 & 70.1 & -2.5 & 50.8 & -3.5 & 59.0 &  +0.9\\
\midrule
\textbf{SSR Training} & 78.5 & +6.0 &72.0 & -0.6 & 50.9 & -3.2& 59.7 & +1.6\\
\textbf{SSR Training w/ GRPO} & \textbf{79.1} & \textbf{+6.6} &\textbf{72.1} & \textbf{-0.5} & \underline{54.1} & \underline{-0.2}& \textbf{60.0} & \textbf{+1.9}\\
\bottomrule
\end{tabular}
}
\end{table*}

Building on the preceding empirical analyses, we introduce the Scaffold-Specialize-Reconcile~(SSR) training paradigm for cross-embodied physical world comprehension.
Under matched total training budgets, we compare SSR against two prevalent alternatives: joint training on mixed data and sequential multi-stage adaptation without merging.

As shown in Table~\ref{tab:training_paradigm}, joint training, directly mixing spatial, AD, and UAV data prior to embodied finetuning, yields limited and inconsistent gains across domains. Sequential training without synthesis improves the final target domain (Embodied) but incurs catastrophic forgetting on previously learned capabilities, corroborating the degradation patterns observed in earlier sections.
In contrast, SSR achieves consistently strong performance across all domains. The paradigm proceeds in four stages: establishing a domain-agnostic spatial scaffold, training domain-specialized experts, reconciling these experts through optimized merging, and finally performing embodied alignment. This structured decomposition effectively integrates heterogeneous physical knowledge while preserving domain-specific strengths. Notably, SSR improves Embodied performance without sacrificing Spatial, AD, or UAV capabilities, achieving superior generalization-specialization trade-offs compared to baselines.

These findings underscore the importance of expert synthesis in embodied learning. While model merging has been extensively studied in language and vision domains, its application to physical intelligence remains underexplored. Our results indicate that synthesis constitutes more than a parameter-level heuristic-it serves as a fundamental mechanism for organizing and reusing heterogeneous physical knowledge. Collectively, the empirical evidence supports the SSR paradigm as a principled training strategy for multi-robot and cross-embodied foundation models.

Building upon the preceding empirical analyses, we propose the Scaffold-Specialize-Reconcile~(SSR) training paradigm for cross embodiments. To verify the effectiveness of the SSR training paradigm, we provide two prevalent strategies, including the joint training strategy on mixed datasets and the sequential training strategy by using multi-stage adaptation without merging.
As shown in Table~\ref{tab:training_paradigm}, the joint training strategy by mixing Spatial, AD, and UAV data before embodied fine-tuning yields only marginal gains. Although the sequential training strategy enhances performance in the final target domain~(\textit{e.g.}, Embodied), it suffers from catastrophic forgetting of previously acquired capabilities, leading to worse performance in earlier domains~(\textit{e.g.}, AD and UAV). In contrast, SSR delivers robust performance across all domains. 
This structured decomposition effectively integrates heterogeneous physical knowledge while preserving domain-specific proficiencies. Notably, SSR bolsters Embodied performance without compromising Spatial, AD, or UAV capabilities, achieving a superior generalization-specialization trade-off compared to existing training paradigms.

\section{Training Datasets}
\label{sec:data}

A central challenge in training a generalist embodied brain lies in harmonizing different embodiment data. These datasets diverge not only in visual appearance but also in their implicit spatial assumptions, temporal organization, and task-level planning demands.
To this end, we construct a training corpus shown in Fig.~\ref{statistic}. It spans general multimodal instruction, spatial intelligence, autonomous driving, and low-altitude aerial datasets. Next, we will briefly introduce them. 

\subsection{General Datasets}
We begin by establishing a shared multimodal foundation that anchors visual perception, language understanding, and instruction following.
Here, we adopt the Cambrain-737K~\cite{cambrian} visual instruction-tuning dataset. It samples built upon the LLaVA-665K~\cite{llava} foundation, augmented with additional OCR-rich and chart-understanding corpora, to enhance document perception capabilities. This composition ensures balanced coverage across general VQA, text-rich image comprehension, and structured reasoning tasks, providing a robust corpus for aligning visual representation with language instruction.
\begin{figure*}
    \centering
    \includegraphics[width=0.99\textwidth]{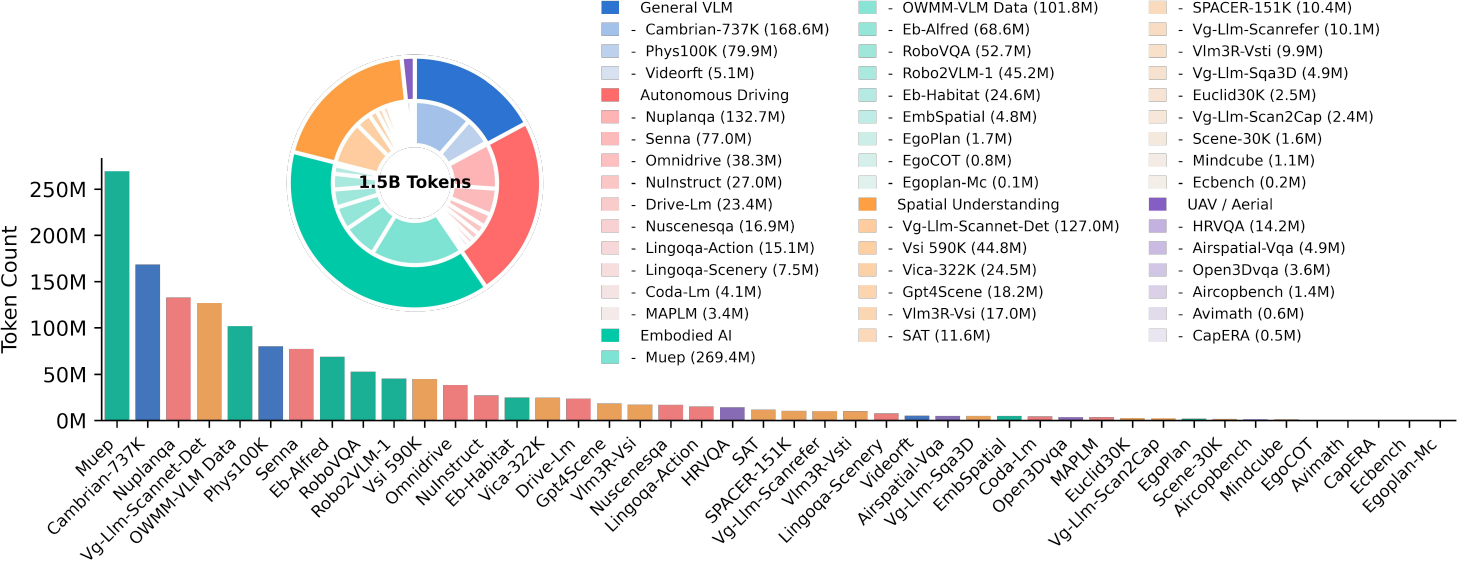}
    \caption{
        \textbf{Domain distribution and Token count.}
        This nested pie chart illustrates the proportion of tokens contributed by each domain.
        The distribution exhibits a long-tailed characteristic, where UAV data constitutes a relatively small proportion of the corpus.
    }
    \label{statistic}
\end{figure*}
\subsection{Spatial Intelligence Datasets}
Agent interaction with the world implicitly relies on consistent spatial-temporal representations.
To support this requirement, we include spatial intelligence datasets that focus on object relations, distance measurement, and spatial layouts in both static and dynamic video inputs.

\textbf{VSI-590K}~\cite{vsi} is a large-scale visual spatial intelligence dataset comprising 590K QA pairs. The dataset spans diverse question types, including size, direction, distance, count, and temporal order across viewpoints and modalities (images and videos).

\textbf{SAT}~\cite{sat} is a generated spatial dataset built on photo-realistic ProcTHOR-10K~\cite{procthor} environments, containing 175K QA pairs across 22K indoor scenes without manual 3D annotation.

\textbf{VICA-322K}~\cite{vica322k} is a large-scale video-based spatial dataset built on real-world indoor videos with high-quality 3D annotations. It combines metadata-supervised spatial tasks (e.g., object count, size, distance, and room scale) that require holistic interpretation of spatial layouts and object relationships over videos.

\textbf{GPT4Scene}~\cite{gpt4scene} is a large-scale indoor 3D scene understanding dataset with 165K annotations paired with processed videos for VLM fine-tuning. It integrates sampled video frames, reconstructed 3D point clouds, BEV images, and spatio-temporal object markers to maintain consistent object identities across views and time. Built via 3D reconstruction and instance segmentation pipelines, it supports vision-only 3D spatial tasks such as 3D QA, dense captioning, and visual grounding.

\textbf{Scene-30K}~\cite{scene_30k} is a synthetic 30K CoT dataset for enhancing 3D VLM reasoning. It covers a wide range of 3D scene understanding tasks, including captioning, QA, grounding, dialogue, reasoning, and planning. Constructed from existing 3D-VL datasets and generated using a Gemini-2.5-Pro-based pipeline, it provides multi-step reasoning annotations filtered for structural correctness and semantic consistency.

\textbf{VLM-3R}~\cite{vlm_3r} consists of two datasets, VSI and VSTI, targeting 3D spatial understanding from static scenes and monocular videos, respectively. VLM-3R-VSI contains over 200K QA pairs. Similar to VSI~\cite{vsi}, it is also built from ScanNet~\cite{scannet}, ScanNet++~\cite{scannet++}, and ARKitScenes~\cite{arkitscenes}.
VLM-3R-VSTI comprises 138.6K video QA pairs for spatio-temporal understanding, spanning camera dynamics, camera-object interactions, and object relative position. It focuses on estimating camera motion, movement direction, and evolving relative distances without relying on explicit 3D reconstructions or depth sensors.

\textbf{EmbSpatial-Bench}~\cite{embspatial} is an instruction-tuning dataset for improving embodied spatial understanding in MLLMs. Constructed from Matterport3D~\cite{matterport3d}, it contains 25K samples covering spatial relation recognition and object localization as an auxiliary grounding task.

\textbf{MindCube~\citep{yin2025spatial}} is a rigorously annotated benchmark with 10k SFT data where all questions require reasoning over unobservable space. Built from multi-view image groups with controlled camera trajectories, it emphasizes cross-view consistency, occlusion handling, and viewpoint-dependent relations, serving as a challenging testbed for evaluating internal spatial mental model construction.

\textbf{SpaceR-151K}~\cite{spacer_151k} is a video spatial reasoning dataset comprising 151K QA pairs, designed to enhance spatial intelligence in MLLMs.

\subsection{Autonomous Driving Datasets}

Autonomous driving datasets provide supervision for perception, prediction, and planning in complex traffic scenarios. These datasets capture diverse driving conditions, weather variations, and interaction dynamics essential for training robust driving intelligence.

\noindent\textbf{MAPLM}~\cite{cao2024maplm} is a large-scale vision-language benchmark specifically engineered for autonomous driving, focusing on map and traffic scene understanding. It moves beyond simple object detection by requiring models to reason about lane-level topology, road geometry, and complex traffic rules. By providing a diverse set of real-world driving scenarios, MAPLM serves as a critical testbed for evaluating the potential of foundation models in enhancing the decision-making and spatial understanding capabilities of autonomous vehicles.

\noindent\textbf{DriveAction}~\cite{hao2025driveaction} is a pioneering dataset designed to explore human-like driving decisions within Vision-Language-Action~(VLA) models. It shifts the focus from pure perception to the closed-loop integration of multimodal understanding and actionable decision-making.

\noindent\textbf{Nuscenes-QA}~\cite{qian2024nuscenes} is a large-scale multimodal VQA dataset specifically designed for autonomous driving scenarios. By leveraging the comprehensive sensor data from the nuScenes dataset, it provides over 460K VQA pairs that require models to perform complex spatial and temporal understanding across 360-degree multi-view camera inputs. 

\noindent\textbf{NuPlanQA}~\cite{park2025nuplanqa} extends the scope of driving scene understanding by providing a large-scale VQA dataset based on the nuPlan dataset. It focuses on multi-view visual understanding and the interpretation of complex driving maneuvers, offering a comprehensive platform to test the limits of MLLMs in understanding real-world, planning-centric autonomous driving scenarios.

\noindent\textbf{LingoQA}~\cite{marcu2024lingoqa} introduces a specialized Visual Question Answering benchmark for autonomous driving that emphasizes the model's ability to explain driving scenes. A key contribution of this work is the introduction of \textit{Lingo-Judge}, a learned evaluation metric that aligns more closely with human judgment than traditional linguistic metrics. LingoQA challenges models to provide temporally consistent and safety-critical explanations for various driving behaviors and environmental conditions.

\subsection{Low-Altitude Datasets}

Low-altitude aerial perception presents unique challenges, including viewpoint variation, scale estimation from altitude, and interpretation of terrain features from oblique perspectives.

\textbf{HRVQA}~\cite{hrvqa} presents a large-scale dataset with over 100k qa pairs. It can be used to evaluate the capabilities of VQA models in performing scene understanding and geospatial understanding for high-resolution aerial images.

\textbf{AirSpatial-VQA}~\cite{airspatialbot} is a pioneering monocular 3D spatial perception dataset centered on vehicles captured by UAV aerial imagery. Based on photogrammetric principles and the ground plane assumption, this dataset establishes a benchmark for evaluating the 3D spatial understanding capabilities of MLLMs in drone-based scenarios. The benchmark tasks encompass vehicle 3D coordinates, 3D dimensions (length, width, height), and depth estimation. It aims to advance the monocular spatial perception of low-altitude intelligent systems, enabling zero-shot vehicle model recognition from aerial views.

\textbf{Open3DVQA}~\cite{zhang2025open3d} is a benchmark for evaluating MLLMs' ability to reason about complex spatial relationships from an aerial perspective. It contains 89k QA pairs across 7 spatial understanding tasks, including multiple-choice, true/false, and short-answer formats, and supports both visual and point cloud data. 

\textbf{AirCopBench}~\cite{zha2025aircopbench} evaluates MLLMs’ ability to answer questions using multi-UAV collaborative visual data under perception-degraded conditions, covering perception, understanding, and decision-making. It comprises over 2.9k multi-view images and 14.6k VQA pairs across four core tasks consisting of scene understanding, object Understanding, perception assessment, and collaborative decision.

\textbf{AVIMath}~\cite{zhou2025multimodal} introduces a multimodal mathematical understanding benchmark based on UAV aerial imagery. This dataset spans six mathematical disciplines-geometry, arithmetic, algebra, statistics, logic, and counting-covering 20 fine-grained topics and comprising 3,773 high-quality problems. The images are sourced from 11 distinct 4K scenes, captured with three pitch angles (45°, 60°, 90°) and three flight altitudes (low, medium, high), thereby closely simulating real-world UAV acquisition conditions.

\textbf{CapERA}~\cite{bashmal2023capera} evaluates whether MLLMs can generate natural-language descriptions for aerial videos captured by UAVs. The dataset comprises 2,864 videos and 14,320 diverse captions, each video paired with five human-aligned textual descriptions generated through manual annotation and automatic augmentation. CapERA emphasizes comprehensive scene understanding, including events, objects, actions, locations, and temporal dynamics.
\subsection{Embodied \& Egocentric Datasets}

To support embodied interaction and egocentric perception, we incorporate datasets that capture first-person visual experiences and action-oriented understanding.

\noindent\textbf{MuEP~\cite{muep}} is a comprehensive multimodal dataset specifically tailored for embodied planning with foundation models. Covering 108 varied household scenes and nearly 15,000 expert demonstration episodes, it facilitates the evaluation of multimodal and multi-turn interactions. By incorporating fine-grained metrics, MuEP assesses the agent's performance throughout task execution, effectively bridging the gap between high-level reasoning and low-level control in complex environments.

\noindent\textbf{OWMM-VLM Data~\cite{owmm}} is a synthesized dataset designed for open-world mobile manipulation. Generated using the Habitat simulator and PDDL-based task sequences, it provides comprehensive supervision for global scene understanding, robot state tracking, and multimodal action generation. This dataset provides supervision for reasoning over multi-view observations and generating action affordances grounded in both the robot’s state and the environment geometry.

\noindent\textbf{Eb-Alfred~\cite{ebalfred}} and \noindent\textbf{Eb-Habitat~\cite{eb_habitat}} serve as the core simulation datasets for long-horizon household task planning. To significantly enhance the model's ability to comprehend complex instructions, decompose tasks, and execute actions based on environmental feedback, we adopted a rigorous re-annotation pipeline. Specifically, we initialized planning tasks following standard annotations (e.g., LLaRP~\cite{eb_habitat}) to specify task goals and permissible actions, and then deployed a GPT-4o~\cite{chatgpt4o} agent to roll out the tasks in the simulator. We recorded the task instructions, action sequences, and real-time environmental observations, retaining only the trajectories that successfully accomplished the task. This LLM-driven generation provides the model with high-quality supervision containing detailed reasoning traces and successful execution paths, improving decision-making robustness in complex embodied scenarios.

\noindent\textbf{RoboVQA~\cite{robovqa}} is a large-scale multimodal dataset designed for long-horizon robotic reasoning. It comprises video-text pairs covering diverse embodiments, including humans and robots. The dataset focuses on temporal grounding tasks, such as describing past events and reasoning about future affordances, thereby enhancing the model's ability to understand the temporal dynamics of physical interactions and answer open-ended queries about the robot's environment.

\noindent\textbf{Robo2VLM~\cite{robo2vlm}} is a large-scale visual question answering dataset generated from in-the-wild robot manipulation trajectories. Leveraging data from various robotic platforms, it utilizes proprioceptive and kinematic states to automatically generate ground-truth QA pairs focused on spatial relations, object states, and interaction reasoning. This dataset allows the model to learn from diverse, scalable real-robot data without expensive manual annotation, bridging the gap between visual perception and physical robot states.

\noindent\textbf{EgoPlan~\cite{egoplanbench} }is designed for egocentric embodied planning derived from real-world videos involving human-object interactions. It focuses on predicting the next feasible action based on the task progress, current visual observation, and language instruction. We also include \textbf{EgoPlan-Mc}, a multiple-choice variant, to further refine the model's decision-making capabilities by aligning high-level task planning with intricate, real-world situations captured from a first-person perspective.

\noindent\textbf{EgoCOT~\cite{egocot}} serves as a large-scale embodied planning dataset featuring chain-of-thought (CoT) supervision. Constructed from carefully selected egocentric videos, it includes high-quality, step-by-step language instructions that are machine-generated and human-verified. This dataset is crucial for training the model to decompose complex tasks into logical intermediate reasoning steps, effectively enabling the model to solve manageable sub-tasks step by step before executing the final action.

\section{Evaluation Details}
\label{sec:bench}

We evaluate \OurMethod{} on 24 benchmarks, using both the off-the-shelf LMMs-Eval~\cite{lmms_eval} framework and the official evaluation code provided by each benchmark.

\subsection{Evaluation with LMMs-Eval}
We first introduce the evaluation with LMMs-Eval. A subset of our adopted benchmarks is already supported by LMMs-Eval, and we follow their default evaluation settings. For these built-in benchmarks, we set the image resolution constraints to $max pixels$ = $1024\times28\times28$ and $min pixels$ = $256\times28\times28$.

For benchmarks that are not natively available in LMMs-Eval, we manually integrate them into the same evaluation pipeline for consistent reporting. For these manually integrated benchmarks, we resize every evaluation image to a fixed resolution of 50,176 pixels, while keeping the number of evaluation images consistent with each benchmark’s original protocol. These benchmarks already supported in LMMs-Eval include VSI~\cite{vsi}, MindCube-Tiny~\cite{yin2025spatial}, Blink~\cite{blink}, SITE~\cite{site}, MME-RealWorld~\cite{zhang2024mme}, ERQA~\cite{gemini_robotics}, and EmbSpatial-Bench~\cite{embspatial}.

\textbf{NuScenes-QA}~\cite{nuscenesqa} contains 83335 VQA, and each question uses six camera images with text. The model should output a short answer (a single word or an Arabic numeral). It evaluates multi-view perception and attribute/object/state recognition in road-traffic scenes. The evaluation metric is exact-match accuracy.

\textbf{NuPlanQA}~\cite{park2025nuplanqa} contains 1,801 multiple-choice (A-E) questions. It evaluates decision-making and traffic-element understanding in autonomous driving. The metric is accuracy.

\textbf{HRVQA}~\cite{hrvqa} contains 13,524 questions. Each sample uses one UAV-view image, and the model generates a free-form answer. It evaluates understanding of high-resolution visual details. The metric is exact-match and accuracy.

\subsection{Evaluation with Official Code}
Next, we delineate the benchmarks evaluated with their official code.
\textbf{SAT}~\cite{sat} contains 150 multiple-choice questions. Each question uses 1-2 images with text, and the model outputs an option letter. The metric is exact-match accuracy.

\textbf{Multi3DRef}~\cite{multi3drefer} contains 11,120 samples. Each sample provides an indoor 3D scene with a set of objects and a natural-language referring expression; the model outputs the matching object instance IDs (one or multiple, e.g., \texttt{<OBJxxx>}). It evaluates multi-target referring understanding and instance-level retrieval in 3D scenes. Metric: F1@0.25, computed by matching predicted vs. ground-truth instance sets using 3D box IoU $\ge 0.25$ as the threshold, then averaging F1 over all samples.

\textbf{MAPLM}~\cite{maplm} contains 6,000 multiple-choice (A-F) questions. Each question provides a set of map/traffic-related observation images, and the model outputs a single option letter. It evaluates multiple-choice reasoning and decision-making in map and driving contexts. Metrics: Accuracy, reported as the average score of question-level accuracy (QNS) and frame-level accuracy (FRM), where a frame is considered correct only if all questions from the same scene/frame are answered correctly.

\textbf{DriveAction}~\cite{driveaction} contains 16,185 questions. Each question uses three images from the same driving scene, and the model outputs either True/False or an option letter (depending on the task). It evaluates action/behavior decision understanding in autonomous driving from multi-frame observations. The evaluation metric adopts exact-match accuracy.

\textbf{LingoQA}~\cite{lingoqa} contains 500 questions. Inputs are consecutive frames from a driving video segment, and the model generates an open-ended answer. It evaluates language understanding in driving scenes and its fusion with multi-frame visual evidence. Evaluation uses the LingoJudge discriminator to judge correctness against the reference, and reports the fraction judged correct as accuracy.

\textbf{UrbanVideo-Bench}~\cite{zhao2025urbanvideo} contains 5,355 five-choice (A-E) questions on urban egocentric videos. Each video is uniformly downsampled to up to 32 frames, and the model outputs an option letter (optionally with a brief rationale). It evaluates temporal understanding and embodied navigation reasoning in complex city scenes, covering action prediction, landmark localization, progress estimation, trajectory description, target detection, high-level planning, cognitive mapping, and counterfactual reasoning. The evaluation metric adopts accuracy.

\textbf{AirCopBench}~\cite{aircopbench} consists of 1,031 four-choice (A-D) questions across four subsets (Real2, Sim3, Sim5, Sim6). Inputs are multi-UAV observations, and the model outputs an option letter. It evaluates multi-view scene and target understanding, perception quality assessment, and cooperative decision-making. The evaluation metric adopts accuracy.

\textbf{AVI-Math}~\cite{zhou2025multimodal} follows the official two-stage protocol: (i) free-form generation that prioritizes reasoning, and (ii) extraction into a prescribed format to reduce errors from formatting. Answers are typed via the \texttt{eva} field and normalized with type-specific rules (e.g., lowercasing strings; stripping units; rounding integers; truncating floats to one decimal).

\textbf{AirSpatial-VQA}~\cite{airspatialbot} follows the official protocol and uses Mean Absolute Error (MAE) for five numerical spatial questions (Depth, Distance, Length, Width, Height). Lower MAE means closer estimates to the label.

\textbf{RoboVQA}~\cite{robovqa} includes 1,335 trajectories, split into 1,921 open-ended questions. For each trajectory, 32 frames are uniformly sampled as input, and the model generates free-form answers. It evaluates embodied visual understanding and task reasoning, including next-step prediction/planning, affordance judgments, and state/event description. The evaluation metric adopts the average score of BLEU-1/2/3/4.

\textbf{OpenEQA}~\cite{majumdar2024openeqa} includes 1,636 open-ended questions with 32 uniformly sampled frames per question. It evaluates indoor scene understanding under multi-frame aggregation (object recognition, attribute/state recognition, and spatial relations). Scoring uses a GPT-based 1-5 match rating averaged over all questions, also mapped to a 0-100 scale.

\textbf{EgoPlan-Bench2}~\cite{egoplanbench} contains 1,321 questions. We uniformly sample 16 frames from the task start to the current observation as input. It evaluates temporal task understanding for embodied interaction, next-step action planning/decision-making, and goal-directed reasoning. The evaluation metric adopts accuracy.

\textbf{EmbodiedBench(EB)-Habitat}~\cite{embodiedbench} is a simulation benchmark built on the Habitat~\cite{Habitat2} simulator, including various vision-language conditioned decision-making tasks with 282 diverse language instructions. In these tasks, the agent needs to understand the language-described goals and use commonsense and spatial reasoning to plan with the provided skills, such as navigation, pick, and open. The evaluation metric adopts the final success rate of the task.

\section{Conclusions and Perspectives}
\label{sec:con}
In this report, we introduce \OurMethod{}, a generalist foundation model that unifies spatial cognition, autonomous driving, low-altitude sensing, and embodied interaction through the Scaffold-Specialize-Reconcile (SSR) paradigm. The SSR paradigm provides a blueprint for cross-embodiment learning: construct a shared spatial scaffold, cultivate domain experts in isolation, and reconcile them via parameter merging. This enables incremental capability expansion with less gradient interference and catastrophic forgetting. Evaluated on 24 benchmarks spanning four physical domains, \OurMethod{} achieves competitive or even state-of-the-art performance, significantly outperforming both general-purpose VLMs and domain-specific embodied brains.

Looking forward, we will advance along three axes: (1) Spatially-grounded visuomotor policies, extending \OurMethod{} to vision-language-action models for closed-loop control across robot embodiments; (2) Physics-aware continuous prediction, iterating beyond discrete scene understanding toward fine-grained physical world modeling; and (3) Cross-Embodiment Continual Learning, advancing the SSR paradigm toward lifelong, interference-free capability accumulation, enabling seamless integration of novel embodiments (\textit{e.g.}, legged locomotion, underwater vehicles).

\clearpage
\bibliographystyle{unsrt}
\bibliography{references}

@article{weiUnifyingMultimodalLarge2025,
  title   = {Unifying Multimodal Large Language Model Capabilities and Modalities via Model Merging},
  author  = {Wei, Yongxian and Cheng, Runxi and Jin, Weike and Yang, Enneng and Shen, Li and Hou, Lu and Du, Sinan and Yuan, Chun and Cao, Xiaochun and Tao, Dacheng},
  journal = {arXiv preprint arXiv:2505.19892},
  year    = {2025}
}

@inproceedings{chengWhoeverStartedInterference2025,
  title      = {Whoever {{Started}} the Interference {{Should End It}}: {{Guiding Data-Free Model Merging}} via {{Task Vectors}}},
  shorttitle = {Whoever {{Started}} the Interference {{Should End It}}},
  booktitle  = {Proceedings of the 42nd {{International Conference}} on {{Machine Learning}}},
  author     = {Cheng, Runxi and Xiong, Feng and Wei, Yongxian and Zhu, Wanyun and Yuan, Chun},
  date       = {2025-10-06},
  pages      = {10121--10143},
  publisher  = {PMLR},
  issn       = {2640-3498},
  url        = {https://proceedings.mlr.press/v267/cheng25h.html},
  eventtitle = {International {{Conference}} on {{Machine Learning}}},
  langid     = {english}
}

@inproceedings{yin2025spatial,
  title     = {Spatial mental modeling from limited views},
  author    = {Yin, Baiqiao and Wang, Qineng and Zhang, Pingyue and Zhang, Jianshu and Wang, Kangrui and Wang, Zihan and Zhang, Jieyu and Chandrasegaran, Keshigeyan and Liu, Han and Krishna, Ranjay and others},
  booktitle = {Structural Priors for Vision Workshop at ICCV'25},
  year      = {2025}
}

@article{tangFusionBenchUnifiedLibrary2025,
  title   = {FusionBench: A Unified Library and Comprehensive Benchmark for Deep Model Fusion},
  author  = {Anke Tang and Li Shen and Yong Luo and Enneng Yang and Han Hu and Lefei Zhang and Bo Du and Dacheng Tao},
  journal = {arXiv preprint arXiv:2406.03280},
  year    = {2025}
}

@inproceedings{gargiuloTaskSingularVectors2025,
  title     = {Task singular vectors: Reducing task interference in model merging},
  author    = {Gargiulo, Antonio Andrea and Crisostomi, Donato and Bucarelli, Maria Sofia and Scardapane, Simone and Silvestri, Fabrizio and Rodola, Emanuele},
  booktitle = {Proceedings of the Computer Vision and Pattern Recognition Conference},
  pages     = {18695--18705},
  year      = {2025}
}

@inproceedings{wortsmanModelSoupsAveraging2022a,
  title        = {Model soups: averaging weights of multiple fine-tuned models improves accuracy without increasing inference time},
  author       = {Wortsman, Mitchell and Ilharco, Gabriel and Gadre, Samir Ya and Roelofs, Rebecca and Gontijo-Lopes, Raphael and Morcos, Ari S and Namkoong, Hongseok and Farhadi, Ali and Carmon, Yair and Kornblith, Simon and others},
  booktitle    = {International conference on machine learning},
  pages        = {23965--23998},
  year         = {2022},
  organization = {PMLR}
}

@inproceedings{cheginiModelSoupBetter2024,
  title     = {Model soup for better rlhf: Weight space averaging to improve alignment in llms},
  author    = {Chegini, Atoosa and Kazemi, Hamid and Mirzadeh, Seyed Iman and Yin, Dong and Horton, Maxwell and Nabi, Moin and Farajtabar, Mehrdad and Alizadeh, Keivan},
  booktitle = {NeurIPS 2024 Workshop on Fine-Tuning in Modern Machine Learning: Principles and Scalability},
  year      = {2024}
}

@inproceedings{majumdar2024openeqa,
  title     = {Openeqa: Embodied question answering in the era of foundation models},
  author    = {Majumdar, Arjun and Ajay, Anurag and Zhang, Xiaohan and Putta, Pranav and Yenamandra, Sriram and Henaff, Mikael and Silwal, Sneha and Mcvay, Paul and Maksymets, Oleksandr and Arnaud, Sergio and others},
  booktitle = {Proceedings of the IEEE/CVF conference on computer vision and pattern recognition},
  pages     = {16488--16498},
  year      = {2024}
}

@article{zhang2024mme,
  title   = {Mme-realworld: Could your multimodal llm challenge high-resolution real-world scenarios that are difficult for humans?},
  author  = {Zhang, Yi-Fan and Zhang, Huanyu and Tian, Haochen and Fu, Chaoyou and Zhang, Shuangqing and Wu, Junfei and Li, Feng and Wang, Kun and Wen, Qingsong and Zhang, Zhang and others},
  journal = {arXiv preprint arXiv:2408.13257},
  year    = {2024}
}

@inproceedings{cao2024maplm,
  title     = {Maplm: A real-world large-scale vision-language benchmark for map and traffic scene understanding},
  author    = {Cao, Xu and Zhou, Tong and Ma, Yunsheng and Ye, Wenqian and Cui, Can and Tang, Kun and Cao, Zhipeng and Liang, Kaizhao and Wang, Ziran and Rehg, James M and others},
  booktitle = {Proceedings of the IEEE/CVF conference on computer vision and pattern recognition},
  pages     = {21819--21830},
  year      = {2024}
}

@inproceedings{yangadamerging,
  title={AdaMerging: Adaptive Model Merging for Multi-Task Learning},
  author={Yang, Enneng and Wang, Zhenyi and Shen, Li and Liu, Shiwei and Guo, Guibing and Wang, Xingwei and Tao, Dacheng},
  booktitle={The Twelfth International Conference on Learning Representations},
  year={2024}
}

@article{hao2025driveaction,
  title   = {Driveaction: A benchmark for exploring human-like driving decisions in vla models},
  author  = {Hao, Yuhan and Li, Zhengning and Sun, Lei and Wang, Weilong and Yi, Naixin and Song, Sheng and Qin, Caihong and Zhou, Mofan and Zhan, Yifei and Lang, Xianpeng},
  journal = {arXiv preprint arXiv:2506.05667},
  year    = {2025}
}

@inproceedings{qian2024nuscenes,
  title     = {Nuscenes-qa: A multi-modal visual question answering benchmark for autonomous driving scenario},
  author    = {Qian, Tianwen and Chen, Jingjing and Zhuo, Linhai and Jiao, Yang and Jiang, Yu-Gang},
  booktitle = {Proceedings of the AAAI Conference on Artificial Intelligence},
  volume    = {38},
  number    = {5},
  pages     = {4542--4550},
  year      = {2024}
}

@article{park2025nuplanqa,
  title   = {Nuplanqa: A large-scale dataset and benchmark for multi-view driving scene understanding in multi-modal large language models},
  author  = {Park, Sung-Yeon and Cui, Can and Ma, Yunsheng and Moradipari, Ahmadreza and Gupta, Rohit and Han, Kyungtae and Wang, Ziran},
  journal = {arXiv preprint arXiv:2503.12772},
  year    = {2025}
}

@inproceedings{marcu2024lingoqa,
  title        = {Lingoqa: Visual question answering for autonomous driving},
  author       = {Marcu, Ana-Maria and Chen, Long and H{\"u}nermann, Jan and Karnsund, Alice and Hanotte, Benoit and Chidananda, Prajwal and Nair, Saurabh and Badrinarayanan, Vijay and Kendall, Alex and Shotton, Jamie and others},
  booktitle    = {European Conference on Computer Vision},
  pages        = {252--269},
  year         = {2024},
  organization = {Springer}
}

@misc{shao2024deepseekmath,
  title         = {DeepSeekMath: Pushing the Limits of Mathematical Reasoning in Open Language Models},
  author        = {Zhihong Shao and Peiyi Wang and Qihao Zhu and Runxin Xu and Junxiao Song and Xiao Bi and Haowei Zhang and Mingchuan Zhang and Y. K. Li and Y. Wu and Daya Guo},
  year          = {2024},
  eprint        = {2402.03300},
  archiveprefix = {arXiv},
  primaryclass  = {cs.CL},
  url           = {https://arxiv.org/abs/2402.03300}
}

@article{bai2025qwen3vltechnicalreport,
  title   = {Qwen3-VL Technical Report},
  author  = {Shuai Bai and Yuxuan Cai and Ruizhe Chen and Keqin Chen and Xionghui Chen and Zesen Cheng and Lianghao Deng and Wei Ding and Chang Gao and Chunjiang Ge and Wenbin Ge and Zhifang Guo and Qidong Huang and Jie Huang and Fei Huang and Binyuan Hui and Shutong Jiang and Zhaohai Li and Mingsheng Li and Mei Li and Kaixin Li and Zicheng Lin and Junyang Lin and Xuejing Liu and Jiawei Liu and Chenglong Liu and Yang Liu and Dayiheng Liu and Shixuan Liu and Dunjie Lu and Ruilin Luo and Chenxu Lv and Rui Men and Lingchen Meng and Xuancheng Ren and Xingzhang Ren and Sibo Song and Yuchong Sun and Jun Tang and Jianhong Tu and Jianqiang Wan and Peng Wang and Pengfei Wang and Qiuyue Wang and Yuxuan Wang and Tianbao Xie and Yiheng Xu and Haiyang Xu and Jin Xu and Zhibo Yang and Mingkun Yang and Jianxin Yang and An Yang and Bowen Yu and Fei Zhang and Hang Zhang and Xi Zhang and Bo Zheng and Humen Zhong and Jingren Zhou and Fan Zhou and Jing Zhou and Yuanzhi Zhu and Ke Zhu},
  year    = {2025},
  journal = {arXiv preprint arXiv:2511.21631}
}

@article{vebrain,
  title   = {Visual embodied brain: Let multimodal large language models see, think, and control in spaces},
  author  = {Luo, Gen and Yang, Ganlin and Gong, Ziyang and Chen, Guanzhou and Duan, Haonan and Cui, Erfei and Tong, Ronglei and Hou, Zhi and Zhang, Tianyi and Chen, Zhe and others},
  journal = {arXiv preprint arXiv:2506.00123},
  year    = {2025}
}

@article{vlaser,
  title   = {Vlaser: Vision-Language-Action Model with Synergistic Embodied Reasoning},
  author  = {Yang, Ganlin and Zhang, Tianyi and Hao, Haoran and Wang, Weiyun and Liu, Yibin and Wang, Dehui and Chen, Guanzhou and Cai, Zijian and Chen, Junting and Su, Weijie and others},
  journal = {arXiv preprint arXiv:2510.11027},
  year    = {2025}
}

@inproceedings{robobrain,
  title     = {Robobrain: A unified brain model for robotic manipulation from abstract to concrete},
  author    = {Ji, Yuheng and Tan, Huajie and Shi, Jiayu and Hao, Xiaoshuai and Zhang, Yuan and Zhang, Hengyuan and Wang, Pengwei and Zhao, Mengdi and Mu, Yao and An, Pengju and others},
  booktitle = {Proceedings of the Computer Vision and Pattern Recognition Conference},
  pages     = {1724--1734},
  year      = {2025}
}

@article{robobrain2,
  title   = {Robobrain 2.0 technical report},
  author  = {Team, BAAI RoboBrain and Cao, Mingyu and Tan, Huajie and Ji, Yuheng and Chen, Xiansheng and Lin, Minglan and Li, Zhiyu and Cao, Zhou and Wang, Pengwei and Zhou, Enshen and others},
  journal = {arXiv preprint arXiv:2507.02029},
  year    = {2025}
}

@article{robobrain2.5,
  title   = {RoboBrain 2.5: Depth in Sight, Time in Mind},
  author  = {Tan, Huajie and Zhou, Enshen and Li, Zhiyu and Xu, Yijie and Ji, Yuheng and Chen, Xiansheng and Chi, Cheng and Wang, Pengwei and Jia, Huizhu and Ao, Yulong and others},
  journal = {arXiv preprint arXiv:2601.14352},
  year    = {2026}
}

@article{pelican-vl,
  title   = {Pelican-VL 1.0: A Foundation Brain Model for Embodied Intelligence},
  author  = {Zhang, Yi and Liu, Che and Ren, Xiancong and Ni, Hanchu and Zhang, Shuai and Ding, Zeyuan and Hu, Jiayu and Shan, Hanzhe and Niu, Zhenwei and Liu, Zhaoyang and others},
  journal = {arXiv preprint arXiv:2511.00108},
  year    = {2025}
}

@article{mimo-embodied,
  title   = {MiMo-Embodied: X-Embodied Foundation Model Technical Report},
  author  = {Hao, Xiaoshuai and Zhou, Lei and Huang, Zhijian and Hou, Zhiwen and Tang, Yingbo and Zhang, Lingfeng and Li, Guang and Lu, Zheng and Ren, Shuhuai and Meng, Xianhui and others},
  journal = {arXiv preprint arXiv:2511.16518},
  year    = {2025}
}

@article{cambrian-s,
  title   = {Cambrian-s: Towards spatial supersensing in video},
  author  = {Yang, Shusheng and Yang, Jihan and Huang, Pinzhi and Brown, Ellis and Yang, Zihao and Yu, Yue and Tong, Shengbang and Zheng, Zihan and Xu, Yifan and Wang, Muhan and others},
  journal = {arXiv preprint arXiv:2511.04670},
  year    = {2025}
}

@article{cambrian,
  title   = {Cambrian-1: A fully open, vision-centric exploration of multimodal llms},
  author  = {Tong, Peter and Brown, Ellis and Wu, Penghao and Woo, Sanghyun and IYER, Adithya Jairam Vedagiri and Akula, Sai Charitha and Yang, Shusheng and Yang, Jihan and Middepogu, Manoj and Wang, Ziteng and others},
  journal = {Advances in Neural Information Processing Systems},
  volume  = {37},
  pages   = {87310--87356},
  year    = {2024}
}

@article{uav-vl-r1,
  title   = {Uav-vl-r1: Generalizing vision-language models via supervised fine-tuning and multi-stage grpo for uav visual reasoning},
  author  = {Guan, Jiajin and Mei, Haibo and Zhang, Bonan and Liu, Dan and Fu, Yuanshuang and Zhang, Yue},
  journal = {arXiv preprint arXiv:2508.11196},
  year    = {2025}
}

@article{airspatialbot,
  title     = {AirSpatialBot: A Spatially-Aware Aerial Agent for Fine-Grained Vehicle Attribute Recognization and Retrieval},
  author    = {Zhou, Yue and Ding, Ran and Yang, Xue and Jiang, Xue and Liu, Xingzhao},
  journal   = {IEEE Transactions on Geoscience and Remote Sensing},
  year      = {2025},
  publisher = {IEEE}
}

@article{hrvqa,
  title     = {HRVQA: A Visual Question Answering benchmark for high-resolution aerial images},
  author    = {Li, Kun and Vosselman, George and Yang, Michael Ying},
  journal   = {ISPRS Journal of Photogrammetry and Remote Sensing},
  volume    = {214},
  pages     = {65--81},
  year      = {2024},
  publisher = {Elsevier}
}

@inproceedings{zhang2025open3d,
  title     = {Open3D-VQA: A Benchmark for Embodied Spatial Concept Reasoning with Multimodal Large Language Model in Open Space},
  author    = {Zhang, Weichen and Zhou, Zile and Zeng, Xin and Xuchen, Liu and Fang, Jianjie and Gao, Chen and Cui, Jinqiang and Li, Yong and Chen, Xinlei and Zhang, Xiao-Ping},
  booktitle = {Proceedings of the 33rd ACM International Conference on Multimedia},
  pages     = {12784--12791},
  year      = {2025}
}

@article{zha2025aircopbench,
  title   = {AirCopBench: A Benchmark for Multi-drone Collaborative Embodied Perception and Reasoning},
  author  = {Zha, Jirong and Fan, Yuxuan and Zhang, Tianyu and Chen, Geng and Chen, Yingfeng and Gao, Chen and Chen, Xinlei},
  journal = {arXiv preprint arXiv:2511.11025},
  year    = {2025}
}

@article{zhao2025urbanvideo,
  title   = {Urbanvideo-bench: Benchmarking vision-language models on embodied intelligence with video data in urban spaces},
  author  = {Zhao, Baining and Fang, Jianjie and Dai, Zichao and Wang, Ziyou and Zha, Jirong and Zhang, Weichen and Gao, Chen and Wang, Yue and Cui, Jinqiang and Chen, Xinlei and others},
  journal = {arXiv preprint arXiv:2503.06157},
  year    = {2025}
}

@article{zhou2025multimodal,
  title     = {Multimodal mathematical reasoning embedded in aerial vehicle imagery: Benchmarking, analysis, and exploration},
  author    = {Zhou, Yue and Feng, Litong and Lan, Mengcheng and Yang, Xue and Li, Qingyun and Ke, Yiping and Jiang, Xue and Zhang, Wayne},
  journal   = {ISPRS Journal of Photogrammetry and Remote Sensing},
  volume    = {230},
  pages     = {289--303},
  year      = {2025},
  publisher = {Elsevier}
}

@article{bashmal2023capera,
  title     = {Capera: Captioning events in aerial videos},
  author    = {Bashmal, Laila and Bazi, Yakoub and Al Rahhal, Mohamad Mahmoud and Zuair, Mansour and Melgani, Farid},
  journal   = {Remote Sensing},
  volume    = {15},
  number    = {8},
  pages     = {2139},
  year      = {2023},
  publisher = {MDPI}
}

@inproceedings{cao2021visdrone,
  title     = {VisDrone-DET2021: The vision meets drone object detection challenge results},
  author    = {Cao, Yaru and He, Zhijian and Wang, Lujia and Wang, Wenguan and Yuan, Yixuan and Zhang, Dingwen and Zhang, Jinglin and Zhu, Pengfei and Van Gool, Luc and Han, Junwei and others},
  booktitle = {Proceedings of the IEEE/CVF International conference on computer vision},
  pages     = {2847--2854},
  year      = {2021}
}

@article{zhou2021egoplanner,
  author  = {Zhou, Xin and Wang, Zhepei and Ye, Hongkai and Xu, Chao and Gao, Fei},
  journal = {IEEE Robotics and Automation Letters},
  title   = {EGO-Planner: An ESDF-Free Gradient-Based Local Planner for Quadrotors},
  year    = {2021},
  volume  = {6},
  number  = {2},
  pages   = {478-485}
}

@article{cao2025proximal,
  title     = {Proximal cooperative aerial manipulation with vertically stacked drones},
  author    = {Cao, Huazi and Shen, Jiahao and Zhang, Yin and Fu, Zheng and Liu, Cunjia and Sun, Sihao and Zhao, Shiyu},
  journal   = {Nature},
  volume    = {646},
  number    = {8085},
  pages     = {576--583},
  year      = {2025},
  publisher = {Nature Publishing Group UK London}
}

@inproceedings{robovqa,
  title        = {Robovqa: Multimodal long-horizon reasoning for robotics},
  author       = {Sermanet, Pierre and Ding, Tianli and Zhao, Jeffrey and Xia, Fei and Dwibedi, Debidatta and Gopalakrishnan, Keerthana and Chan, Christine and Dulac-Arnold, Gabriel and Maddineni, Sharath and Joshi, Nikhil J and others},
  booktitle    = {2024 IEEE International Conference on Robotics and Automation (ICRA)},
  pages        = {645--652},
  year         = {2024},
  organization = {IEEE}
}

@article{robo2vlm,
  title   = {Robo2vlm: Visual question answering from large-scale in-the-wild robot manipulation datasets},
  author  = {Chen, Kaiyuan and Xie, Shuangyu and Ma, Zehan and Sanketi, Pannag R and Goldberg, Ken},
  journal = {arXiv preprint arXiv:2505.15517},
  year    = {2025}
}

@inproceedings{muep,
  title     = {MuEP: A Multimodal Benchmark for Embodied Planning with Foundation Models},
  author    = {Li, Kanxue and Yu, Baosheng and Zheng, Qi and Zhan, Yibing and Zhang, Yuhui and Zhang, Tianle and Yang, Yijun and Chen, Yue and Sun, Lei and Cao, Qiong and Shen, Li and Li, Lusong and Tao, Dapeng and He, Xiaodong},
  booktitle = {Proceedings of the Thirty-Third International Joint Conference on
               Artificial Intelligence, {IJCAI-24}},
  publisher = {International Joint Conferences on Artificial Intelligence Organization},
  editor    = {Kate Larson},
  pages     = {129--138},
  year      = {2024},
  month     = {8},
  note      = {Main Track},
  doi       = {10.24963/ijcai.2024/15},
  url       = {https://doi.org/10.24963/ijcai.2024/15}
}

@article{owmm,
  title   = {OWMM-Agent: Open World Mobile Manipulation With Multi-modal Agentic Data Synthesis},
  author  = {Chen, Junting and Liang, Haotian and Du, Lingxiao and Wang, Weiyun and Hu, Mengkang and Mu, Yao and Wang, Wenhai and Dai, Jifeng and Luo, Ping and Shao, Wenqi and others},
  journal = {arXiv preprint arXiv:2506.04217},
  year    = {2025}
}

@misc{ebalfred,
  title         = {World-aware Planning Narratives Enhance Large Vision-Language Model Planner},
  author        = {Junhao Shi and Zhaoye Fei and Siyin Wang and Qipeng Guo and Jingjing Gong and Xipeng Qiu},
  year          = {2025},
  eprint        = {2506.21230},
  archiveprefix = {arXiv},
  primaryclass  = {cs.AI},
  url           = {https://arxiv.org/abs/2506.21230}
}

@inproceedings{eb_habitat,
  title     = {Large Language Models as Generalizable Policies for Embodied Tasks},
  author    = {Andrew Szot and Max Schwarzer and Harsh Agrawal and Bogdan Mazoure and Rin Metcalf and Walter Talbott and Natalie Mackraz and R Devon Hjelm and Alexander T Toshev},
  booktitle = {The Twelfth International Conference on Learning Representations},
  year      = {2024},
  url       = {https://openreview.net/forum?id=u6imHU4Ebu}
}

@article{egoplanbench,
  title   = {Egoplan-bench: Benchmarking egocentric embodied planning with multimodal large language models},
  author  = {Chen, Yi and Ge, Yuying and Ge, Yixiao and Ding, Mingyu and Li, Bohao and Wang, Rui and Xu, Ruifeng and Shan, Ying and Liu, Xihui},
  journal = {CoRR},
  year    = {2023}
}

@inproceedings{egocot,
  title     = {Embodied{GPT}: Vision-Language Pre-Training via Embodied Chain of Thought},
  author    = {Yao Mu and Qinglong Zhang and Mengkang Hu and Wenhai Wang and Mingyu Ding and Jun Jin and Bin Wang and Jifeng Dai and Yu Qiao and Ping Luo},
  booktitle = {Thirty-seventh Conference on Neural Information Processing Systems},
  year      = {2023},
  url       = {https://openreview.net/forum?id=IL5zJqfxAa}
}

@inproceedings{robotron-drive,
  title     = {RoboTron-Drive: All-in-One Large Multimodal Model for Autonomous Driving},
  author    = {Huang, Zhijian and Feng, Chengjian and Yan, Feng and Xiao, Baihui and Jie, Zequn and Zhong, Yujie and Liang, Xiaodan and Ma, Lin},
  booktitle = {Proceedings of the IEEE/CVF International Conference on Computer Vision},
  pages     = {8011--8021},
  year      = {2025}
}

@inproceedings{robotron-manip,
  title     = {RoboTron-Mani: All-in-One Multimodal Large Model for Robotic Manipulation},
  author    = {Yan, Feng and Liu, Fanfan and Huang, Yiyang and Guan, Zechao and Zheng, Liming and Zhong, Yufeng and Feng, Chengjian and Ma, Lin},
  booktitle = {Proceedings of the IEEE/CVF International Conference on Computer Vision},
  pages     = {13707--13718},
  year      = {2025}
}

@inproceedings{drivelm,
  title        = {Drivelm: Driving with graph visual question answering},
  author       = {Sima, Chonghao and Renz, Katrin and Chitta, Kashyap and Chen, Li and Zhang, Hanxue and Xie, Chengen and Bei{\ss}wenger, Jens and Luo, Ping and Geiger, Andreas and Li, Hongyang},
  booktitle    = {European conference on computer vision},
  pages        = {256--274},
  year         = {2024},
  organization = {Springer}
}

@article{drivemm,
  title   = {Drivemm: All-in-one large multimodal model for autonomous driving},
  author  = {Huang, Zhijian and Feng, Chengjian and Yan, Feng and Xiao, Baihui and Jie, Zequn and Zhong, Yujie and Liang, Xiaodan and Ma, Lin},
  journal = {arXiv preprint arXiv:2412.07689},
  year    = {2024}
}

@article{gemini_robotics,
  title   = {Gemini robotics: Bringing ai into the physical world},
  author  = {Team, Gemini Robotics and Abeyruwan, Saminda and Ainslie, Joshua and Alayrac, Jean-Baptiste and Arenas, Montserrat Gonzalez and Armstrong, Travis and Balakrishna, Ashwin and Baruch, Robert and Bauza, Maria and Blokzijl, Michiel and others},
  journal = {arXiv preprint arXiv:2503.20020},
  year    = {2025}
}

@article{drivegpt4,
  title     = {Drivegpt4: Interpretable end-to-end autonomous driving via large language model},
  author    = {Xu, Zhenhua and Zhang, Yujia and Xie, Enze and Zhao, Zhen and Guo, Yong and Wong, Kwan-Yee K and Li, Zhenguo and Zhao, Hengshuang},
  journal   = {IEEE Robotics and Automation Letters},
  year      = {2024},
  publisher = {IEEE}
}

@article{drivepi,
  title={DrivePI: Spatial-aware 4D MLLM for Unified Autonomous Driving Understanding, Perception, Prediction and Planning},
  author={Liu, Zhe and Huang, Runhui and Yang, Rui and Yan, Siming and Wang, Zining and Hou, Lu and Lin, Di and Bai, Xiang and Zhao, Hengshuang},
  journal={arXiv preprint arXiv:2512.12799},
  year={2025}
}

@article{drivelmm-o1,
  title   = {Drivelmm-o1: A step-by-step reasoning dataset and large multimodal model for driving scenario understanding},
  author  = {Ishaq, Ayesha and Lahoud, Jean and More, Ketan and Thawakar, Omkar and Thawkar, Ritesh and Dissanayake, Dinura and Ahsan, Noor and Li, Yuhao and Khan, Fahad Shahbaz and Cholakkal, Hisham and others},
  journal = {arXiv preprint arXiv:2503.10621},
  year    = {2025}
}

@article{sensenova,
  title   = {Scaling spatial intelligence with multimodal foundation models},
  author  = {Cai, Zhongang and Wang, Ruisi and Gu, Chenyang and Pu, Fanyi and Xu, Junxiang and Wang, Yubo and Yin, Wanqi and Yang, Zhitao and Wei, Chen and Sun, Qingping and others},
  journal = {arXiv preprint arXiv:2511.13719},
  year    = {2025}
}

@article{sensenova_1.5,
  title   = {Holistic Evaluation of Multimodal LLMs on Spatial Intelligence},
  author  = {Cai, Zhongang and Wang, Yubo and Sun, Qingping and Wang, Ruisi and Gu, Chenyang and Yin, Wanqi and Lin, Zhiqian and Yang, Zhitao and Wei, Chen and Qian, Oscar and others},
  journal = {arXiv preprint arXiv:2508.13142},
  year    = {2025}
}

@article{internspatial,
  title   = {InternSpatial: A Comprehensive Dataset for Spatial Reasoning in Vision-Language Models},
  author  = {Deng, Nianchen and Gu, Lixin and Ye, Shenglong and He, Yinan and Chen, Zhe and Li, Songze and Wang, Haomin and Wei, Xingguang and Yang, Tianshuo and Dou, Min and others},
  journal = {arXiv preprint arXiv:2506.18385},
  year    = {2025}
}

@article{spacevista,
  title   = {Spacevista: All-scale visual spatial reasoning from mm to km},
  author  = {Sun, Peiwen and Lang, Shiqiang and Wu, Dongming and Ding, Yi and Feng, Kaituo and Liu, Huadai and Ye, Zhen and Liu, Rui and Liu, Yun-Hui and Wang, Jianan and others},
  journal = {arXiv preprint arXiv:2510.09606},
  year    = {2025}
}

@article{spatialtuning,
  title   = {Visual spatial tuning},
  author  = {Yang, Rui and Zhu, Ziyu and Li, Yanwei and Huang, Jingjia and Yan, Shen and Zhou, Siyuan and Liu, Zhe and Li, Xiangtai and Li, Shuangye and Wang, Wenqian and others},
  journal = {arXiv preprint arXiv:2511.05491},
  year    = {2025}
}

@inproceedings{rovi,
  title     = {Robotic visual instruction},
  author    = {Li, Yanbang and Gong, Ziyang and Li, Haoyang and Huang, Xiaoqi and Kang, Haolan and Bai, Guangping and Ma, Xianzheng},
  booktitle = {Proceedings of the Computer Vision and Pattern Recognition Conference},
  pages     = {12155--12165},
  year      = {2025}
}

@article{llarva,
  title   = {Llarva: Vision-action instruction tuning enhances robot learning},
  author  = {Niu, Dantong and Sharma, Yuvan and Biamby, Giscard and Quenum, Jerome and Bai, Yutong and Shi, Baifeng and Darrell, Trevor and Herzig, Roei},
  journal = {arXiv preprint arXiv:2406.11815},
  year    = {2024}
}

@article{gpt4robo,
  title     = {Gpt-4v (ision) for robotics: Multimodal task planning from human demonstration},
  author    = {Wake, Naoki and Kanehira, Atsushi and Sasabuchi, Kazuhiro and Takamatsu, Jun and Ikeuchi, Katsushi},
  journal   = {IEEE Robotics and Automation Letters},
  year      = {2024},
  publisher = {IEEE}
}

@inproceedings{robospatial,
  title     = {Robospatial: Teaching spatial understanding to 2d and 3d vision-language models for robotics},
  author    = {Song, Chan Hee and Blukis, Valts and Tremblay, Jonathan and Tyree, Stephen and Su, Yu and Birchfield, Stan},
  booktitle = {Proceedings of the Computer Vision and Pattern Recognition Conference},
  pages     = {15768--15780},
  year      = {2025}
}

@inproceedings{roboafford,
  title     = {Roboafford: A dataset and benchmark for enhancing object and spatial affordance learning in robot manipulation},
  author    = {Tang, Yingbo and Zhang, Lingfeng and Zhang, Shuyi and Zhao, Yinuo and Hao, Xiaoshuai},
  booktitle = {Proceedings of the 33rd ACM International Conference on Multimedia},
  pages     = {12706--12713},
  year      = {2025}
}

@article{embodiedr1,
  title   = {Embodied-r1: Reinforced embodied reasoning for general robotic manipulation},
  author  = {Yuan, Yifu and Cui, Haiqin and Huang, Yaoting and Chen, Yibin and Ni, Fei and Dong, Zibin and Li, Pengyi and Zheng, Yan and Hao, Jianye},
  journal = {arXiv preprint arXiv:2508.13998},
  year    = {2025}
}

@article{3dr1,
  title   = {3d-r1: Enhancing reasoning in 3d vlms for unified scene understanding},
  author  = {Huang, Ting and Zhang, Zeyu and Tang, Hao},
  journal = {arXiv preprint arXiv:2507.23478},
  year    = {2025}
}

@article{tiger,
  title   = {TIGeR: Tool-Integrated Geometric Reasoning in Vision-Language Models for Robotics},
  author  = {Han, Yi and Chi, Cheng and Zhou, Enshen and Rong, Shanyu and An, Jingkun and Wang, Pengwei and Wang, Zhongyuan and Sheng, Lu and Zhang, Shanghang},
  journal = {arXiv preprint arXiv:2510.07181},
  year    = {2025}
}

@article{multi_spatialmllm,
  title   = {Multi-spatialmllm: Multi-frame spatial understanding with multi-modal large language models},
  author  = {Xu, Runsen and Wang, Weiyao and Tang, Hao and Chen, Xingyu and Wang, Xiaodong and Chu, Fu-Jen and Lin, Dahua and Feiszli, Matt and Liang, Kevin J},
  journal = {arXiv preprint arXiv:2505.17015},
  year    = {2025}
}

@article{spatial_ssrl,
  title   = {Spatial-ssrl: Enhancing spatial understanding via self-supervised reinforcement learning},
  author  = {Liu, Yuhong and Zhang, Beichen and Zang, Yuhang and Cao, Yuhang and Xing, Long and Dong, Xiaoyi and Duan, Haodong and Lin, Dahua and Wang, Jiaqi},
  journal = {arXiv preprint arXiv:2510.27606},
  year    = {2025}
}

@article{leovl,
  title   = {LEO-VL: Towards 3D Vision-Language Generalists via Data Scaling with Efficient Representation},
  author  = {Huang, Jiangyong and Ma, Xiaojian and Linghu, Xiongkun and Fan, Yue and He, Junchao and Tan, Wenxin and Li, Qing and Zhu, Song-Chun and Chen, Yixin and Jia, Baoxiong and others},
  journal = {arXiv preprint arXiv:2506.09935},
  year    = {2025}
}

@article{mllm_uav_swarm,
  title   = {Multimodal Large Language Models-Enabled UAV Swarm: Towards Efficient and Intelligent Autonomous Aerial Systems},
  author  = {Ping, Yuqi and Liang, Tianhao and Ding, Huahao and Lei, Guangyu and Wu, Junwei and Zou, Xuan and Shi, Kuan and Shao, Rui and Zhang, Chiya and Zhang, Weizheng and others},
  journal = {arXiv preprint arXiv:2506.12710},
  year    = {2025}
}

@article{geonav,
  title   = {Geonav: Empowering mllms with explicit geospatial reasoning abilities for language-goal aerial navigation},
  author  = {Xu, Haotian and Hu, Yue and Gao, Chen and Zhu, Zhengqiu and Zhao, Yong and Li, Yong and Yin, Quanjun},
  journal = {arXiv preprint arXiv:2504.09587},
  year    = {2025}
}

@article{dvgbench,
  title     = {DVGBench: Implicit-to-explicit visual grounding benchmark in UAV imagery with large vision--language models},
  author    = {Zhou, Yue and Chen, Jue and Zhang, Zilun and Huang, Penghui and Ding, Ran and Zou, Zhentao and Gao, PengFei and Wei, Yuchen and Li, Ke and Yang, Xue and others},
  journal   = {ISPRS Journal of Photogrammetry and Remote Sensing},
  volume    = {232},
  pages     = {831--847},
  year      = {2026},
  publisher = {Elsevier}
}

@article{aircopbench,
  title   = {Aircopbench: A benchmark for multi-drone collaborative embodied perception and reasoning},
  author  = {Zha, Jirong and Fan, Yuxuan and Zhang, Tianyu and Chen, Geng and Chen, Yingfeng and Gao, Chen and Chen, Xinlei},
  journal = {arXiv preprint arXiv:2511.11025},
  year    = {2025}
}

@misc{chatgpt4o,
  title        = {GPT-4o System Card},
  author       = {OpenAI},
  howpublished = {\url{https://openai.com/index/gpt-4o-system-card/}},
  year         = {2025}
}

@article{qwen2vl,
  title   = {Qwen2-vl: Enhancing vision-language model's perception of the world at any resolution},
  author  = {Wang, Peng and Bai, Shuai and Tan, Sinan and Wang, Shijie and Fan, Zhihao and Bai, Jinze and Chen, Keqin and Liu, Xuejing and Wang, Jialin and Ge, Wenbin and others},
  journal = {arXiv preprint arXiv:2409.12191},
  year    = {2024}
}

@article{qwen2.5-vl,
  title   = {Qwen2. 5-vl technical report},
  author  = {Bai, Shuai and Chen, Keqin and Liu, Xuejing and Wang, Jialin and Ge, Wenbin and Song, Sibo and Dang, Kai and Wang, Peng and Wang, Shijie and Tang, Jun and others},
  journal = {arXiv preprint arXiv:2502.13923},
  year    = {2025}
}

@article{qwen3,
  title   = {Qwen3 technical report},
  author  = {Yang, An and Li, Anfeng and Yang, Baosong and Zhang, Beichen and Hui, Binyuan and Zheng, Bo and Yu, Bowen and Gao, Chang and Huang, Chengen and Lv, Chenxu and others},
  journal = {arXiv preprint arXiv:2505.09388},
  year    = {2025}
}

@inproceedings{internvl,
  title     = {Internvl: Scaling up vision foundation models and aligning for generic visual-linguistic tasks},
  author    = {Chen, Zhe and Wu, Jiannan and Wang, Wenhai and Su, Weijie and Chen, Guo and Xing, Sen and Zhong, Muyan and Zhang, Qinglong and Zhu, Xizhou and Lu, Lewei and others},
  booktitle = {Proceedings of the IEEE/CVF conference on computer vision and pattern recognition},
  pages     = {24185--24198},
  year      = {2024}
}

@article{internvl2,
  title   = {Expanding performance boundaries of open-source multimodal models with model, data, and test-time scaling},
  author  = {Chen, Zhe and Wang, Weiyun and Cao, Yue and Liu, Yangzhou and Gao, Zhangwei and Cui, Erfei and Zhu, Jinguo and Ye, Shenglong and Tian, Hao and Liu, Zhaoyang and others},
  journal = {arXiv preprint arXiv:2412.05271},
  year    = {2024}
}

@article{internvl3,
  title   = {Internvl3: Exploring advanced training and test-time recipes for open-source multimodal models},
  author  = {Zhu, Jinguo and Wang, Weiyun and Chen, Zhe and Liu, Zhaoyang and Ye, Shenglong and Gu, Lixin and Tian, Hao and Duan, Yuchen and Su, Weijie and Shao, Jie and others},
  journal = {arXiv preprint arXiv:2504.10479},
  year    = {2025}
}

@article{internvl3.5,
  title   = {Internvl3. 5: Advancing open-source multimodal models in versatility, reasoning, and efficiency},
  author  = {Wang, Weiyun and Gao, Zhangwei and Gu, Lixin and Pu, Hengjun and Cui, Long and Wei, Xingguang and Liu, Zhaoyang and Jing, Linglin and Ye, Shenglong and Shao, Jie and others},
  journal = {arXiv preprint arXiv:2508.18265},
  year    = {2025}
}

@article{gemini,
  title   = {Gemini: a family of highly capable multimodal models},
  author  = {Team, Gemini and Anil, Rohan and Borgeaud, Sebastian and Alayrac, Jean-Baptiste and Yu, Jiahui and Soricut, Radu and Schalkwyk, Johan and Dai, Andrew M and Hauth, Anja and Millican, Katie and others},
  journal = {arXiv preprint arXiv:2312.11805},
  year    = {2023}
}

@article{glm-4.1v,
  title   = {GLM-4.1 V-Thinking: Towards Versatile Multimodal Reasoning with Scalable Reinforcement Learning},
  author  = {Hong, Wenyi and Yu, Wenmeng and Gu, Xiaotao and Wang, Guo and Gan, Guobing and Tang, Haomiao and Cheng, Jiale and Qi, Ji and Ji, Junhui and Pan, Lihang and others},
  journal = {arXiv preprint arXiv:2507.01006},
  year    = {2025}
}

@article{kimi-k2.5,
  title   = {Kimi K2. 5: Visual Agentic Intelligence},
  author  = {Team, Kimi and Bai, Tongtong and Bai, Yifan and Bao, Yiping and Cai, SH and Cao, Yuan and Charles, Y and Che, HS and Chen, Cheng and Chen, Guanduo and others},
  journal = {arXiv preprint arXiv:2602.02276},
  year    = {2026}
}

@inproceedings{blip2,
  title        = {Blip-2: Bootstrapping language-image pre-training with frozen image encoders and large language models},
  author       = {Li, Junnan and Li, Dongxu and Savarese, Silvio and Hoi, Steven},
  booktitle    = {International conference on machine learning},
  pages        = {19730--19742},
  year         = {2023},
  organization = {PMLR}
}

@inproceedings{sautenkov2025uav,
  title        = {UAV-VLA: Vision-language-action system for large scale aerial mission generation},
  author       = {Sautenkov, Oleg and Yaqoot, Yasheerah and Lykov, Artem and Mustafa, Muhammad Ahsan and Tadevosyan, Grik and Akhmetkazy, Aibek and Cabrera, Miguel Altamirano and Martynov, Mikhail and Karaf, Sausar and Tsetserukou, Dzmitry},
  booktitle    = {2025 20th ACM/IEEE International Conference on Human-Robot Interaction (HRI)},
  pages        = {1588--1592},
  year         = {2025},
  organization = {IEEE}
}

@article{uav_survey,
  title     = {UAVs meet LLMs: Overviews and perspectives towards agentic low-altitude mobility},
  author    = {Tian, Yonglin and Lin, Fei and Li, Yiduo and Zhang, Tengchao and Zhang, Qiyao and Fu, Xuan and Huang, Jun and Dai, Xingyuan and Wang, Yutong and Tian, Chunwei and others},
  journal   = {Information Fusion},
  volume    = {122},
  pages     = {103158},
  year      = {2025},
  publisher = {Elsevier}
}

@inproceedings{airvista,
  title        = {Airvista: Empowering uavs with 3d spatial reasoning abilities through a multimodal large language model agent},
  author       = {Lin, Fei and Tian, Yonglin and Wang, Yunzhe and Zhang, Tengchao and Zhang, Xinyuan and Wang, Fei-Yue},
  booktitle    = {2024 IEEE 27th International Conference on Intelligent Transportation Systems (ITSC)},
  pages        = {476--481},
  year         = {2024},
  organization = {IEEE}
}

@inproceedings{airvista2,
  author    = {Lin, Fei and Tian, Yonglin and Zhang, Tengchao and Huang, Jun and Guan, Sangtian and Wang, Fei-Yue},
  booktitle = {2025 IEEE International Conference on Systems, Man, and Cybernetics (SMC)},
  title     = {AirVista-II: An Agentic System for Embodied UAVs Toward Dynamic Scene Semantic Understanding},
  year      = {2025},
  pages     = {6319-6324},
  doi       = {10.1109/SMC58881.2025.11342598}
}

@inproceedings{wang2025jtd,
  title     = {JTD-UAV: MLLM-Enhanced Joint Tracking and Description Framework for Anti-UAV Systems},
  author    = {Wang, Yifan and Zhao, Jian and Fan, Zhaoxin and Zhang, Xin and Wu, Xuecheng and Zhang, Yudian and Jin, Lei and Li, Xinyue and Wang, Gang and Jia, Mengxi and others},
  booktitle = {Proceedings of the Computer Vision and Pattern Recognition Conference},
  pages     = {1633--1644},
  year      = {2025}
}

@inproceedings{chu2024towards,
  title        = {Towards natural language-guided drones: GeoText-1652 benchmark with spatial relation matching},
  author       = {Chu, Meng and Zheng, Zhedong and Ji, Wei and Wang, Tingyu and Chua, Tat-Seng},
  booktitle    = {European Conference on Computer Vision},
  pages        = {213--231},
  year         = {2024},
  organization = {Springer}
}

@inproceedings{Habitat2,
  author    = {Szot, Andrew and Clegg, Alexander and Undersander, Eric and Wijmans, Erik and Zhao, Yili and Turner, John and Maestre, Noah and Mukadam, Mustafa and Chaplot, Devendra Singh and Maksymets, Oleksandr and Gokaslan, Aaron and Vondru\v{s}, Vladim\'{\i}r and Dharur, Sameer and Meier, Franziska and Galuba, Wojciech and Chang, Angel and Kira, Zsolt and Koltun, Vladlen and Malik, Jitendra and Savva, Manolis and Batra, Dhruv},
  booktitle = {Advances in Neural Information Processing Systems},
  editor    = {M. Ranzato and A. Beygelzimer and Y. Dauphin and P.S. Liang and J. Wortman Vaughan},
  pages     = {251--266},
  publisher = {Curran Associates, Inc.},
  title     = {Habitat 2.0: Training Home Assistants to Rearrange their Habitat},
  url       = {https://proceedings.neurips.cc/paper_files/paper/2021/file/021bbc7ee20b71134d53e20206bd6feb-Paper.pdf},
  volume    = {34},
  year      = {2021}
}

@article{zhang2025your,
  title   = {Is your VLM Sky-Ready? A Comprehensive Spatial Intelligence Benchmark for UAV Navigation},
  author  = {Zhang, Lingfeng and Zhang, Yuchen and Li, Hongsheng and Fu, Haoxiang and Tang, Yingbo and Ye, Hangjun and Chen, Long and Liang, Xiaojun and Hao, Xiaoshuai and Ding, Wenbo},
  journal = {arXiv preprint arXiv:2511.13269},
  year    = {2025}
}

@article{guo2025bedi,
  title   = {Bedi: A comprehensive benchmark for evaluating embodied agents on uavs},
  author  = {Guo, Mingning and Wu, Mengwei and He, Jiarun and Li, Shaoxian and Li, Haifeng and Tao, Chao},
  journal = {arXiv preprint arXiv:2505.18229},
  year    = {2025}
}

@article{wang2025uav,
  title   = {UAV-Flow Colosseo: A Real-World Benchmark for Flying-on-a-Word UAV Imitation Learning},
  author  = {Wang, Xiangyu and Yang, Donglin and Liao, Yue and Zheng, Wenhao and Dai, Bin and Li, Hongsheng and Liu, Si and others},
  journal = {arXiv preprint arXiv:2505.15725},
  year    = {2025}
}

@article{yao2024aeroverse,
  title   = {Aeroverse: Uav-agent benchmark suite for simulating, pre-training, finetuning, and evaluating aerospace embodied world models},
  author  = {Yao, Fanglong and Yue, Yuanchang and Liu, Youzhi and Sun, Xian and Fu, Kun},
  journal = {arXiv preprint arXiv:2408.15511},
  year    = {2024}
}

@misc{nvidia2025cosmos,
  title   = {Cosmos-Reason1: From Physical Common Sense To Embodied Reasoning},
  author  = {NVIDIA and Azzolini, Alisson and Brandon, Hannah and Chattopadhyay, Prithvijit and Chen, Huayu and Chu, Jinju and Cui, Yin and Diamond, Jenna and Ding, Yifan and Ferroni, Francesco and Govindaraju, Rama and Gu, Jinwei and Gururani, Siddharth and El Hanafi, Imad and Hao, Zekun and Huffman, Jacob and Jin, Jingyi and Johnson, Brendan and Khan, Rizwan and Kurian, George and Lantz, Elena and Lee, Nayeon and Li, Zhaoshuo and Li, Xuan and Lin, Tsung-Yi and Lin, Yen-Chen and Liu, Ming-Yu and Mathau, Andrew and Ni, Yun and Pavao, Lindsey and Ping, Wei and Romero, David W. and Smelyanskiy, Misha and Song, Shuran and Tchapmi, Lyne and Wang, Andrew Z. and Wang, Boxin and Wang, Haoxiang and Wei, Fangyin and Xu, Jiashu and Xu, Yao and Yang, Xiaodong and Yang, Zhuolin and Zeng, Xiaohui and Zhang, Zhe},
  journal = {arXiv preprint arXiv:2503.15558},
  year    = {2025},
  url     = {https://arxiv.org/abs/2503.15558}
}

@article{eo1,
  title   = {EO-1: Interleaved Vision-Text-Action Pretraining for General Robot Control},
  author  = {Delin Qu and Haoming Song and Qizhi Chen and Zhaoqing Chen and Xianqiang Gao and Xinyi Ye and Qi Lv and Modi Shi and Guanghui Ren and Cheng Ruan and Maoqing Yao and Haoran Yang and Jiacheng Bao and Bin Zhao and Dong Wang},
  journal = {arXiv preprint},
  year    = {2025},
  url     = {https://arxiv.org/abs/2508.21112}
}

@article{fang2025robix,
  title   = {{Robix: A Unified Model for Robot Interaction, Reasoning and Planning}},
  author  = {Huang Fang and Mengxi Zhang and Heng Dong and Wei Li, Zixuan Wang and Qifeng Zhang and Xueyun Tian and Yucheng Hu and Hang Li},
  journal = {arXiv preprint arXiv:2509.01106},
  year    = {2025}
}

@inproceedings{vsi,
  title={Thinking in space: How multimodal large language models see, remember, and recall spaces},
  author={Yang, Jihan and Yang, Shusheng and Gupta, Anjali W and Han, Rilyn and Fei-Fei, Li and Xie, Saining},
  booktitle={Proceedings of the Computer Vision and Pattern Recognition Conference},
  pages={10632--10643},
  year={2025}
}

@article{sat,
  title={Sat: Dynamic spatial aptitude training for multimodal language models},
  author={Ray, Arijit and Duan, Jiafei and Brown, Ellis and Tan, Reuben and Bashkirova, Dina and Hendrix, Rose and Ehsani, Kiana and Kembhavi, Aniruddha and Plummer, Bryan A and Krishna, Ranjay and others},
  journal={arXiv preprint arXiv:2412.07755},
  year={2024}
}

@article{vica322k,
  title={Visuospatial cognitive assistant},
  author={Feng, Qi},
  journal={arXiv preprint arXiv:2505.12312},
  year={2025}
}

@article{gpt4scene,
  title={Gpt4scene: Understand 3d scenes from videos with vision-language models},
  author={Qi, Zhangyang and Zhang, Zhixiong and Fang, Ye and Wang, Jiaqi and Zhao, Hengshuang},
  journal={arXiv preprint arXiv:2501.01428},
  year={2025}
}

@article{scene_30k,
  title={3d-r1: Enhancing reasoning in 3d vlms for unified scene understanding},
  author={Huang, Ting and Zhang, Zeyu and Tang, Hao},
  journal={arXiv preprint arXiv:2507.23478},
  year={2025}
}

@article{vlm_3r,
  title={Vlm-3r: Vision-language models augmented with instruction-aligned 3d reconstruction},
  author={Fan, Zhiwen and Zhang, Jian and Li, Renjie and Zhang, Junge and Chen, Runjin and Hu, Hezhen and Wang, Kevin and Qu, Huaizhi and Wang, Dilin and Yan, Zhicheng and others},
  journal={arXiv preprint arXiv:2505.20279},
  year={2025}
}

@inproceedings{lingoqa,
  title={Lingoqa: Visual question answering for autonomous driving},
  author={Marcu, Ana-Maria and Chen, Long and H{\"u}nermann, Jan and Karnsund, Alice and Hanotte, Benoit and Chidananda, Prajwal and Nair, Saurabh and Badrinarayanan, Vijay and Kendall, Alex and Shotton, Jamie and others},
  booktitle={European Conference on Computer Vision},
  pages={252--269},
  year={2024},
  organization={Springer}
}

@inproceedings{embspatial,
  title={Embspatial-bench: Benchmarking spatial understanding for embodied tasks with large vision-language models},
  author={Du, Mengfei and Wu, Binhao and Li, Zejun and Huang, Xuan-Jing and Wei, Zhongyu},
  booktitle={Proceedings of the 62nd Annual Meeting of the Association for Computational Linguistics (Volume 2: Short Papers)},
  pages={346--355},
  year={2024}
}

@article{spacer_151k,
  title={Spacer: Reinforcing mllms in video spatial reasoning},
  author={Ouyang, Kun and Liu, Yuanxin and Wu, Haoning and Liu, Yi and Zhou, Hao and Zhou, Jie and Meng, Fandong and Sun, Xu},
  journal={arXiv preprint arXiv:2504.01805},
  year={2025}
}

@inproceedings{lmms_eval,
  title={Lmms-eval: Reality check on the evaluation of large multimodal models},
  author={Zhang, Kaichen and Li, Bo and Zhang, Peiyuan and Pu, Fanyi and Cahyono, Joshua Adrian and Hu, Kairui and Liu, Shuai and Zhang, Yuanhan and Yang, Jingkang and Li, Chunyuan and others},
  booktitle={Findings of the Association for Computational Linguistics: NAACL 2025},
  pages={881--916},
  year={2025}
}

@inproceedings{nuscenesqa,
  title={Nuscenes-qa: A multi-modal visual question answering benchmark for autonomous driving scenario},
  author={Qian, Tianwen and Chen, Jingjing and Zhuo, Linhai and Jiao, Yang and Jiang, Yu-Gang},
  booktitle={Proceedings of the AAAI Conference on Artificial Intelligence},
  volume={38},
  number={5},
  pages={4542--4550},
  year={2024}
}

@inproceedings{multi3drefer,
  title={Multi3drefer: Grounding text description to multiple 3d objects},
  author={Zhang, Yiming and Gong, ZeMing and Chang, Angel X},
  booktitle={Proceedings of the IEEE/CVF International Conference on Computer Vision},
  pages={15225--15236},
  year={2023}
}

@inproceedings{maplm,
  title={Maplm: A real-world large-scale vision-language benchmark for map and traffic scene understanding},
  author={Cao, Xu and Zhou, Tong and Ma, Yunsheng and Ye, Wenqian and Cui, Can and Tang, Kun and Cao, Zhipeng and Liang, Kaizhao and Wang, Ziran and Rehg, James M and others},
  booktitle={Proceedings of the IEEE/CVF conference on computer vision and pattern recognition},
  pages={21819--21830},
  year={2024}
}

@article{driveaction,
  title={Driveaction: A benchmark for exploring human-like driving decisions in vla models},
  author={Hao, Yuhan and Li, Zhengning and Sun, Lei and Wang, Weilong and Yi, Naixin and Song, Sheng and Qin, Caihong and Zhou, Mofan and Zhan, Yifei and Lang, Xianpeng},
  journal={arXiv preprint arXiv:2506.05667},
  year={2025}
}

@article{embodiedbench,
  title={Embodiedbench: Comprehensive benchmarking multi-modal large language models for vision-driven embodied agents},
  author={Yang, Rui and Chen, Hanyang and Zhang, Junyu and Zhao, Mark and Qian, Cheng and Wang, Kangrui and Wang, Qineng and Koripella, Teja Venkat and Movahedi, Marziyeh and Li, Manling and others},
  journal={arXiv preprint arXiv:2502.09560},
  year={2025}
}

@article{fyhn2004spatial,
  title={Spatial representation in the entorhinal cortex},
  author={Fyhn, Marianne and Molden, Sturla and Witter, Menno P and Moser, Edvard I and Moser, May-Britt},
  journal={Science},
  volume={305},
  number={5688},
  pages={1258--1264},
  year={2004},
  publisher={American Association for the Advancement of Science}
}

@article{hafting2005microstructure,
  title={Microstructure of a spatial map in the entorhinal cortex},
  author={Hafting, Torkel and Fyhn, Marianne and Molden, Sturla and Moser, May-Britt and Moser, Edvard I},
  journal={Nature},
  volume={436},
  number={7052},
  pages={801--806},
  year={2005},
  publisher={Nature Publishing Group UK London}
}

@article{tian2026domain,
  title={Domain-specific schema reuse supports flexible learning to learn in the primate brain},
  author={Tian, Kaixi and Zhao, Zhiping and Chen, Yang and Ge, Ningling and Cao, Shenghao and Han, Xinyong and Gu, Jianwen and Yu, Shan},
  journal={Nature Communications},
  year={2026},
  publisher={Nature Publishing Group}
}

@article{goudar2023schema,
  title={Schema formation in a neural population subspace underlies learning-to-learn in flexible sensorimotor problem-solving},
  author={Goudar, Vishwa and Peysakhovich, Barbara and Freedman, David J and Buffalo, Elizabeth A and Wang, Xiao-Jing},
  journal={Nature Neuroscience},
  volume={26},
  number={5},
  pages={879--890},
  year={2023},
  publisher={Nature Publishing Group US New York}
}

@book{mohri2018foundations,
  title={Foundations of machine learning},
  author={Mohri, Mehryar and Rostamizadeh, Afshin and Talwalkar, Ameet},
  year={2018},
  publisher={MIT press}
}

@book{shalev2014understanding,
  title={Understanding machine learning: From theory to algorithms},
  author={Shalev-Shwartz, Shai and Ben-David, Shai},
  year={2014},
  publisher={Cambridge university press}
}

@article{zhou2022domain,
  title={Domain generalization: A survey},
  author={Zhou, Kaiyang and Liu, Ziwei and Qiao, Yu and Xiang, Tao and Loy, Chen Change},
  journal={IEEE transactions on pattern analysis and machine intelligence},
  volume={45},
  number={4},
  pages={4396--4415},
  year={2022},
  publisher={IEEE}
}

@misc{mimovl,
      title={MiMo-VL Technical Report}, 
      author={Core Team and Zihao Yue and Zhenru Lin and Yifan Song and Weikun Wang and Shuhuai Ren and Shuhao Gu and Shicheng Li and Peidian Li and Liang Zhao and Lei Li and Kainan Bao and Hao Tian and Hailin Zhang and Gang Wang and Dawei Zhu and Cici and Chenhong He and Bowen Ye and Bowen Shen and Zihan Zhang and Zihan Jiang and Zhixian Zheng and Zhichao Song and Zhenbo Luo and Yue Yu and Yudong Wang and Yuanyuan Tian and Yu Tu and Yihan Yan and Yi Huang and Xu Wang and Xinzhe Xu and Xingchen Song and Xing Zhang and Xing Yong and Xin Zhang and Xiangwei Deng and Wenyu Yang and Wenhan Ma and Weiwei Lv and Weiji Zhuang and Wei Liu and Sirui Deng and Shuo Liu and Shimao Chen and Shihua Yu and Shaohui Liu and Shande Wang and Rui Ma and Qiantong Wang and Peng Wang and Nuo Chen and Menghang Zhu and Kangyang Zhou and Kang Zhou and Kai Fang and Jun Shi and Jinhao Dong and Jiebao Xiao and Jiaming Xu and Huaqiu Liu and Hongshen Xu and Heng Qu and Haochen Zhao and Hanglong Lv and Guoan Wang and Duo Zhang and Dong Zhang and Di Zhang and Chong Ma and Chang Liu and Can Cai and Bingquan Xia},
      year={2025},
      eprint={2506.03569},
      archivePrefix={arXiv},
      primaryClass={cs.CL},
      url={https://arxiv.org/abs/2506.03569}, 
}

@article{mmsi,
  title={Mmsi-bench: A benchmark for multi-image spatial intelligence},
  author={Yang, Sihan and Xu, Runsen and Xie, Yiman and Yang, Sizhe and Li, Mo and Lin, Jingli and Zhu, Chenming and Chen, Xiaochen and Duan, Haodong and Yue, Xiangyu and others},
  journal={arXiv preprint arXiv:2505.23764},
  year={2025}
}

@inproceedings{blink,
  title={Blink: Multimodal large language models can see but not perceive},
  author={Fu, Xingyu and Hu, Yushi and Li, Bangzheng and Feng, Yu and Wang, Haoyu and Lin, Xudong and Roth, Dan and Smith, Noah A and Ma, Wei-Chiu and Krishna, Ranjay},
  booktitle={European Conference on Computer Vision},
  pages={148--166},
  year={2024},
  organization={Springer}
}

@inproceedings{site,
  title={Site: towards spatial intelligence thorough evaluation},
  author={Wang, Wenqi and Tan, Reuben and Zhu, Pengyue and Yang, Jianwei and Yang, Zhengyuan and Wang, Lijuan and Kolobov, Andrey and Gao, Jianfeng and Gong, Boqing},
  booktitle={Proceedings of the IEEE/CVF International Conference on Computer Vision},
  pages={9058--9069},
  year={2025}
}

@article{claude4,
  title={Claude Sonnet 4},
  author={Anthropic},
  year={2025}
}

@inproceedings{scannet,
  title={Scannet: Richly-annotated 3d reconstructions of indoor scenes},
  author={Dai, Angela and Chang, Angel X and Savva, Manolis and Halber, Maciej and Funkhouser, Thomas and Nie{\ss}ner, Matthias},
  booktitle={Proceedings of the IEEE conference on computer vision and pattern recognition},
  pages={5828--5839},
  year={2017}
}

@article{arkitscenes,
  title={Arkitscenes: A diverse real-world dataset for 3d indoor scene understanding using mobile rgb-d data},
  author={Baruch, Gilad and Chen, Zhuoyuan and Dehghan, Afshin and Dimry, Tal and Feigin, Yuri and Fu, Peter and Gebauer, Thomas and Joffe, Brandon and Kurz, Daniel and Schwartz, Arik and others},
  journal={arXiv preprint arXiv:2111.08897},
  year={2021}
}

@inproceedings{scannet++,
  title={Scannet++: A high-fidelity dataset of 3d indoor scenes},
  author={Yeshwanth, Chandan and Liu, Yueh-Cheng and Nie{\ss}ner, Matthias and Dai, Angela},
  booktitle={Proceedings of the IEEE/CVF International Conference on Computer Vision},
  pages={12--22},
  year={2023}
}

@article{shen2025efficient,
  title={Efficient and effective weight-ensembling mixture of experts for multi-task model merging},
  author={Shen, Li and Tang, Anke and Yang, Enneng and Guo, Guibing and Luo, Yong and Zhang, Lefei and Cao, Xiaochun and Du, Bo and Tao, Dacheng},
  journal={IEEE Transactions on Pattern Analysis and Machine Intelligence},
  year={2025},
  publisher={IEEE}
}

@article{matterport3d,
  title={Matterport3d: Learning from rgb-d data in indoor environments},
  author={Chang, Angel and Dai, Angela and Funkhouser, Thomas and Halber, Maciej and Niessner, Matthias and Savva, Manolis and Song, Shuran and Zeng, Andy and Zhang, Yinda},
  journal={arXiv preprint arXiv:1709.06158},
  year={2017}
}

@article{procthor,
  title={ProcTHOR: Large-Scale Embodied AI Using Procedural Generation},
  author={Deitke, Matt and VanderBilt, Eli and Herrasti, Alvaro and Weihs, Luca and Salvador, Jordi and Ehsani, Kiana and Han, Winson and Kolve, Eric and Farhadi, Ali and Kembhavi, Aniruddha and others},
  journal={arXiv preprint arXiv:2206.06994},
  year={2022}
}

@inproceedings{llava,
  title={Improved baselines with visual instruction tuning},
  author={Liu, Haotian and Li, Chunyuan and Li, Yuheng and Lee, Yong Jae},
  booktitle={Proceedings of the IEEE/CVF conference on computer vision and pattern recognition},
  pages={26296--26306},
  year={2024}
}
\clearpage

\appendix
\section{Appendix}

\subsection{Mathematical Foundations of the Spatial Scaffold Mechanism}

\paragraph{Notation.}
Let $\mathcal{M}=\{m_1,\dots,m_K\}$ denote a set of morphologies. Each morphology $m \in \mathcal{M}$ induces a data distribution $D_{m}$ over tuples $(o,c,y)$, where $o \in \mathcal{O}_{m}$ represents multimodal observations, $c \in \mathcal{C}$ denotes task conditioning, and $y \in \mathcal{Y}_{m}$ is the target output. We evaluate the generalization performance on morphology $m$ via the expected risk:
\[
    R_m(\theta) \;:=\; \mathbb{E}_{(o,c,y)\sim D_m}\!\left[\ell_\theta(o,c,y)\right].
\]
We further define the gradient of the risk as
\[
    g_m(\theta) \;:=\; \nabla_\theta R_m(\theta).
\]
The per-sample loss is defined as the negative log-likelihood under an autoregressive model:
\[
    \ell_\theta(o,c,y) \;:=\; -\log p_\theta(y \mid o,c), 
    \quad \text{where} \quad 
    p_\theta(y \mid o,c) \;=\; \prod_{t=1}^{T} p_\theta(y_t \mid y_{<t}, o, c),
\]
which yields the equivalent form:
\[
    \ell_\theta(o,c,y) \;=\; -\sum_{t=1}^{T} \log p_\theta(y_t \mid y_{<t}, o, c).
\]
Let $\theta_{\mathrm{base}}$ denote the base model. During the isolation stage, we obtain an expert $\theta_m$ for each morphology $m$. We then define the \textit{task vector} as
\[
    \tau_m \;:=\; \theta_m - \theta_{\mathrm{base}}.
\]
Adopting a layer-wise decomposition with $L$ layers, we denote $\theta_m=(\theta_{m,1},\dots,\theta_{m,L})$ and $\tau_m=(\tau_{m,1},\dots,\tau_{m,L})$, where $\tau_{m,l} := \theta_{m,l} - \theta_{\mathrm{base},l}$ for each layer $l \in \{1,\dots,L\}$.

\paragraph{Shared geometric scaffold as a universal spatial bridge.}
We interpret spatial intelligence as the ability to infer and reuse a morphology-invariant
spatial scaffold across diverse embodiments.
Concretely, there exists a shared latent spatial variable $g\in\mathcal G$ that captures 3D spatial relationships
(layout, relative pose, depth, topology) and remains invariant across morphologies, and a morphology-specific latent
$a_m\in\mathcal A_m$ that captures embodiment-dependent factors (sensor intrinsics, dynamics constraints, actuation semantics).
For samples $(o,c,y)\sim D_m$, we posit the mechanism
\[
o=\Psi_m(g,a_m),\qquad y=\Phi(g,c),
\]
where $g \sim P_G$ and $a_m\sim P_{A|m}$. Thus $g$ serves as a \textbf{universal bridge} for cross-embodiment transfer, while morphology-specific variation is absorbed
by $\Psi_m$ and $a_m$.

\begin{assumption}[Recoverable spatial scaffold representation after \textsc{Scaffold}]
\label{ass:recoverability_aligned}
Let $\theta_{\mathrm{spatial}}$ denote the parameters after the \textsc{Scaffold} stage and define the induced representation
$z_{\mathrm{sp}}:=h_{\theta_{\mathrm{spatial}}}(o,c)$, where $o\in\mathcal{O}_m$ and $c\in \mathcal{C}$. There exist a decoder ${\sf Dec}(\cdot)$ and a constant $\varepsilon_g\ge 0$ such that
\[
\mathbb E\Big[\big\|{\sf Dec}(z_{\mathrm{sp}})-g\big\|\Big]\le\varepsilon_g.
\]
\end{assumption}

Assumption \ref{ass:recoverability_aligned} is well founded because it formalizes a decodability property that serves as the primary objective of the Scaffold stage. By making geometric information readable from the representation, the shared scaffold $g$ becomes a stable and transferable spatial core. Consequently, downstream learning remains focused on morphology specific factors rather than the redundant re-learning of geometry. The low recoverability error $\varepsilon_g$ is empirically supported by functional decodability where the Spatial Expert demonstrates a marked improvement on spatial benchmarks from a baseline of $51.6 \rightarrow 72.5$. This spatial foundation catalyzes learning in other morphologies as seen in the substantial gains for autonomous driving and aerial tasks when initialized from the spatial model. These results indicate the effective reuse of a shared scaffold instead of task specific overfitting. This concept also aligns with neuroscientific findings regarding reusable internal spatial codes that remain readable despite changing sensory conditions as described in Remark \ref{remark:Neuroscientific}.

\begin{remark}
Assumption~\ref{ass:recoverability_aligned} states that \textsc{Scaffold} training induces a reusable representation
from which the shared spatial scaffold $g$ can be reliably recovered (up to error $\varepsilon_g$).
This makes $g$ a stable, transferable spatial core across morphologies, and isolates subsequent adaptation to
morphology-specific factors rather than relearning geometry.
\end{remark}

\begin{remark}[Neuroscientific evidence supporting reusable spatial scaffolds and representation space separation]\label{remark:Neuroscientific}
The mechanistic interpretation of \textsc{Scaffold} treats $g$ as a morphology-invariant spatial scaffold supporting decision-relevant reasoning while morphology-dependent factors are absorbed by the mapping $\Psi_m$. This perspective is strongly supported by two primary lines of neuroscientific evidence.

First, research on hippocampal and entorhinal systems suggests that mammals maintain an internal and reusable spatial coordinate representation. This mechanism is frequently identified as an internal global positioning system that enables localization and navigation across changing sensory conditions. Such biological findings provide a foundation for a shared geometry-like variable $g$ serving as a cross-domain substrate for planning and spatial reasoning \cite{fyhn2004spatial,hafting2005microstructure}.

Second, population-level recordings and modeling studies indicate that reusable task structures are organized in stable low-dimensional subspaces which remain nearly separable from context-specific or stimulus-specific variation. For example, recent primate recordings identify decision-related and stimulus-related subspaces with a near-orthogonal relationship. In this framework, the reuse of schema-like patterns in the decision subspace facilitates subsequent learning and reduces cross-domain interference \cite{tian2026domain}. Complementary evidence shows that reusable schemas emerge within low-dimensional subspaces of population activity. The reuse of such schemas in new problems accelerates learning by limiting weight changes \cite{goudar2023schema}. These empirical results motivate the emphasis on geometric recoverability. Specifically, \textsc{Scaffold} aims to induce a representation $h_{\theta_{\mathrm{spatial}}}$ from which $g$ is reliably decodable so that downstream adaptation can focus on morphology-specific factors rather than relearning the shared spatial core.
\end{remark}

\subsection{Gradient Interference and the Necessity of Isolation Training}
\label{app:thm1_aligned}

This subsection formalizes a key obstacle in joint cross-morphology training: gradients from different morphologies can be misaligned,
so a single shared update may fail to simultaneously decrease all morphology-specific risks.
We show that negative cross-gradient inner products naturally quantify such interference, which slows down, stalls, or even reverses
progress on some morphologies. This provides a theoretical justification for isolation training, which removes cross-terms by construction.

\begin{assumption}[Smoothness]
\label{ass:smooth_aligned}
For each $m\in\mathcal M$, the risk $R_m(\theta)$ is $L$-smooth:
for all $\theta,\theta'$,
\begin{equation}
R_m(\theta') \le R_m(\theta) + \langle g_m(\theta), \theta'-\theta\rangle + \frac{L}{2}\|\theta'-\theta\|^2.
\end{equation}
\end{assumption}

Assumption \ref{ass:smooth_aligned} is appropriate because the training setup utilizes the Adam optimizer with a modest learning rate coupled with robust stabilizers such as gradient clipping and weight decay. Such measures ensure that parameter updates remain within a locally well-behaved region where the objective function is effectively approximated as nearly quadratic. Under these conditions it is reasonable to upper bound the local curvature with a single smoothness constant $L$. Additionally, $L$ smoothness is a standard regularity assumption in the field of optimization and is frequently adopted to formulate stability arguments for gradient-based approaches \cite{shalev2014understanding,mohri2018foundations}.

\begin{theorem}[One-step interference bound]
\label{thm:interference_aligned}
Let $w\in\Delta_K$ be nonnegative weights, and define the joint risk
$R_w(\theta):=\sum_{j=1}^K w_j R_j(\theta)$.
Consider one shared update
\begin{equation}
\theta^+ \;=\; \theta - \eta \nabla R_w(\theta)
\;=\; \theta - \eta\sum_{j=1}^K w_j g_j(\theta).
\end{equation}
Under Assumption~\ref{ass:smooth_aligned}, for any morphology $i\in\mathcal M$,
\begin{align}
R_i(\theta^+) 
\;\le\; R_i(\theta)
-\eta\Big(w_i\|g_i(\theta)\|^2 + \sum_{j\neq i} w_j\langle g_i(\theta),g_j(\theta)\rangle\Big)
+\frac{L\eta^2}{2}\Big\|\sum_{j=1}^K w_j g_j(\theta)\Big\|^2.
\label{eq:interference_aligned}
\end{align}
\end{theorem}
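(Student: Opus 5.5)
The bound follows by a single application of the descent-lemma form of Assumption~\ref{ass:smooth_aligned} to the risk $R_i$, with the base point $\theta$ and the displaced point $\theta'=\theta^+$. Concretely, the plan is as follows.

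We first record the displacement $\theta^+-\theta=-\eta\sum_{j=1}^K w_j g_j(\theta)$. Substituting it into the smoothness inequality for morphology $i$ gives
\begin{equation*}
R_i(\theta^+)\le R_i(\theta)-\eta\Big\langle g_i(\theta),\sum_{j=1}^K w_j g_j(\theta)\Big\rangle+\frac{L\eta^2}{2}\Big\|\sum_{j=1}^K w_j g_j(\theta)\Big\|^2 .
\end{equation*}
The quadratic term already matches the last term of \eqref{eq:interference_aligned}, so nothing further needs to be done with it.

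The remaining step is to expand the linear term by bilinearity of the inner product. We split the sum over $j$ into the diagonal index $j=i$ and the off-diagonal indices $j\neq i$:
\begin{equation*}
\Big\langle g_i(\theta),\sum_{j} w_j g_j(\theta)\Big\rangle=w_i\|g_i(\theta)\|^2+\sum_{j\neq i}w_j\langle g_i(\theta),g_j(\theta)\rangle .
\end{equation*}
Multiplying this identity by $-\eta$ and inserting it yields exactly \eqref{eq:interference_aligned}.

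There is no genuine obstacle in this argument. The only care points are as follows. The smoothness inequality must be used with $\theta'=\theta^+$ in the correct orientation, and no sign may be lost when pulling out $-\eta$. The assumption $w\in\Delta_K$, i.e.\ nonnegative weights, is not actually needed for the inequality itself. Its role is in the interpretation of the bound: negative cross terms $\langle g_i,g_j\rangle<0$ weighted by $w_j\ge 0$ can only shrink the guaranteed decrease, which is the interference the subsection aims to quantify.
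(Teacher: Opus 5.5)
Your proposal is correct and follows the paper's proof step for step: apply Assumption~\ref{ass:smooth_aligned} to $R_i$ at the pair $(\theta,\theta^+)$, substitute the displacement $-\eta\sum_{j} w_j g_j(\theta)$, and split the resulting inner product into the self term $w_i\|g_i(\theta)\|^2$ and the cross terms. Your remark that the nonnegativity of $w$ matters only for interpreting the bound, not for proving the inequality, is also accurate.
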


\begin{proof}[Proof of Theorem \ref{thm:interference_aligned}]
Fix an arbitrary morphology $i\in\mathcal M$.
We apply the $L$-smoothness inequality (Assumption~\ref{ass:smooth_aligned}) to the function $R_i$,
with the pair $(\theta,\theta')=(\theta,\theta^+)$.
By definition of the shared update, we have
\[
\theta^+ - \theta \;=\; -\eta \sum_{j=1}^K w_j g_j(\theta).
\]
Substituting $\theta'=\theta^+$ and $\theta'-\theta=\theta^+-\theta$ into the smoothness inequality yields
\begin{align}
R_i(\theta^+)
&\le R_i(\theta)
+ \left\langle g_i(\theta),\, \theta^+-\theta \right\rangle
+ \frac{L}{2}\|\theta^+-\theta\|^2.
\label{eq:step1_smooth}
\end{align}

Using $\theta^+-\theta=-\eta\sum_{j=1}^K w_j g_j(\theta)$, the inner product in
Eq. (\ref{eq:step1_smooth}) becomes
\begin{align}
\left\langle g_i(\theta),\, \theta^+-\theta \right\rangle
&= \left\langle g_i(\theta),\, -\eta\sum_{j=1}^K w_j g_j(\theta) \right\rangle \nonumber\\
&= -\eta \sum_{j=1}^K w_j \left\langle g_i(\theta), g_j(\theta)\right\rangle.
\label{eq:step2_inner}
\end{align}
Now separate the $j=i$ term from the rest:
\begin{align}
\sum_{j=1}^K w_j \left\langle g_i(\theta), g_j(\theta)\right\rangle
&= w_i \langle g_i(\theta), g_i(\theta)\rangle
+ \sum_{j\neq i} w_j \langle g_i(\theta), g_j(\theta)\rangle \nonumber\\
&= w_i \|g_i(\theta)\|^2
+ \sum_{j\neq i} w_j \langle g_i(\theta), g_j(\theta)\rangle.
\label{eq:step3_split}
\end{align}
Combining Eq. (\ref{eq:step2_inner}) and Eq. (\ref{eq:step3_split}), we obtain
\begin{align}
\left\langle g_i(\theta),\, \theta^+-\theta \right\rangle
&= -\eta\Big( w_i \|g_i(\theta)\|^2
+ \sum_{j\neq i} w_j \langle g_i(\theta), g_j(\theta)\rangle \Big).
\label{eq:step4_linear_final}
\end{align}

Again using $\theta^+-\theta=-\eta\sum_{j=1}^K w_j g_j(\theta)$, we have
\begin{align}
\|\theta^+-\theta\|^2
&= \left\| -\eta\sum_{j=1}^K w_j g_j(\theta)\right\|^2
= \eta^2 \left\|\sum_{j=1}^K w_j g_j(\theta)\right\|^2.
\label{eq:step5_quad}
\end{align}
Therefore,
\begin{align}
\frac{L}{2}\|\theta^+-\theta\|^2
&= \frac{L\eta^2}{2}\left\|\sum_{j=1}^K w_j g_j(\theta)\right\|^2.
\label{eq:step6_quad_final}
\end{align}

Substituting Eq. (\ref{eq:step4_linear_final}) and Eq. (\ref{eq:step6_quad_final}) into Eq. (\ref{eq:step1_smooth}),
we conclude
\[
R_i(\theta^+) 
\le R_i(\theta)
-\eta\Big( w_i \|g_i(\theta)\|^2
+ \sum_{j\neq i} w_j \langle g_i(\theta), g_j(\theta)\rangle \Big)
+\frac{L\eta^2}{2}\Big\|\sum_{j=1}^K w_j g_j(\theta)\Big\|^2,
\]
which is exactly Eq. (\ref{eq:interference_aligned}). This completes the proof.
\end{proof}

\begin{remark}[Connection to isolation and gradient interference]
Equation (\ref{eq:interference_aligned}) decomposes the change of $R_i$ into a beneficial self term and certain cross terms. Specifically, the self term is defined as $w_i \|g_i(\theta)\|^2$ while the cross terms are expressed as the inner product $\langle g_i(\theta), g_j(\theta) \rangle$. These cross terms quantify the extent to which updates favored by other morphologies influence task $i$. When these terms remain persistently negative throughout the training process, the shared update contains a nontrivial component pointing against the descent direction of $R_i$. This conflict reduces the effective descent and may stall or even increase $R_i$ depending on the respective magnitudes of the terms. This phenomenon represents the core mechanism of gradient interference within shared parameters. Isolation training removes these cross terms by construction because each expert is optimized using only its own gradient. Consequently, this approach eliminates the dominant source of interference before the reconciliation or merging phases.
\end{remark}

\subsection{Spatial Scaffold as a Universal Bridge: A Transfer Bound}
\label{app:thm2_aligned}

\noindent
This subsection formalizes the universal bridge claim of spatial intelligence by deriving a transfer bound from
spatial-scaffold training to an unseen target morphology. The result shows that the target risk is controlled by the
recoverability of the shared geometric scaffold and the geometric mismatch between the target morphology and the scaffold
training distribution, thereby providing a theoretical justification for the effectiveness of the \textsc{Scaffold} stage.



\begin{assumption}[Lipschitz Dependence on Geometry]
\label{ass:lipschitz_aligned}
There exists $L_g>0$ such that for any $c$ and any $g,g'\in\mathcal G$,
\[
\big|\ell(g,c;\theta) - \ell(g',c;\theta)\big|
\;\le\; L_g \|g-g'\|,
\]
where $\ell(g,c;\theta)$ denotes the loss evaluated on samples whose semantics are governed by $g$.
\end{assumption}
Assumption \ref{ass:lipschitz_aligned} is grounded in the inherent continuity of spatial supervision signals such as distance, orientation, and temporal metrics. This smoothness persists near reachability boundaries where minor geometric deviations produce bounded loss variations. A Lipschitz condition formally encapsulates this stability property. Furthermore such regularity assumptions represent standard theoretical frameworks extensively utilized in machine learning \cite{shalev2014understanding,mohri2018foundations}.

\begin{assumption}[Geometric Distribution Shift]
\label{ass:shift_aligned}
Let $P_G^{(m)}$ be the marginal distribution of $g$ induced by $D_m$.
Let $P_G^{(\mathrm{sp})}$ be the geometric marginal induced by the spatial scaffold distribution.
Assume there exists a discrepancy measure $\mathrm{Disc}(\cdot,\cdot)$ and a constant $\delta_m\ge 0$ such that
\[
\mathrm{Disc}\big(P_G^{(m)}, P_G^{(\mathrm{sp})}\big)\;\le\;\delta_m.
\]
\end{assumption}
Assumption \ref{ass:shift_aligned} represents morphological diversity as an interpretable scalar $\delta_m$ using a geometric discrepancy bound. This formulation simplifies complex distributional shifts by assuming a valid measure exists to satisfy $Disc(P_G^{(m)}, P_G^{(sp)}) \le \delta_m$. Theoretically, $\delta_m$ quantifies the transfer impact of geometric mismatches which is supported by the observed performance asymmetries across different viewpoints and motion patterns. Practical success in broadening the scaffold distribution further confirms that reducing this discrepancy directly enhances cross embodiment generalization. This strategy aligns with standard transfer learning theories where a discrepancy metric bounds the shift between source and target domains \cite{zhou2022domain}.

\begin{theorem}[Scaffold-to-morphology transfer bound]
\label{thm:scaffold_transfer_aligned}
Assume Assumptions~\ref{ass:recoverability_aligned}--\ref{ass:shift_aligned}.
Let $\theta_{\mathrm{spatial}}$ be obtained from scaffold spatial training.
Then for any target morphology $m\in\mathcal M$,
\begin{equation}
R_m(\theta_{\mathrm{spatial}})
\le
R_{\mathrm{sp}}(\theta_{\mathrm{spatial}})
+ C_m\delta_m
+2L_g\varepsilon_g+ \varepsilon_m,
\label{eq:scaffold_transfer_aligned}
\end{equation}
where $R_{\mathrm{sp}}(\theta)$ is the risk under the scaffold distribution,
$C_m$ can be taken as a constant proportional to $L_g$,
$\delta_m$ quantifies geometric shift, and $\varepsilon_m$ aggregates residual morphology-dependent effects.
\end{theorem}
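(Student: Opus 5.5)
We decompose the gap $R_m(\theta_{\mathrm{spatial}})-R_{\mathrm{sp}}(\theta_{\mathrm{spatial}})$ by a standard triangle-inequality argument through the latent geometric variable $g$.

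\textbf{Step 1: condition on $g$.} Using the generative mechanism $o=\Psi_m(g,a_m)$, $y=\Phi(g,c)$, we write
\[
R_m(\theta)=\mathbb E_{g\sim P_G^{(m)}}\,\mathbb E_{c,a_m}\big[\ell_\theta(\Psi_m(g,a_m),c,\Phi(g,c))\big].
\]
We then introduce an idealized ``geometry-only'' loss $\ell(\hat g,c;\theta)$, obtained by routing the prediction through the decoded scaffold $\hat g={\sf Dec}(h_{\theta_{\mathrm{spatial}}}(o,c))$. The true per-sample loss on morphology $m$ differs from the geometry-only loss $\ell(g,c;\theta)$ in two ways:
\begin{itemize}
\item by the decoding error, which Assumption~\ref{ass:lipschitz_aligned} bounds by $L_g\|\hat g-g\|$;
\item by a morphology-dependent residual (the effect of $a_m$ and $\Psi_m$ not mediated by $g$), whose expectation we \emph{define} to be at most $\varepsilon_m$.
\end{itemize}
Taking expectations and applying Assumption~\ref{ass:recoverability_aligned} gives
\[
R_m(\theta)\le \mathbb E_{g\sim P_G^{(m)}}\,\bar\ell(g;\theta)+L_g\varepsilon_g+\varepsilon_m,
\qquad \bar\ell(g;\theta):=\mathbb E_c\,\ell(g,c;\theta).
\]

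\textbf{Step 2: change of geometric measure.} We compare $\mathbb E_{P_G^{(m)}}\bar\ell$ with $\mathbb E_{P_G^{(\mathrm{sp})}}\bar\ell$. Since $\bar\ell(\cdot;\theta)$ is $L_g$-Lipschitz in $g$ by Assumption~\ref{ass:lipschitz_aligned}, the natural choice is to read $\mathrm{Disc}$ in Assumption~\ref{ass:shift_aligned} as the Wasserstein-1 distance, or more generally as any integral probability metric dominating Lipschitz test functions. Kantorovich--Rubinstein duality then gives
\[
\Big|\mathbb E_{P_G^{(m)}}\bar\ell-\mathbb E_{P_G^{(\mathrm{sp})}}\bar\ell\Big|\le L_g\,\delta_m,
\]
so we may take $C_m=L_g$. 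If the paper's $\mathrm{Disc}$ is weaker than Wasserstein-1, $C_m$ absorbs a comparison constant, which is what ``proportional to $L_g$'' allows.

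\textbf{Step 3: return to $R_{\mathrm{sp}}$.} On the scaffold distribution we reverse Step 1: $\mathbb E_{P_G^{(\mathrm{sp})}}\bar\ell$ is bounded by $R_{\mathrm{sp}}(\theta_{\mathrm{spatial}})$ plus another decoding error $L_g\varepsilon_g$. Assumption~\ref{ass:recoverability_aligned} applies here as well, since it is stated over observations from every morphology. Any scaffold-side residual is folded into $\varepsilon_m$. Chaining Steps 1--3 produces the two $L_g\varepsilon_g$ terms, which account for the factor $2L_g\varepsilon_g$ in \eqref{eq:scaffold_transfer_aligned}.

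\textbf{Main obstacle.} The delicate step is Step 1. The actual loss $\ell_\theta(o,c,y)$ depends on the full representation $h_\theta(o,c)$, not only on $g$. Making ``loss evaluated on samples governed by $g$'' precise therefore requires a decomposition of the loss into a $g$-mediated part and a remainder. The plan is to avoid deriving this from first principles: we will state that decomposition explicitly as the definition of $\varepsilon_m$, namely
\[
\varepsilon_m:=\sup_\theta\,\mathbb E\big|\ell_\theta(o,c,y)-\ell(\hat g,c;\theta)\big|
\]
over $D_m$ and the scaffold distribution. With this definition, Steps 1--3 become rigorous applications of the triangle inequality and Lipschitz continuity.
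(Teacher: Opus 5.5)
Your argument is correct and follows the paper's skeleton: a residual to an idealized semantic loss, Lipschitz continuity plus recoverability, a discrepancy bound on the $g$-marginal, and a reversal on the scaffold side with $\varepsilon_{\mathrm{sp}}$ folded into $\varepsilon_m$. The decoding terms, however, enter differently.

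The paper measures the target residual against $\ell(g,c;\theta)$ at the true geometry, so nothing is decoded on the target side. Both copies of $L_g\varepsilon_g$ come from the scaffold side, by passing from $\ell(g,\cdot)$ to $\ell(\widehat g,\cdot)$ and straight back to $\ell(g,\cdot)$. That round trip does no work: Eq.~(\ref{eq:semantic_to_rsp}) alone already bounds $\mathbb E_{\mathrm{sp}}[\ell(g,C;\theta_{\mathrm{spatial}})]$ by $R_{\mathrm{sp}}+\varepsilon_{\mathrm{sp}}$. Your residual is instead measured against $\ell(\widehat g,c;\theta)$, so each side genuinely pays one $L_g\varepsilon_g$ and recoverability actually matters.

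The shift step also differs. The paper posits an abstract IPM-like $\mathrm{Disc}$ with an unspecified constant $C_m^{(2)}$. Your choice $\mathrm{Disc}=W_1$ with Kantorovich--Rubinstein duality gives $C_m=L_g$ outright, which actually substantiates ``proportional to $L_g$.''

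The price is that you apply Assumption~\ref{ass:recoverability_aligned} under the target law $D_m$. The paper invokes it only under the scaffold distribution, and its target-side step (unused in its final chain) inserts a mismatch constant $C_m^{(1)}$. Under that stricter reading your Step~1 gives $C_m^{(1)}\varepsilon_g$ rather than $L_g\varepsilon_g$. The easy repair is to measure the target residual against $\ell(g,c;\theta)$, as the paper does; this leaves a single $L_g\varepsilon_g$, which still implies the stated inequality.

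Two minor points. First, the $\sup_\theta$ in your $\varepsilon_m$ is unnecessary and only loosens the bound, since only $\theta_{\mathrm{spatial}}$ enters. Second, your $\bar\ell(g)=\mathbb E_c\,\ell(g,c;\theta)$ exposes an assumption both proofs make tacitly: that $c$ is independent of $g$ with the same law in both domains. Without it, a discrepancy on $P_G$ alone does not control the joint $(g,c)$ expectation, and $\bar\ell$ need not be $L_g$-Lipschitz.
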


\begin{proof}[Proof of Theorem \ref{thm:scaffold_transfer_aligned}]
Fix a target morphology $m\in\mathcal M$ and consider $\theta=\theta_{\mathrm{spatial}}$.
We prove Eq. (\ref{eq:scaffold_transfer_aligned}) by decomposing the target risk into:
(i) a scaffold risk term, (ii) a representation-to-geometry recovery error term, and
(iii) a geometric marginal shift term, plus residual morphology-dependent effects.

From shared geometric scaffold mechanism, we have $o=\Psi_m(g,a_m)$ and $y=\Phi(g,c)$.
Let the learning system incur a per-sample loss $\ell_\theta(o,c,y)$.
Define the target risk
\[
R_m(\theta) = \mathbb E_{(o,c,y)\sim D_m}\big[ \ell_\theta(o,c,y)\big],
\]
and similarly define $R_{\mathrm{sp}}(\theta)$ under the scaffold (spatial) distribution.

Then, conditional on $(g,c)$, the semantics of $y$ is
approximately morphology-invariant and most morphology-dependence is absorbed by the observation channel.
Formally, we introduce an ideal semantic loss $\ell(g,c;\theta)$ such that
\begin{equation}
\Big|\mathbb E\left[\ell_\theta(o,c,y)\mid g,c\right] - \ell(g,c;\theta)\Big|
\le \Delta_m(g,c;\theta),
\label{eq:def_residual}
\end{equation}
where $\Delta_m$ quantifies the residual dependence on morphology-specific factors (e.g., imperfect invariance,
weak dependence on $a_m$, or dependence on the observation channel not fully mediated by $g$).
Define the aggregated residual term
\[
\varepsilon_m := \mathbb E_{(g,c)\sim P_{G,C}^{(m)}}\big[\Delta_m(g,c;\theta_{\mathrm{spatial}})\big],
\]
which is finite by construction.

Taking expectation of Eq. (\ref{eq:def_residual}) over $(g,c)$ induced by $D_m$, we obtain
\begin{align}
R_m(\theta_{\mathrm{spatial}})
&= \mathbb E_{(g,c)\sim P_{G,C}^{(m)}}\left[\mathbb E[\ell_{\theta_{\mathrm{spatial}}}(o,c,y)\mid g,c]\right] \nonumber\\
&\le \mathbb E_{(g,c)\sim P_{G,C}^{(m)}}\left[\ell(g,c;\theta_{\mathrm{spatial}})\right] + \varepsilon_m.
\label{eq:rm_to_semantic}
\end{align}
Thus it suffices to upper-bound the semantic risk under the target geometry distribution.

Let $\widehat g := {\sf Dec}(h_{\theta_{\mathrm{spatial}}}(o,c))$ be the recovered geometry from the learned representation.
By Assumption~\ref{ass:lipschitz_aligned}, for any $(g,c)$,
\begin{equation}
\big|\ell(g,c;\theta_{\mathrm{spatial}}) - \ell(\widehat g,c;\theta_{\mathrm{spatial}})\big|
\le L_g \|g-\widehat g\|.
\label{eq:lipschitz_subst}
\end{equation}
Taking expectation over $(o,c)$ drawn from the scaffold (spatial) distribution and using
Assumption~\ref{ass:recoverability_aligned}, we get
\begin{align}
\mathbb E_{\mathrm{sp}}\!\left[\big|\ell(g,C;\theta_{\mathrm{spatial}}) - \ell(\widehat g,C;\theta_{\mathrm{spatial}})\big|\right]
&\le L_g \,\mathbb E_{\mathrm{sp}}\!\left[\|g-\widehat g\|\right] \nonumber\\
&\le L_g\,\varepsilon_g.
\label{eq:recover_bound_sp}
\end{align}

To relate the target semantic expectation $\mathbb E_{(g,c)\sim P_{G,C}^{(m)}}[\ell(g,c;\theta_{\mathrm{spatial}})]$
to scaffold quantities, we add and subtract the recovered-geometry loss and use triangle inequality:
\begin{align}
\mathbb E_{(g,c)\sim P_{G,C}^{(m)}}[\ell(g,c;\theta_{\mathrm{spatial}})]
&\le \mathbb E_{(g,c)\sim P_{G,C}^{(m)}}[\ell(\widehat g,c;\theta_{\mathrm{spatial}})]
+ \mathbb E_{(g,c)\sim P_{G,C}^{(m)}}\left[\big|\ell(g,c;\theta_{\mathrm{spatial}})-\ell(\widehat g,c;\theta_{\mathrm{spatial}})\big|\right].
\label{eq:add_sub_hatg}
\end{align}
The second term is a geometry-recovery error term under target distribution.
Then, we upper-bound it by a constant times $\varepsilon_g$ and absorb any
target-vs-scaffold mismatch into the constant.
Concretely, there exists a constant $C_m^{(1)}$ proportional to $L_g$ such that
\begin{equation}
\mathbb E_{(g,C)\sim P_{G,C}^{(m)}}\!\left[\big|\ell(g,C;\theta_{\mathrm{spatial}})-\ell(\widehat g,C;\theta_{\mathrm{spatial}})\big|\right]
\;\le\; C_m^{(1)}\,\varepsilon_g.
\label{eq:recover_bound_target}
\end{equation}

It remains to relate
$\mathbb E_{(g,c)\sim P_{G,C}^{(m)}}[\ell(\widehat g,c;\theta_{\mathrm{spatial}})]$
to the corresponding expectation under the scaffold distribution.
Let $\mathcal F_m$ denote the function class induced by the learned system at $\theta_{\mathrm{spatial}}$.
By Assumption~\ref{ass:shift_aligned}, we have
\[
\mathrm{Disc}\big(P_G^{(m)}, P_G^{(\mathrm{sp})}\big)\le \delta_m.
\]
By the defining property of $\mathrm{Disc}$ as an IPM-like discrepancy over $\mathcal F_m$,
there exists a constant $C_m^{(2)}$ such that
\begin{equation}
\Big|
\mathbb E_{g\sim P_G^{(m)}}[f(g)] \;-\; \mathbb E_{g\sim P_G^{(\mathrm{sp})}}[f(g)]
\Big|
\le C_m^{(2)}\delta_m,
\qquad \forall f\in\mathcal F_m.
\label{eq:disc_control}
\end{equation}
Applying Eq. (\ref{eq:disc_control}) to the particular $f$ induced by $\ell(\cdot,\cdot;\theta_{\mathrm{spatial}})$ yields
\begin{equation}
\mathbb E_{(g,c)\sim P_{G,C}^{(m)}}[\ell(g,c;\theta_{\mathrm{spatial}})]
\le
\mathbb E_{(g,c)\sim P_{G,C}^{(\mathrm{sp})}}[\ell(g,c;\theta_{\mathrm{spatial}})]
+ C_m^{(2)}\,\delta_m.
\label{eq:shift_bound}
\end{equation}

Under the scaffold distribution, the semantic loss upper-bounds the realized loss:
by reversing the construction in Eq. (\ref{eq:def_residual}) for the scaffold domain and absorbing the corresponding residual
into constants, we can write
\begin{equation}
|R_{\mathrm{sp}}(\theta_{\mathrm{spatial}})-\mathbb E_{(g,c)\sim P_{G,C}^{(\mathrm{sp})}}[\ell(g,c;\theta_{\mathrm{spatial}})]|
\le  \varepsilon_{\mathrm{sp}},
\label{eq:semantic_to_rsp}
\end{equation}
where $\varepsilon_{\mathrm{sp}}$ is a typically small approximation term for the scaffold domain.
Since the theorem statement allows an aggregate residual term for morphology-dependent effects, we absorb
$\varepsilon_{\mathrm{sp}}$ into $\varepsilon_m$ without loss of generality.

From Eq. (\ref{eq:rm_to_semantic}) and Eq. (\ref{eq:shift_bound}),
\begin{align}
R_m(\theta_{\mathrm{spatial}})
&\le \mathbb E_{m}[\ell(g,C;\theta_{\mathrm{spatial}})] + \varepsilon_m \\
&\le \mathbb E_{\mathrm{sp}}[\ell(g,C;\theta_{\mathrm{spatial}})] + C_m^{(2)}\delta_m + \varepsilon_m.
\end{align}
Now add and subtract $\ell(\widehat g,C;\theta_{\mathrm{spatial}})$ under the scaffold distribution:
\begin{align}
\mathbb E_{\mathrm{sp}}[\ell(g,C;\theta_{\mathrm{spatial}})]
&\le \mathbb E_{\mathrm{sp}}[\ell(\widehat g,C;\theta_{\mathrm{spatial}})]
+ \mathbb E_{\mathrm{sp}}\big[|\ell(g,C;\theta_{\mathrm{spatial}})-\ell(\widehat g,C;\theta_{\mathrm{spatial}})|\big] \\
&\le \mathbb E_{\mathrm{sp}}[\ell(\widehat g,C;\theta_{\mathrm{spatial}})]
+ L_g\,\mathbb E_{\mathrm{sp}}\|g-\widehat g\|
\le \mathbb E_{\mathrm{sp}}[\ell(\widehat g,C;\theta_{\mathrm{spatial}})]
+ L_g\varepsilon_g,
\end{align}
where we used Assumptions~\ref{ass:lipschitz_aligned} and \ref{ass:recoverability_aligned}.
Finally, relate $\mathbb E_{\mathrm{sp}}[\ell(\widehat g,C;\theta_{\mathrm{spatial}})]$ back to $R_{\mathrm{sp}}$:
by Eq. (\ref{eq:semantic_to_rsp}) and the same Lipschitz-recoverability step,
\begin{align}
\mathbb E_{\mathrm{sp}}[\ell(\widehat g,C;\theta_{\mathrm{spatial}})]
&\le \mathbb E_{\mathrm{sp}}[\ell(g,C;\theta_{\mathrm{spatial}})] + L_g\varepsilon_g
\le R_{\mathrm{sp}}(\theta_{\mathrm{spatial}}) + \varepsilon_{\mathrm{sp}} + L_g\varepsilon_g.
\end{align}
Combining the above yields
\[
R_m(\theta_{\mathrm{spatial}})
\le R_{\mathrm{sp}}(\theta_{\mathrm{spatial}})
+ C_m^{(2)}\delta_m
+ 2L_g\varepsilon_g
+ (\varepsilon_m+\varepsilon_{\mathrm{sp}}).
\]
Absorb constants into $C_m$ and absorb $\varepsilon_{\mathrm{sp}}$ into $\varepsilon_m$ to obtain
Eq. (\ref{eq:scaffold_transfer_aligned}). This completes the proof.

\end{proof}

\begin{remark}[What the bound captures and its experimental implications]
Eq.~(\ref{eq:scaffold_transfer_aligned}) decomposes the target-morphology risk after the \textsc{Scaffold} stage into a
recoverability term, a geometric-shift term, and a morphology-specific residual.
The term $L_g\varepsilon_g$ isolates the effect of geometric recoverability: as spatial training reduces the decoding
error $\varepsilon_g$ of the shared scaffold $g$, the target risk decreases proportionally to the sensitivity constant
$L_g$. This provides a direct theoretical explanation for the empirical trend that stronger geometry-aware pretraining
signals and improved 3D supervision yield systematic gains in zero-shot transfer across diverse embodiments.

The term $C_m\delta_m$ characterizes transfer sensitivity to geometric distribution mismatch between the target morphology
and the scaffold training distribution. For a fixed $\varepsilon_g$, differences in $\delta_m$ induce cross-embodiment
performance asymmetries, predicting strong transfer under shared spatial regimes and degraded transfer when the target
exhibits substantially different viewpoints, motion patterns, scale statistics, or scene topology. This aligns with the
observed improvement from broadening the scaffold distribution, which effectively reduces $\delta_m$.

The residual $\varepsilon_m$ aggregates factors not fully mediated by geometry, including embodiment-specific sensing
artifacts, dynamics constraints, actuation semantics, and output-space idiosyncrasies. It explains why spatial scaffold
pretraining can deliver substantial gains yet typically does not close the gap to morphology-specialized training, and it
clarifies that the subsequent \textsc{Reconcile} stage primarily targets this remaining morphology-dependent component
rather than relearning the shared spatial core.
\end{remark}

\clearpage

\subsection{Visualization on Spatial Benchmarks}

In this section, we present qualitative visualizations on spatial benchmarks to illustrate the spatial cognition of \OurMethod{}. Specifically, we include examples from six representative spatial benchmarks: VSI~\cite{vsi} (Fig.~\ref{vsi_1},~\ref{vsi_2},~\ref{vsi_3},~\ref{vsi_4}, and~\ref{vsi_5}), MMSI~\cite{mmsi} (Fig.~\ref{mmsi}), BLINK~\cite{blink} (Fig.~\ref{blink}), SAT~\cite{sat} (Fig.~\ref{sat_12} and~\ref{sat_34}), SITE (Fig.~\ref{site1} and~\ref{site2}), and MindCube~\cite{yin2025spatial} (Fig.~\ref{mindcube},~\ref{mindcube2}, and~\ref{mindcube3}).

Specifically, for examples 1-4 of VSI shown in Fig.~\ref{vsi_1},~\ref{vsi_2},~\ref{vsi_3}, and ~\ref{vsi_4}, \OurMethod{} successfully predicts the room area (32.4 m²), selects the correct navigation path (``Turn Left, Turn Right''), accurately estimates the distance between the washer and the toilet (0.8 m), and measures the longest dimension of the lamp (184 cm), jointly demonstrating its capability for scale-aware metric estimation and consistent multi-view 3D spatial reasoning across diverse indoor scenarios. However, in Example 5, the model predicts 2 suitcases instead of the ground-truth 3, which likely stems from the ambiguous everyday definition of the blue object, an item that can reasonably be interpreted as either a suitcase or a backpack, showing the vague categorization rather than limitation in spatial comprehension.

For Examples 1 and 2 of MMSI shown in Fig.~\ref{mmsi}, \OurMethod{} correctly infers that the dining table is located to the southwest of the stairs in Example 1, demonstrating object-to-object directional comprehension. However, in Example 2, it predicts ``front right'' instead of the ground-truth ``front left'', suggesting that viewpoint transformation remains challenging in complex indoor layouts.

For Examples 1-4 of BLINK shown in Fig.~\ref{blink}, \OurMethod{} correctly identifies closer objects from monocular depth cues, resolves cross-scene spatial relations, verifies object presence under partial occlusion, and determines whether the dog is on the motorcycle, jointly demonstrating its capability for relative depth estimation, and visually grounded verification in diverse real-world scenarios.

For examples 1 and 2 of SAT shown in Fig.~\ref{sat_12}, \OurMethod{} correctly detects object displacement across frames, identifying that the chair moves left and away from the camera and that the bottle moves left and towards the camera, demonstrating its capability for temporal spatial observations. For Examples 3 and 4 of SAT shown in Fig.~\ref{sat_34}, \OurMethod{} correctly infers the rotation of viewpoint, demonstrating its capability for camera motion understanding.

For examples 1 and 2 of SITE shown in Fig.~\ref{site1} and~\ref{site2}, \OurMethod{} correctly selects the images that match the given spatial descriptions (``bed in the bottom-left corner'' and ``motorbike at the top-right''), demonstrating its capability for understanding visual layouts between multiple misleading samples.

For examples 1–3 of MindCube shown in Fig.~\ref{mindcube},~\ref{mindcube2},
and~\ref{mindcube3}, \OurMethod{} correctly infers that the large recycling bin is behind the small one from a different viewpoint, predicts that turning right and moving forward would increase the distance to the vending machine, and identifies the white table as the nearest object to the left when repositioned at the sofa’s location, jointly demonstrating its capability for relative size understanding, occlusion-aware spatial observation, and viewpoint-consistent comprehension.

\subsection{Visualization on AD Benchmarks}
In this section, we present qualitative visualizations on autonomous driving benchmarks to illustrate the driving-related perception and reasoning capabilities of \OurMethod{}. Specifically, we include examples from four representative benchmarks: NuscenesQA~\cite{nuscenesqa} (Fig.~\ref{nuscenesqa1},~\ref{nuscenesqa2}, and~\ref{nuscenesqa3}), NuPlanQA~\cite{park2025nuplanqa} (Fig.~\ref{nuplan1}), MAPLM~\cite{maplm}, and LingoQA~\cite{lingoqa}, covering multi-view scene understanding, dynamic object reasoning, planning-aware comprehension, and language-grounded driving semantics.

Specifically, for examples 1-3 of NuscenesQA shown in Fig.~\ref{nuscenesqa1},~\ref{nuscenesqa2}, and~\ref{nuscenesqa3}, \OurMethod{} correctly identifies that no moving bicycles are visible, localizes the pedestrian positioned at the back-right of the bus, and counts 10 objects sharing the same status as the truck across multiple views. These examples demonstrate its capability for cross-view dynamic object recognition, spatially constrained localization, and counting in complex urban driving scenarios.

For example 1 of NuPlanQA shown in Fig.~\ref{nuplan1}, \OurMethod{} correctly identifies that a red traffic light is visible ahead by integrating multi-view scene observations with temporal vehicle state signals (velocity and steering angles over the past 1.5 seconds). This example illustrates its capability for kinematics understanding, and traffic signal perception in complex urban scenes.

For examples 1 and 2 of MAPLM shown in Fig.~\ref{maplm1} and~\ref{maplm2}, \OurMethod{} correctly classifies the road scenes as a normal city road and an undeveloped road, respectively, by integrating LiDAR bird’s-eye-view representations with multi-view images. These examples illustrate its capability for cross-modal scene understanding, road-type discrimination, and robust environmental interpretation under varying lighting and conditions in autonomous driving scenarios.

For examples 1 and 2 of LingoQA shown in Fig.~\ref{lingoqa1} and~\ref{lingoqa2}, \OurMethod{} correctly identifies a temporary traffic light displaying green across temporal frames and infers that the ego-vehicle is accelerating. However, although the output in example 2 is correct, the generated justification does not fully capture the most critical dynamic factor that the leading truck is moving off, indicating that while the model understands the scene, its explanation generation could benefit from stronger alignment with causal reasoning data.  These examples illustrate \OurMethod{}'s capability for temporal traffic signal recognition, causal scene interpretation, and basic justification for explainable autonomous driving decision understanding.

\subsection{Visualization on UAV Benchmarks}
In this section, we present qualitative visualizations on low-altitude UAV benchmarks to illustrate the aerial perception and spatial reasoning capability of \OurMethod{}. Specifically, we include examples from three representative benchmarks: UrbanVideo-Bench~\cite{zhao2025urbanvideo} (Fig.~\ref{urbanvideo}), AirCopBench~\cite{aircopbench} (Fig.~\ref{aircop12} and~\ref{aircop34}), and AVI-Math~\cite{zhou2025multimodal} (Fig.~\ref{avimath14} and~\ref{avimath57}).

For examples 1 and 2 of UrbanVideo-Bench shown in Fig.~\ref{urbanvideo}, \OurMethod{} correctly reconstructs the upward-then-horizontal flight trajectory toward the largest nearby intersection and identifies the biggest crossroads within the field of view at the current location. These examples demonstrate its capability for large-scale landmark recognition, and viewpoint-consistent spatial comprehension from multi-frame drone observations.

For examples 1 and 2 of AirCopBench shown in Fig.~\ref{aircop12}, \OurMethod{} correctly determines that the white van is in the same lane as the white truck but ahead of it, and identifies the white car approaching the intersection from the bottom as the vehicle directly opposite the yellow bus. For examples 3 and 4 of AirCopBench shown in Fig.~\ref{aircop34}, \OurMethod{} correctly infers that there are no pedestrians on the zebra crossing and identifies the red car as being in a parallel lane to the white van. These examples illustrate its capability for aerial intersection monitoring, traffic participant interaction analysis, and lane-level relational reasoning in complex urban road topologies.

For Examples 1–4 of AVI-Math shown in Fig.~\ref{avimath14}, \OurMethod{} correctly confirms the presence of Monica’s car and identifies two vehicles requiring regular refueling, while more notably, in Examples 3 and 4, it predicts the center coordinates of the vehicle within a specified ground-plane camera coordinate system. In Example 3, it accurately outputs the correct coordinate ([-23, -7, 49]), and in Example 4, although the predicted value ([0, 9.54, 30.28]) is not exactly identical to the ground truth ([0, 9, 32]), it remains geometrically consistent under large-resolution (4000 $\times$ 2250) aerial images. These examples highlight its capability for structured numerical reasoning under explicit coordinate systems, geometry-aware point localization, and complex spatial computation from aerial viewpoints. For Examples 5–7 of AVI-Math shown in Fig.~\ref{avimath57}, \OurMethod{} correctly estimates the drone’s altitude (30 m) and reasonably approximates the time required to reach the closest vehicle based on the given flight speed, while also predicting pixel-level coordinates of a specified 3D projection point. Although the predicted image coordinates are not exactly identical to the ground truth, they remain close in magnitude and direction, reflecting geometrically coherent reasoning under the defined camera model. These examples further demonstrate its capability for metric estimation, coordinate transformation, and geometry-aware understanding from aerial imagery.

\subsection{Visualization on Embodied Benchmarks}

In this section, we present qualitative visualizations on embodied benchmarks to illustrate the embodied perception, action prediction, and interaction understanding capabilities of \OurMethod{}. We include examples from three representative benchmarks: RoboVQA~\cite{robovqa} (Fig.~\ref{robovqa1},~\ref{robovqa2}, and~\ref{robovqa3}), OpenEQA~\cite{majumdar2024openeqa} (Fig.~\ref{openeqa1} and~\ref{openeqa2}), and EgoPlan-Bench2~\cite{egoplanbench} (Fig.~\ref{egoplan1} and~\ref{egoplan2}), covering egocentric manipulation understanding, embodied question answering, and goal-oriented planning in interactive environments.

For examples 1–3 of RoboVQA shown in Fig.~\ref{robovqa1},~\ref{robovqa2}, and~\ref{robovqa3}, \OurMethod{} correctly infers that the orange is placed into the bowl, recognizes that the feasible next action involves moving the chair away from the table (despite generating a more general description ``adjust the chair''), and identifies that the pen is placed into the stand. These examples demonstrate its capability for temporal action understanding, object affordance recognition, and fine-grained manipulation reasoning from egocentric visual sequences in embodied scenarios.

For examples 1 and 2 of OpenEQA shown in Fig.~\ref{openeqa1} and~\ref{openeqa2}, \OurMethod{} correctly identifies the bedroom as the room containing the car by integrating multi-view indoor observations, and infers that cotton swabs are most likely stored in the sink cabinet within the bathroom. These examples illustrate its capability for room-level localization, indoor topology understanding, and commonsense reasoning about object-function associations in embodied navigation scenarios.

For examples 1 and 2 of EgoPlan-Bench2 shown in Fig.~\ref{egoplan1} and~\ref{egoplan2}, \OurMethod{} correctly infers the next required action by integrating prior video progress with the current egocentric observation, identifying ``turn off sink tap'' in the kitchen scenario and ``pick water bottle'' in the rowing scenario. These examples demonstrate its capability for long-horizon procedural reasoning, temporal context integration, and goal-directed action prediction in dynamic embodied tasks.

\begin{figure*}
     \centering
    \includegraphics[width=1\textwidth]{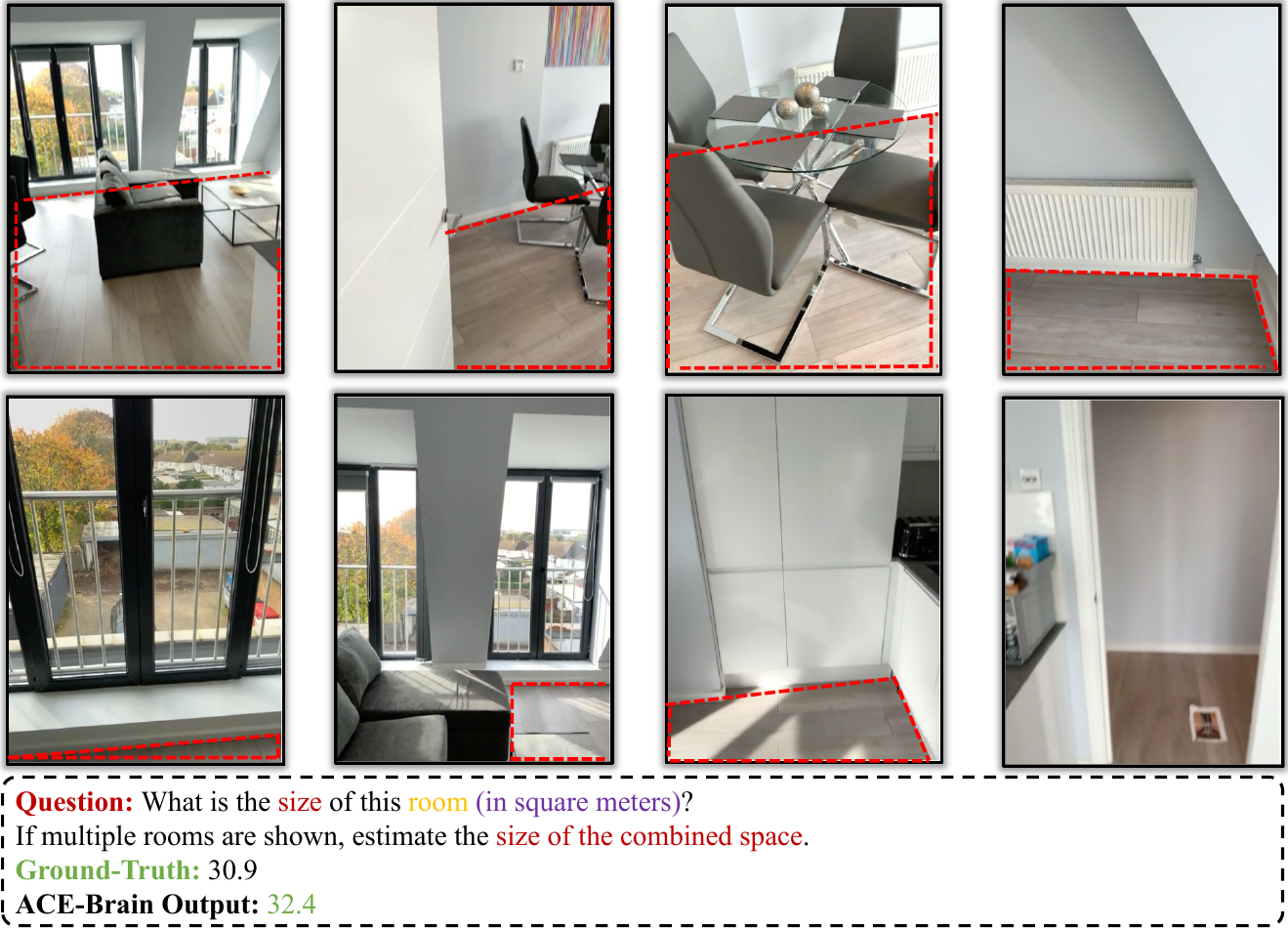}
    \caption{\textbf{Example 1 of VSI Benchmark.}}
    \label{vsi_1}
\end{figure*}
\begin{figure*}
     \centering
    \includegraphics[width=1\textwidth]{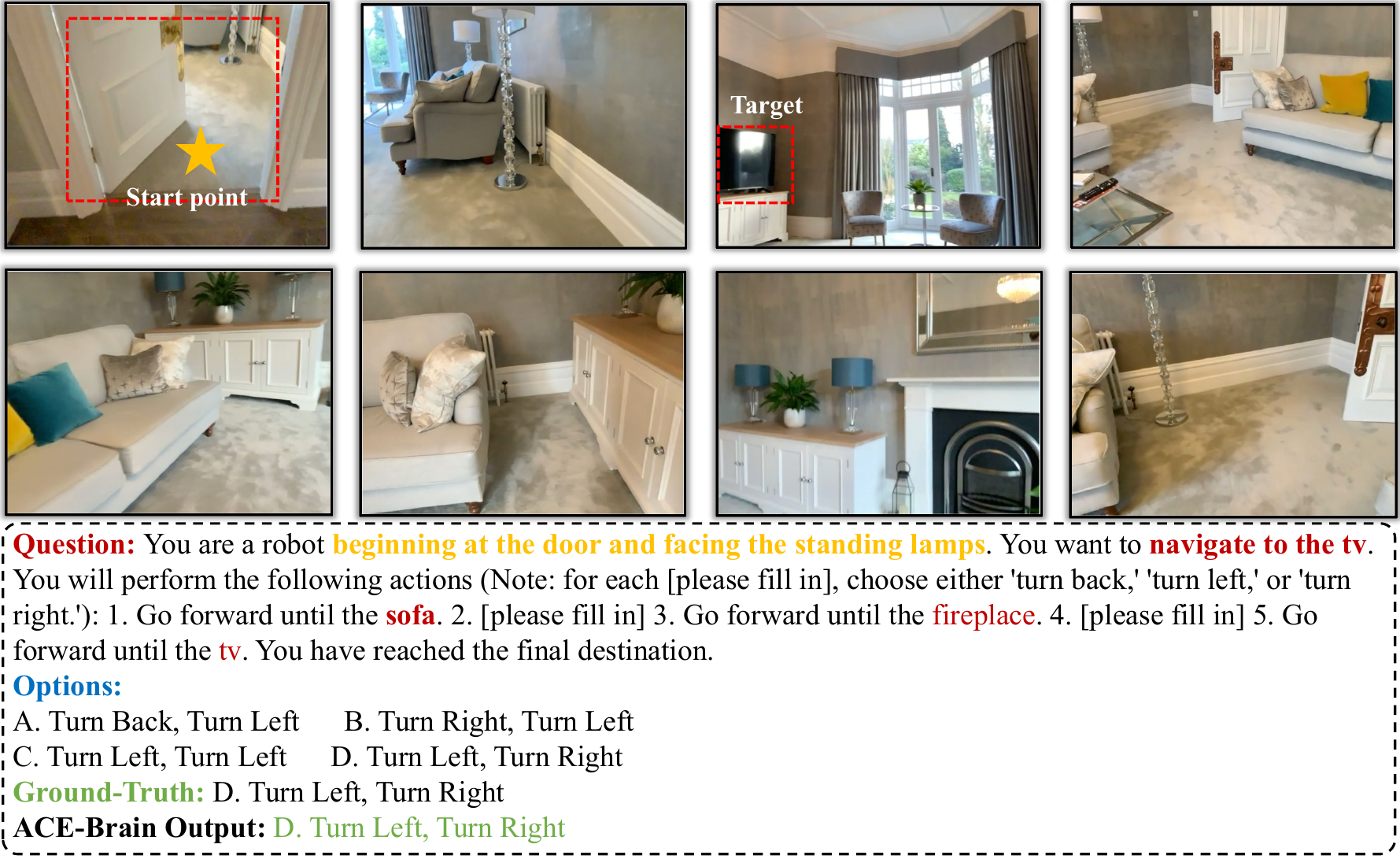}
    \caption{\textbf{Example 2 of VSI Benchmark.}}
    \label{vsi_2}
\end{figure*}
\begin{figure*}
     \centering
    \includegraphics[width=1\textwidth]{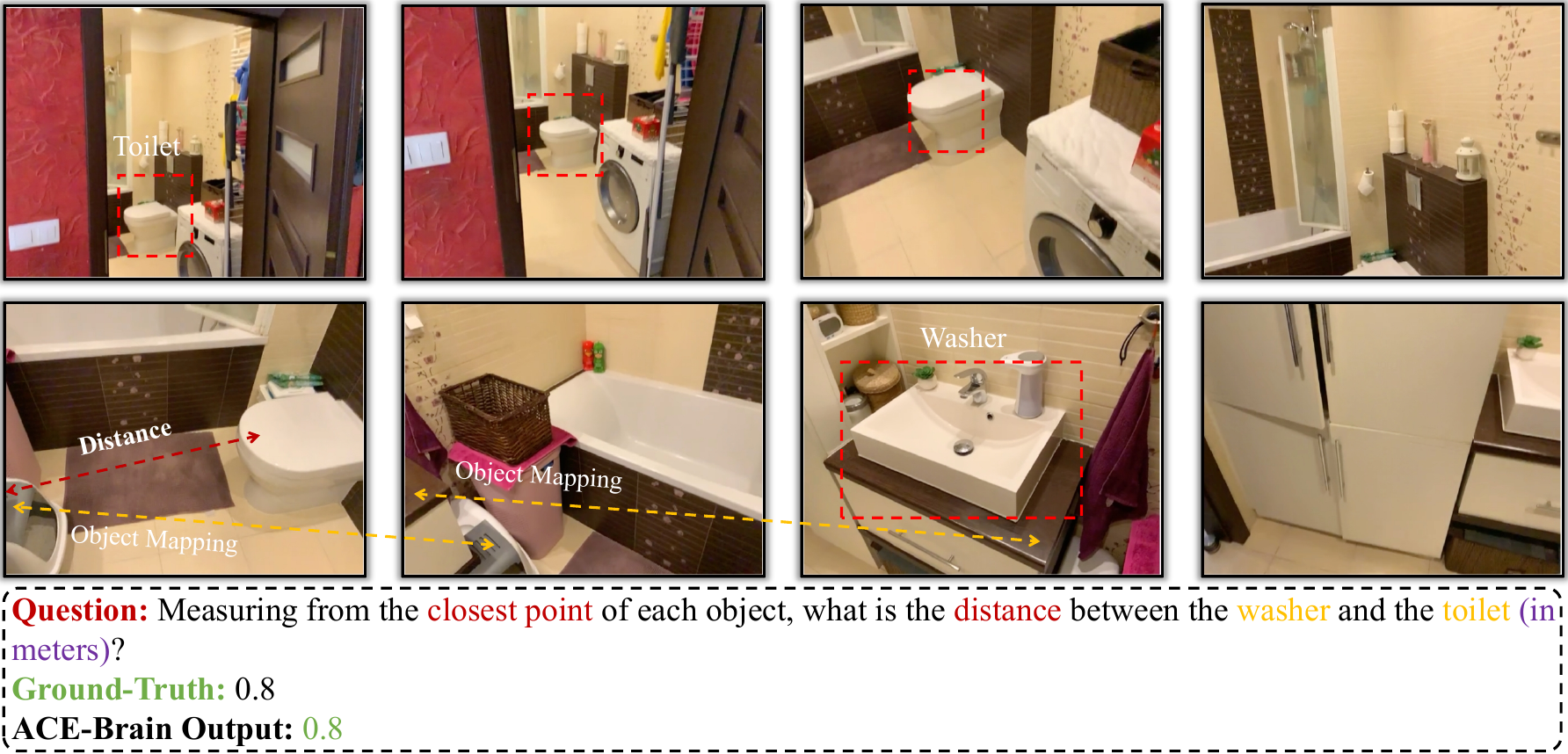}
    \caption{\textbf{Example 3 of VSI Benchmark.} }
    \label{vsi_3}
\end{figure*}
\begin{figure*}
     \centering
    \includegraphics[width=1\textwidth]{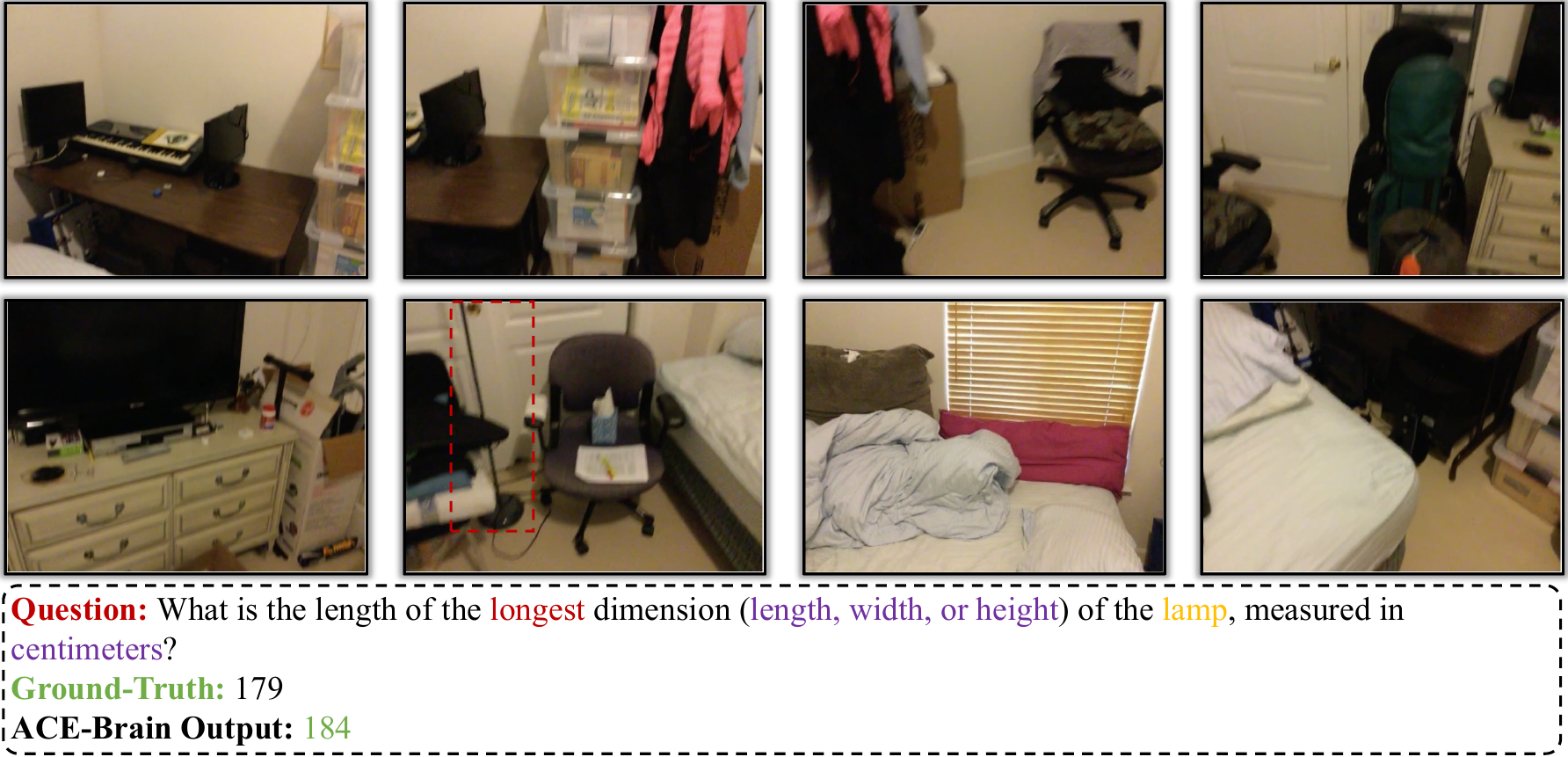}
    \caption{\textbf{Example 4 of VSI Benchmark.} }
    \label{vsi_4}
\end{figure*}
\begin{figure*}
     \centering
    \includegraphics[width=1\textwidth]{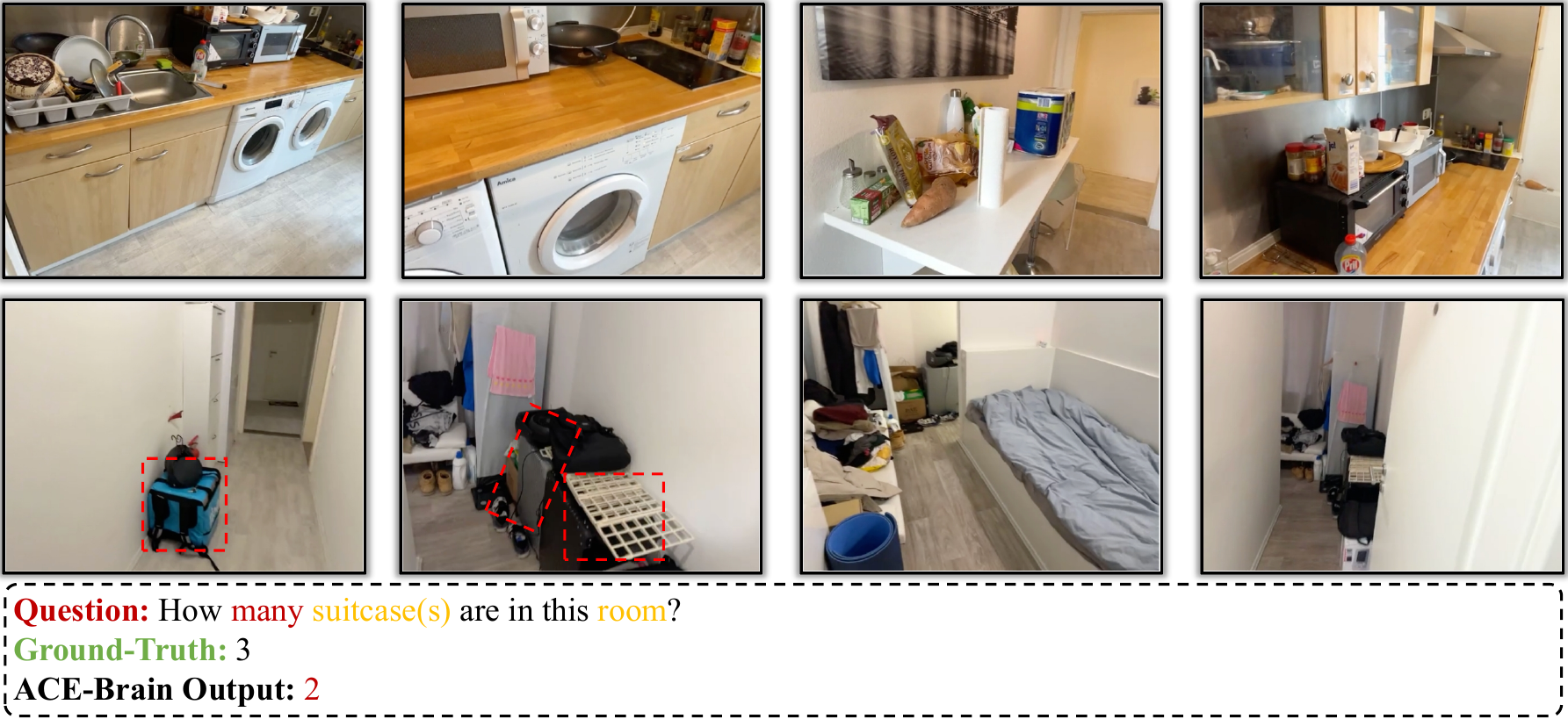}
    \caption{\textbf{Example 5 of VSI Benchmark.} }
    \label{vsi_5}
\end{figure*}
\begin{figure*}
     \centering
    \includegraphics[width=1\textwidth]{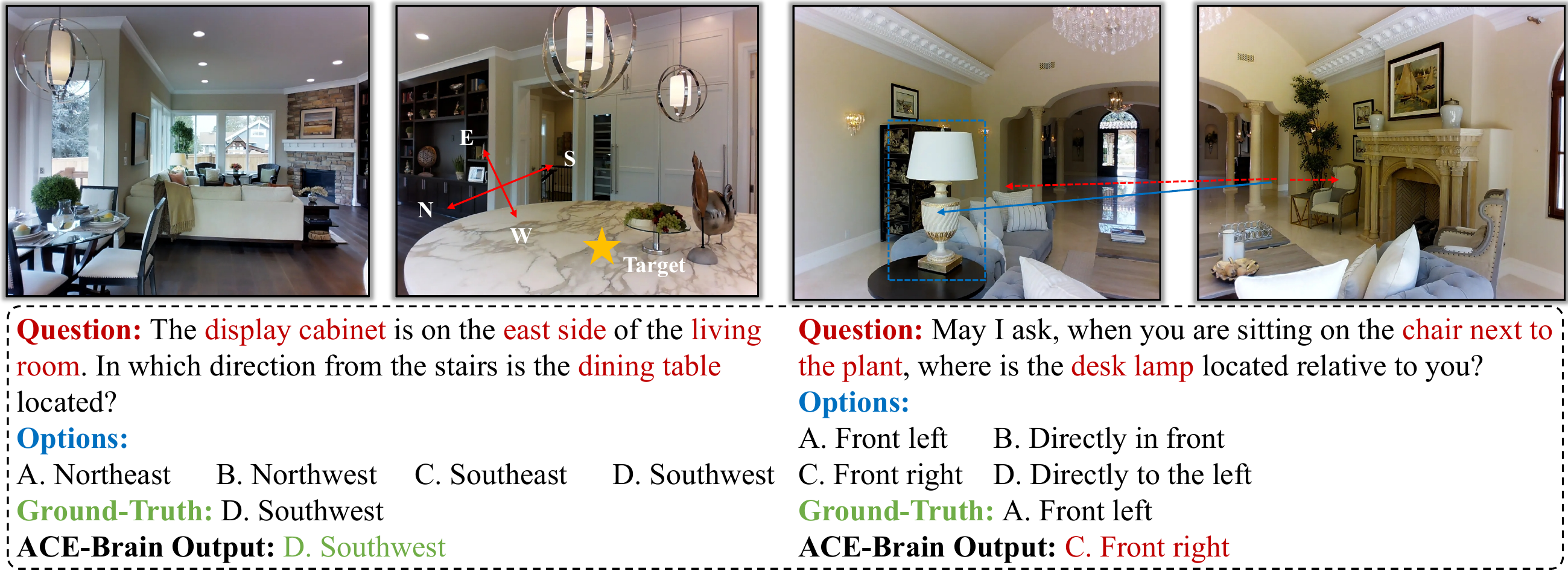}
    \caption{\textbf{Example 1-2 of MMSI Benchmark.}}
    \label{mmsi}
\end{figure*}

\begin{figure*}
     \centering
    \includegraphics[width=1\textwidth]{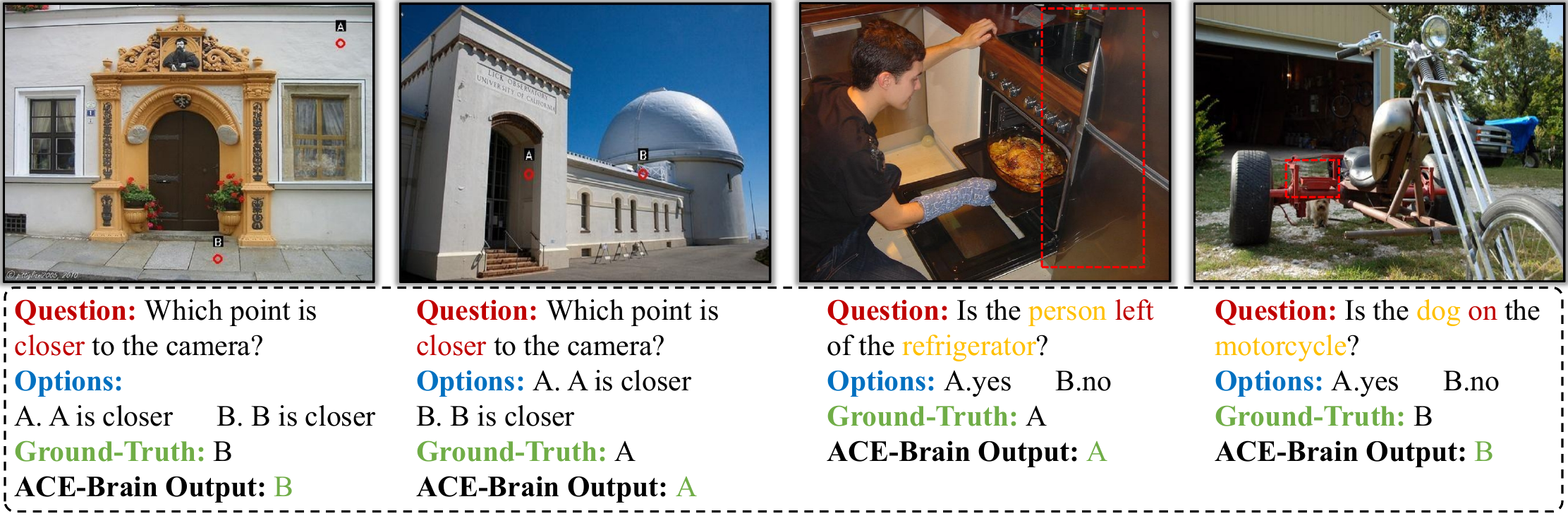}
    \caption{\textbf{Example 1-4 of BLINK Benchmark.}}
    \label{blink}
\end{figure*}

\begin{figure*}
     \centering
    \includegraphics[width=1\textwidth]{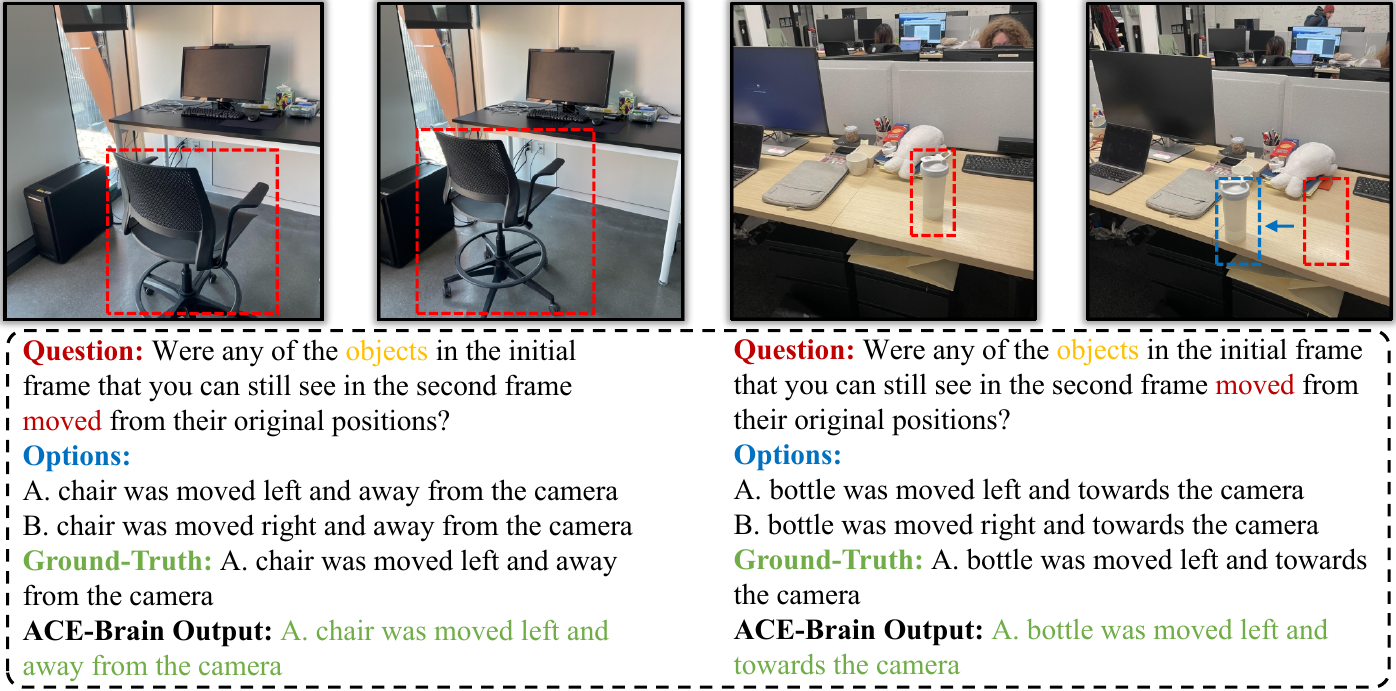}
    \caption{\textbf{Example 1-2 of SAT Benchmark.}}
    \label{sat_12}
\end{figure*}
\begin{figure*}
     \centering
    \includegraphics[width=1\textwidth]{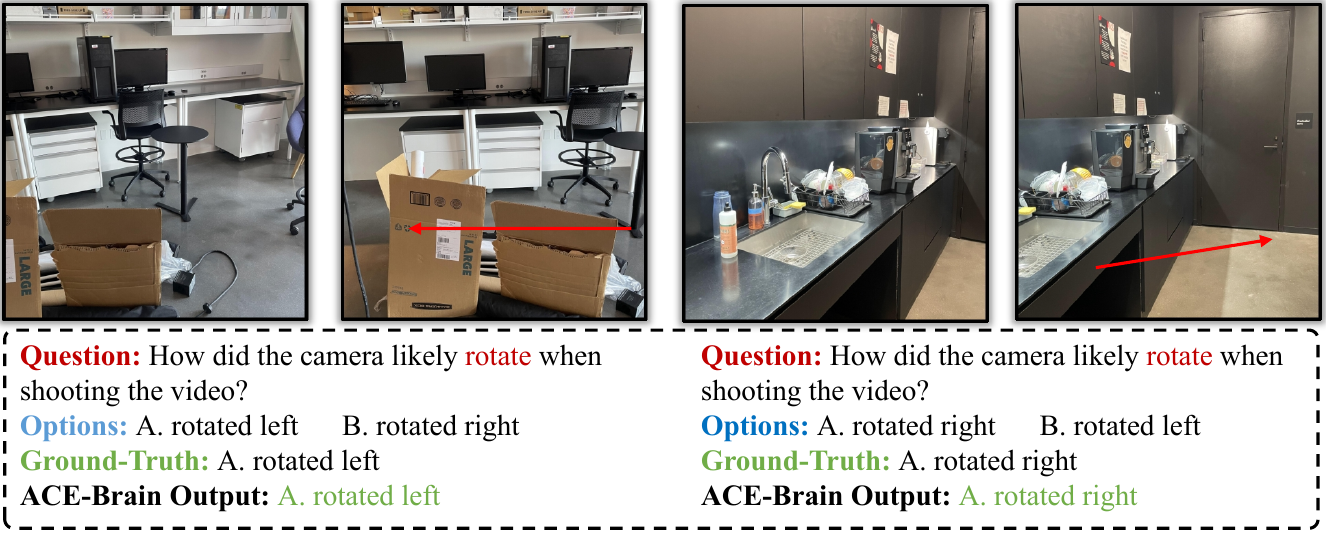}
    \caption{\textbf{Example 3-4 of SAT Benchmark.}}
    \label{sat_34}
\end{figure*}
\begin{figure*}
     \centering
    \includegraphics[width=1\textwidth]{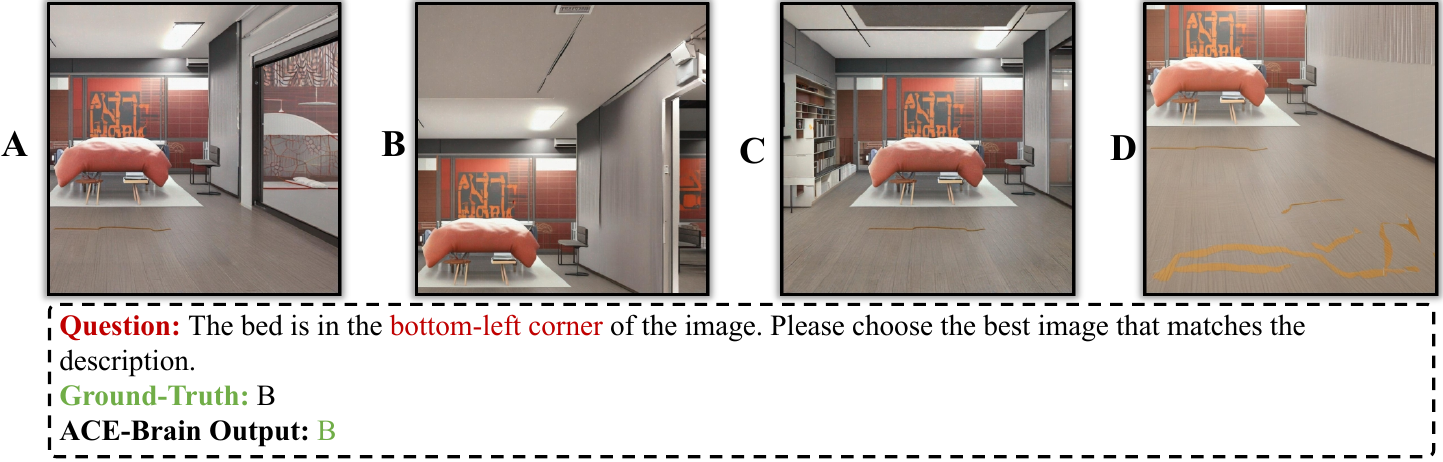}
    \caption{\textbf{Example 1 of SITE Benchmark.}}
    \label{site1}
\end{figure*}
\begin{figure*}
     \centering
    \includegraphics[width=0.99\textwidth]{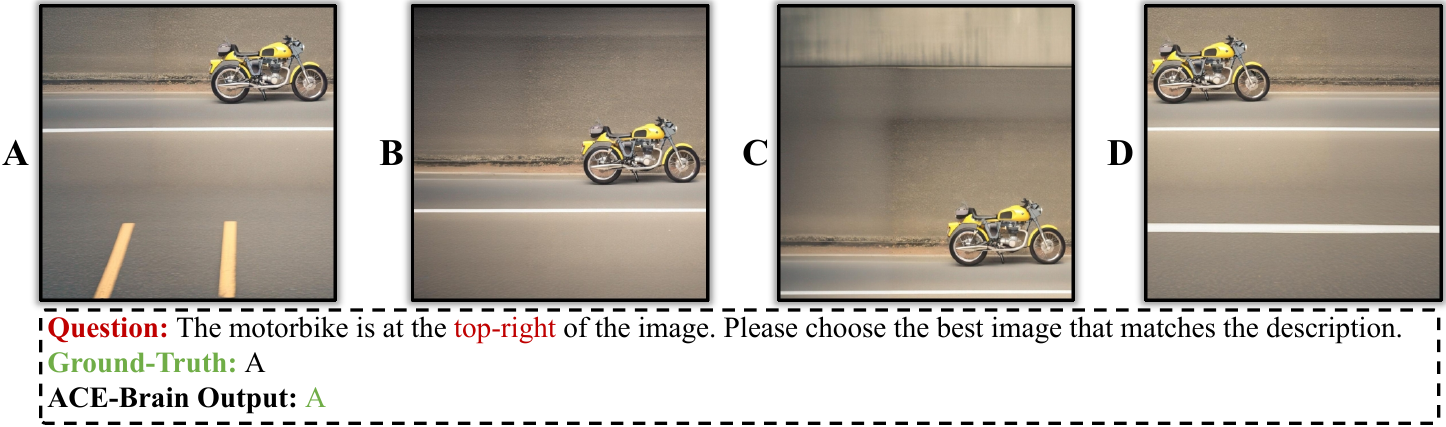}
    \caption{\textbf{Example 2 of SITE Benchmark.}}
    \label{site2}
\end{figure*}
\begin{figure*}
     \centering
    \includegraphics[width=1\textwidth]{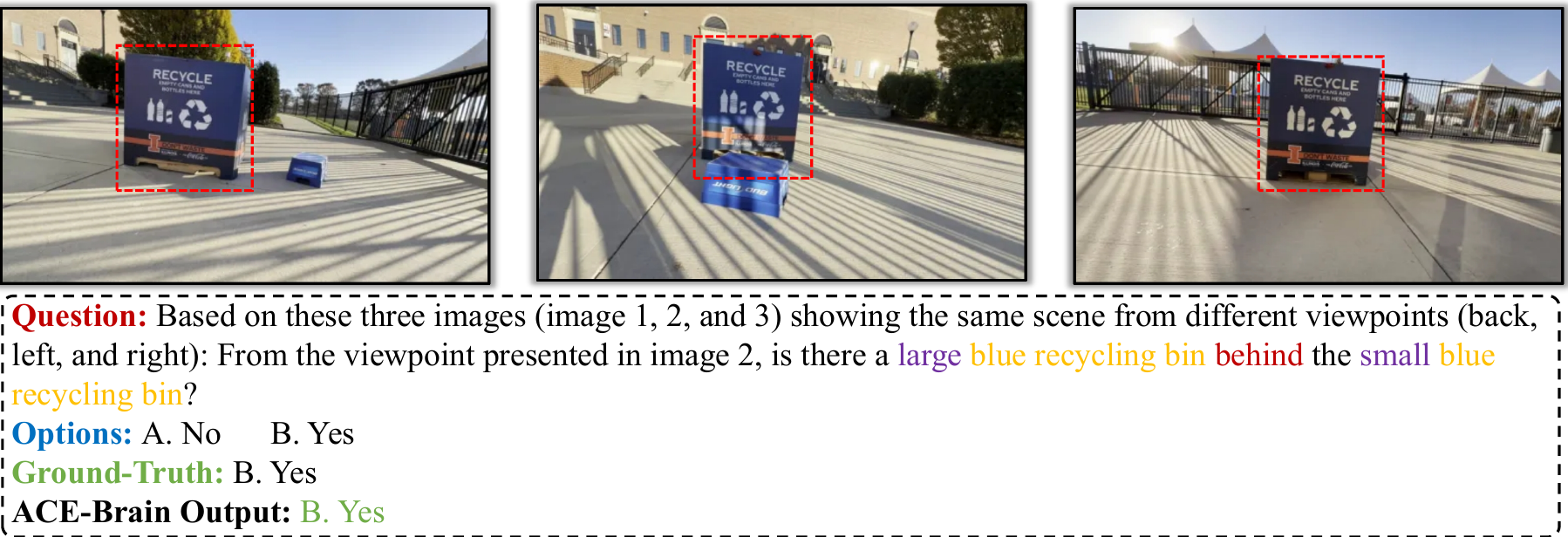}
    \caption{\textbf{Example 1 of MindCube Benchmark.}}
    \label{mindcube}
\end{figure*}
\begin{figure*}
     \centering
    \includegraphics[width=1\textwidth]{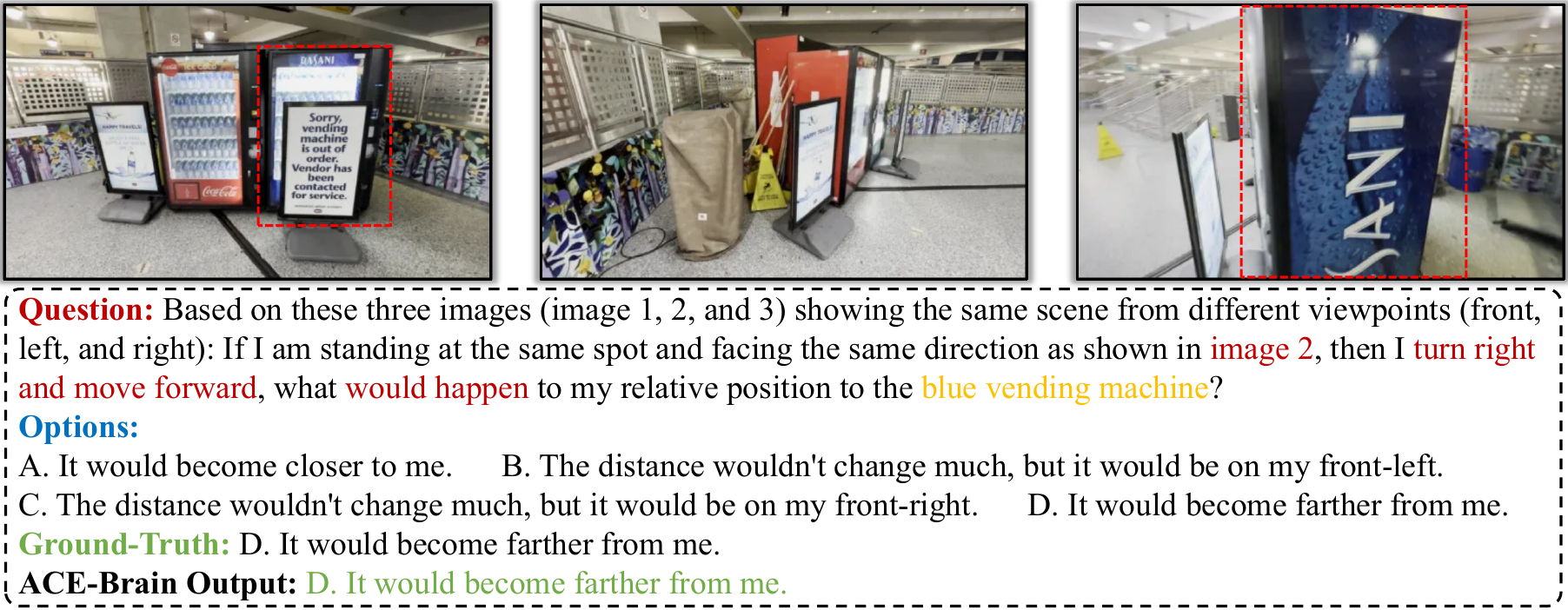}
    \caption{\textbf{Example 2 of MindCube Benchmark.}}
    \label{mindcube2}
\end{figure*}
\begin{figure*}
     \centering
    \includegraphics[width=1\textwidth]{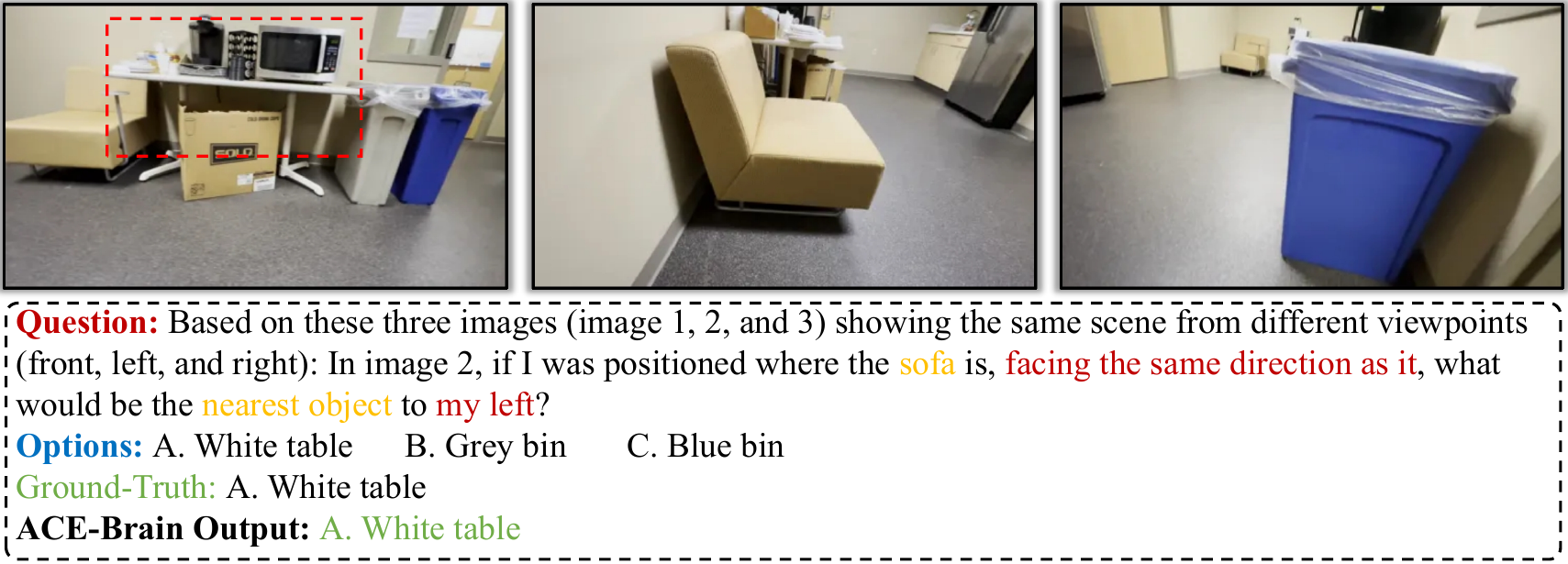}
    \caption{\textbf{Example 3 of MindCube Benchmark.}}
    \label{mindcube3}
\end{figure*}

\begin{figure*}
     \centering
    \includegraphics[width=1\textwidth]{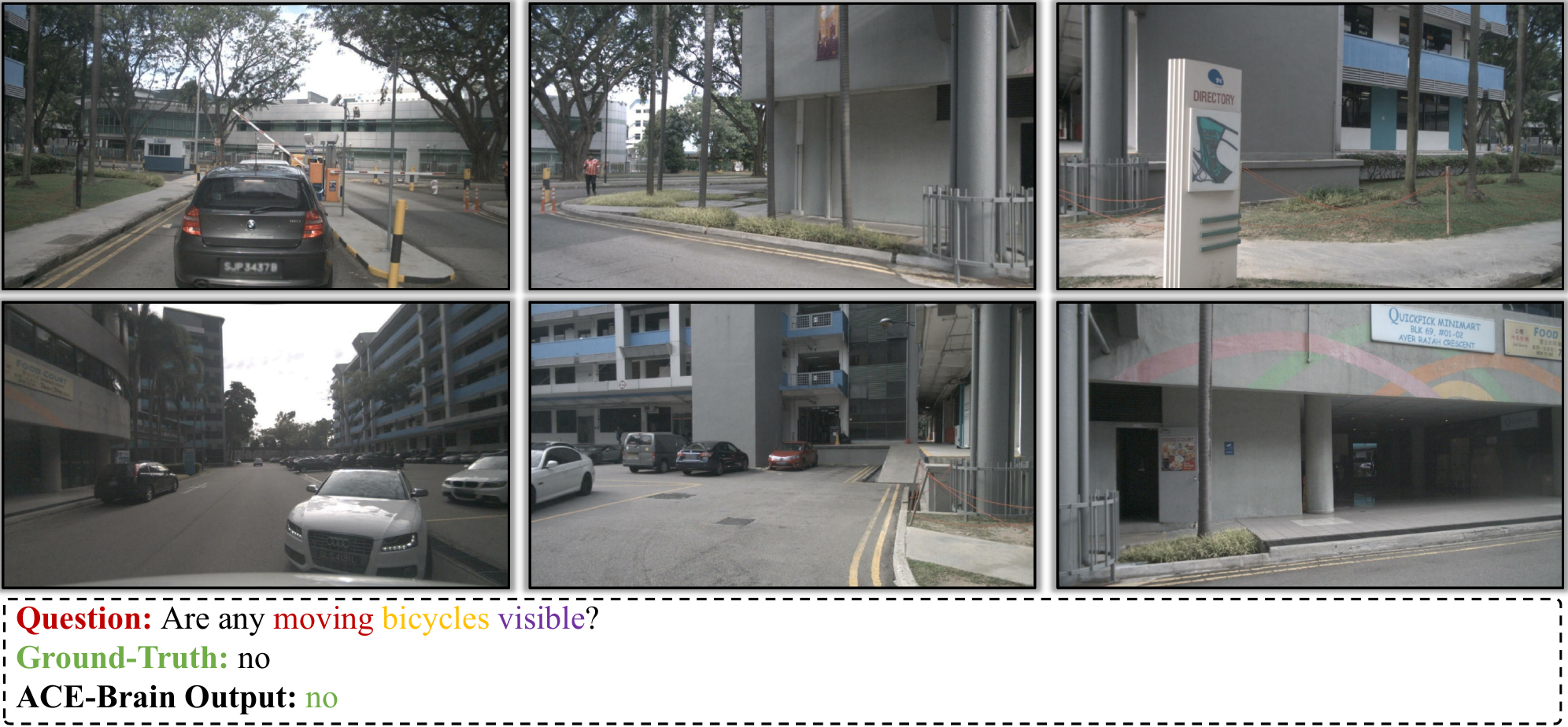}
    \caption{\textbf{Example 1 of NuscenesQA Benchmark.}}
    \label{nuscenesqa1}
\end{figure*}
\begin{figure*}
     \centering
    \includegraphics[width=1\textwidth]{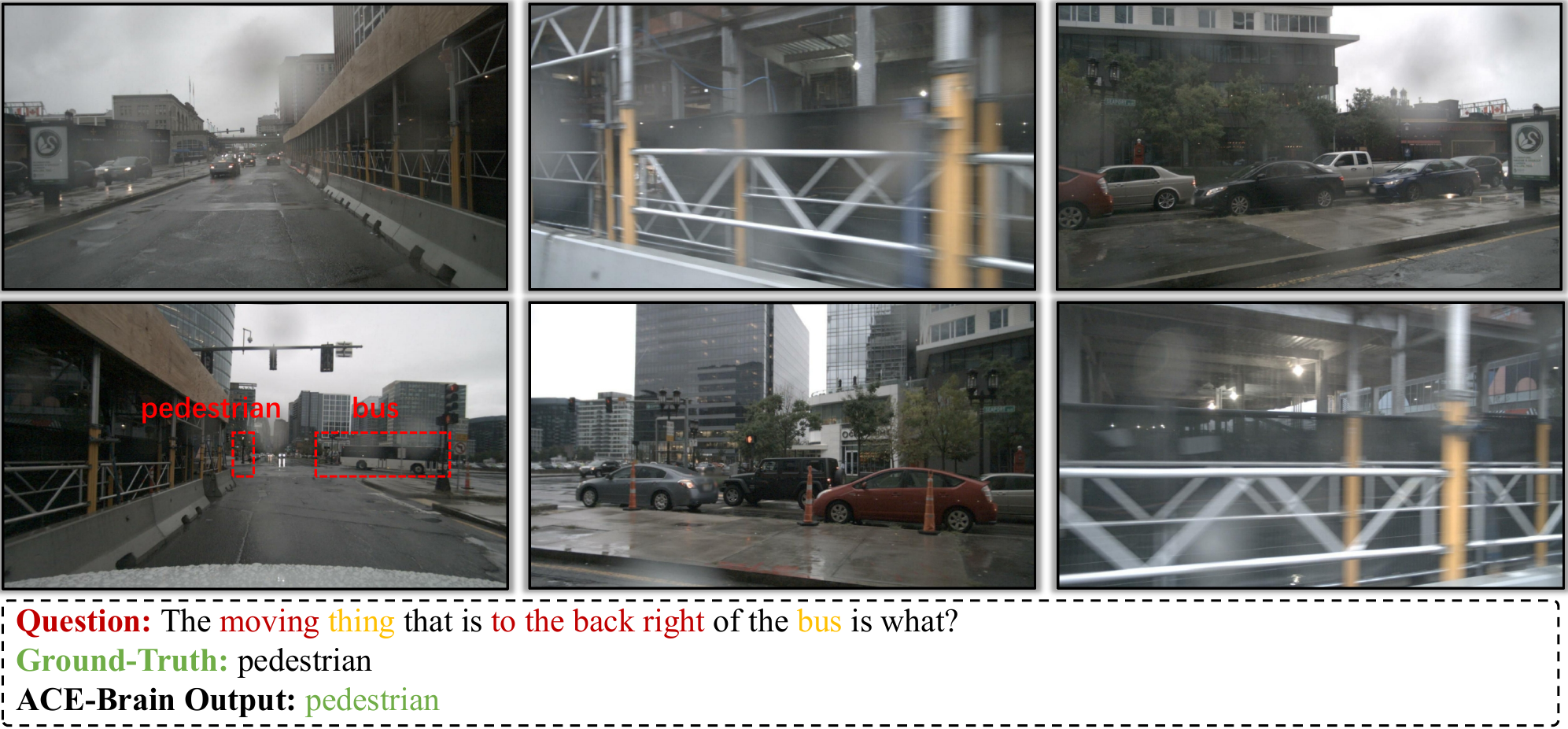}
    \caption{\textbf{Example 2 of NuscenesQA Benchmark.}}
    \label{nuscenesqa2}
\end{figure*}
\begin{figure*}
     \centering
    \includegraphics[width=1\textwidth]{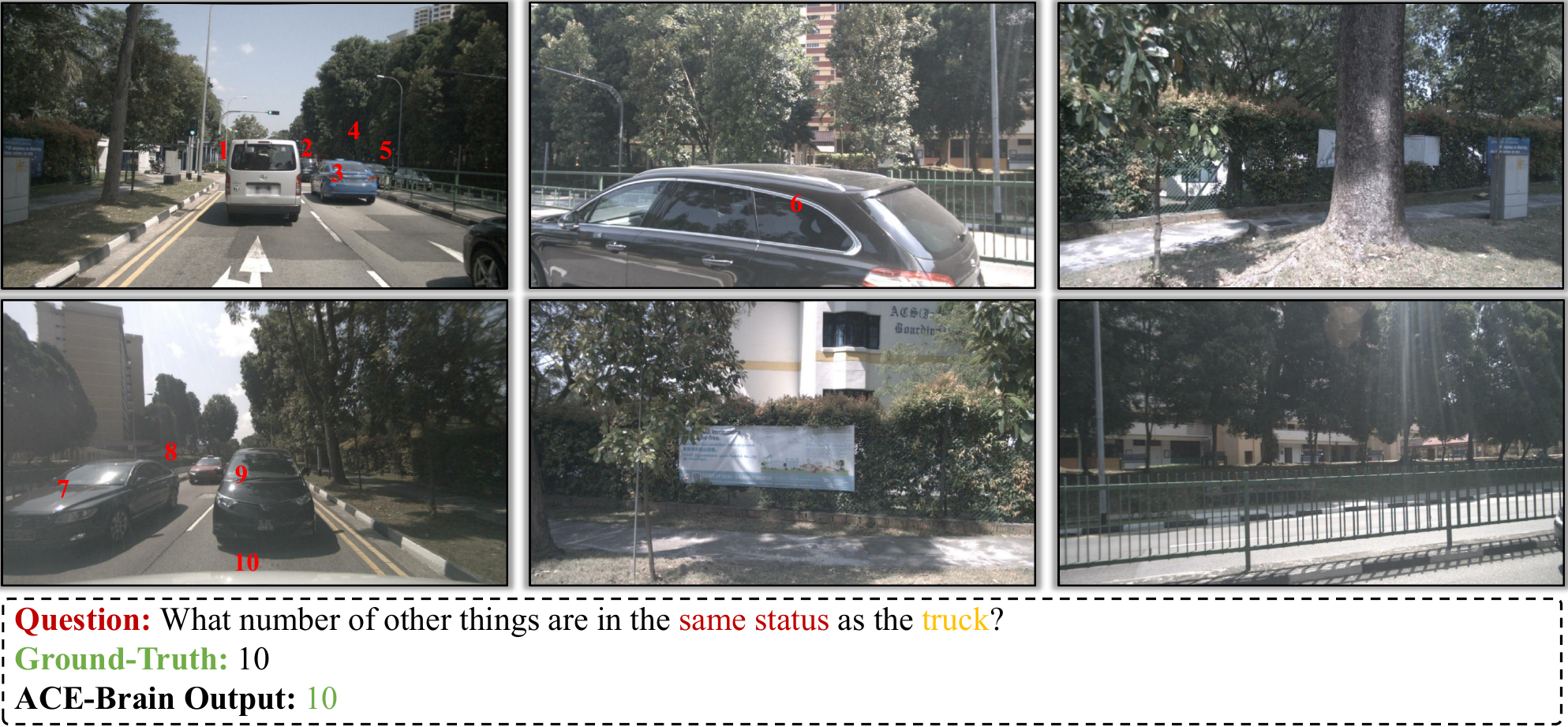}
    \caption{\textbf{Example 3 of NuscenesQA Benchmark.}}
    \label{nuscenesqa3}
\end{figure*}
\begin{figure*}
     \centering
    \includegraphics[width=1\textwidth]{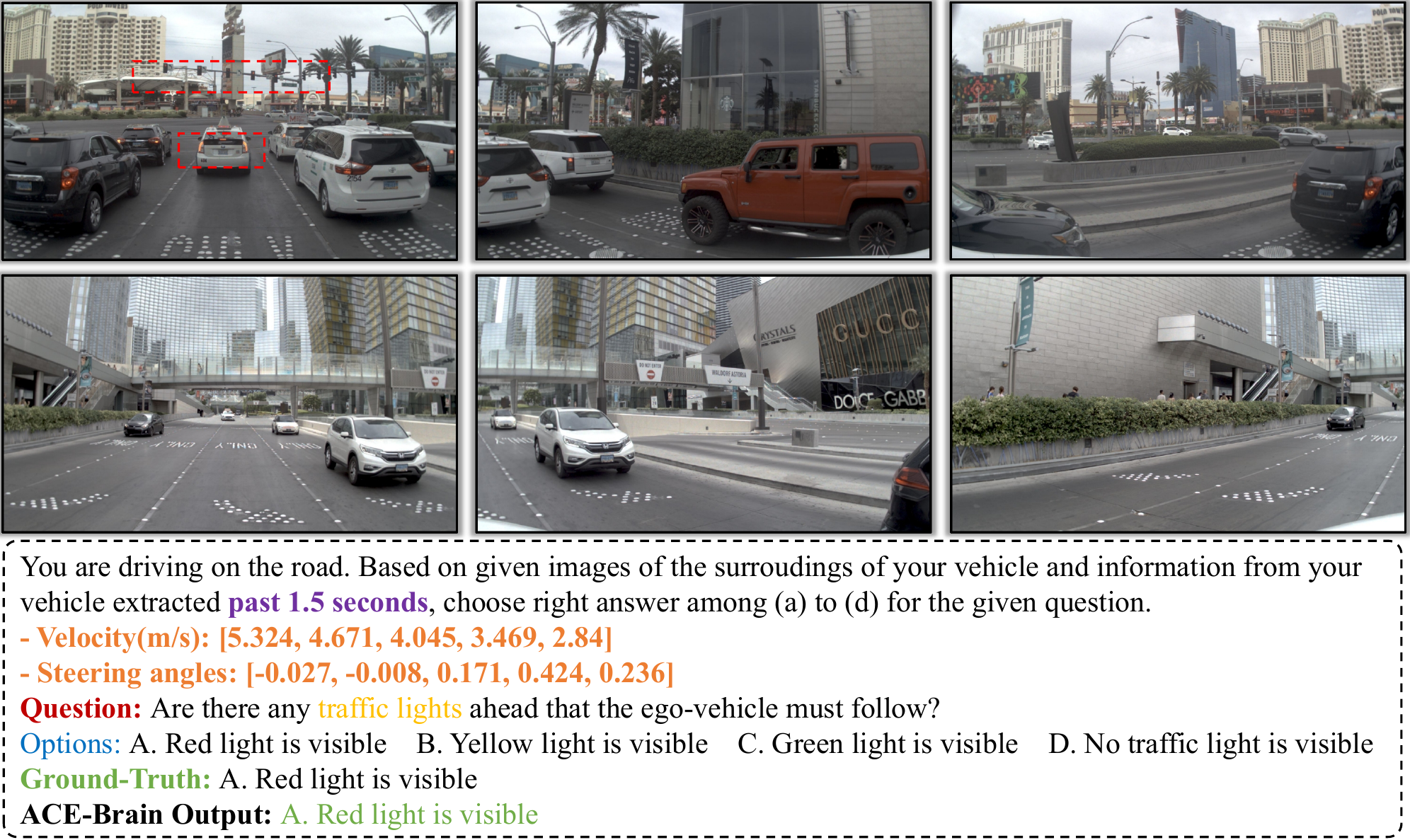}
    \caption{\textbf{Example 1 of NuPlanQA Benchmark.}}
    \label{nuplan1}
\end{figure*}

\begin{figure*}
     \centering
    \includegraphics[width=1\textwidth]{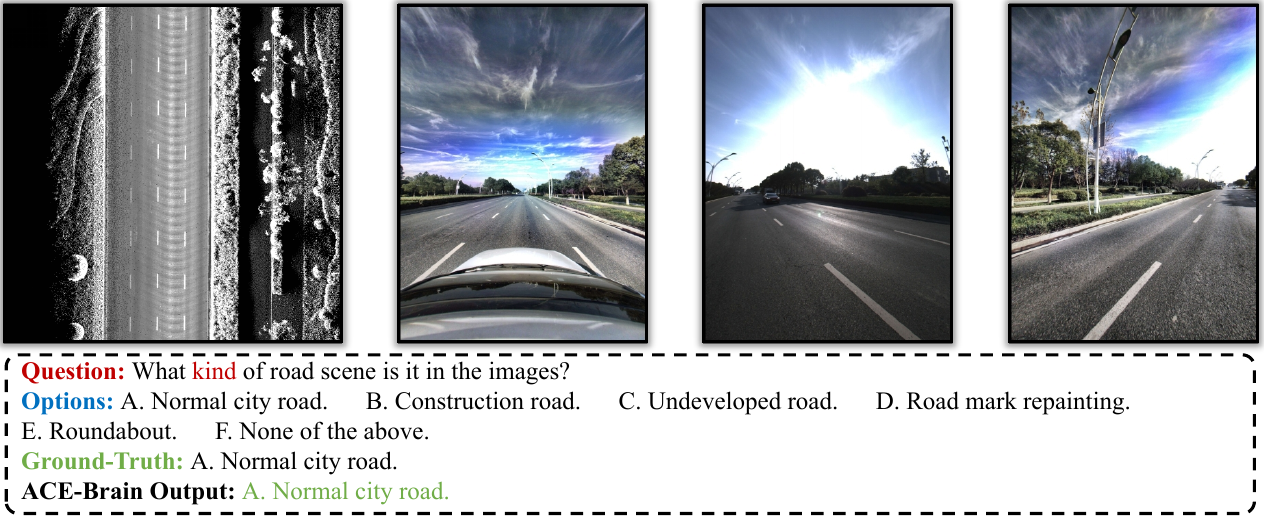}
    \caption{\textbf{Example 1 of MAPLM Benchmark.}}
    \label{maplm1}
\end{figure*}
\begin{figure*}
     \centering
    \includegraphics[width=1\textwidth]{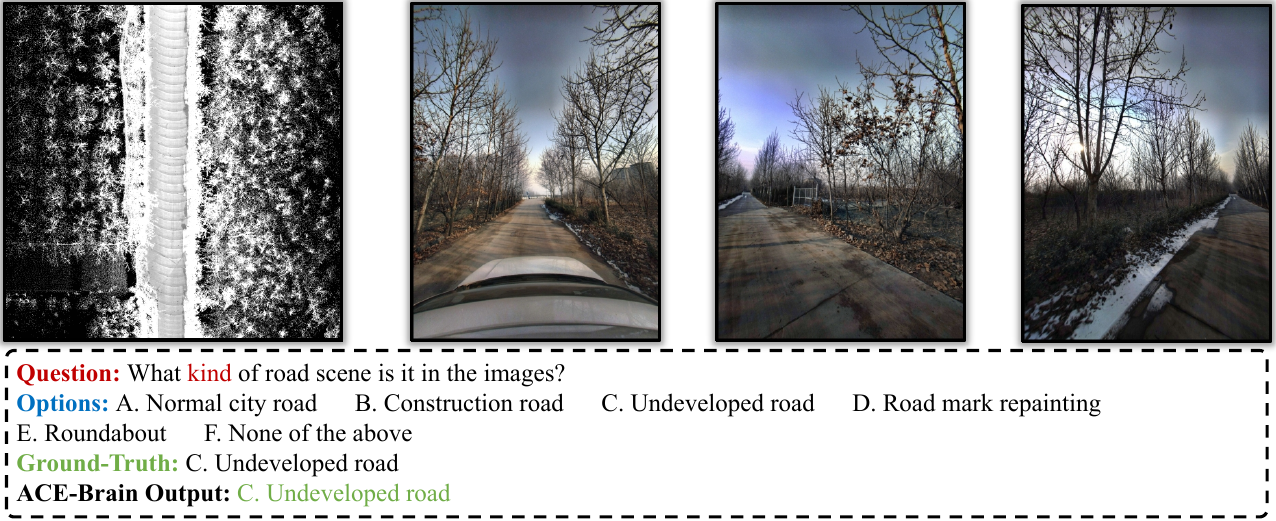}
    \caption{\textbf{Example 2 of MAPLM Benchmark.}}
    \label{maplm2}
\end{figure*}
\begin{figure*}
     \centering
    \includegraphics[width=1\textwidth]{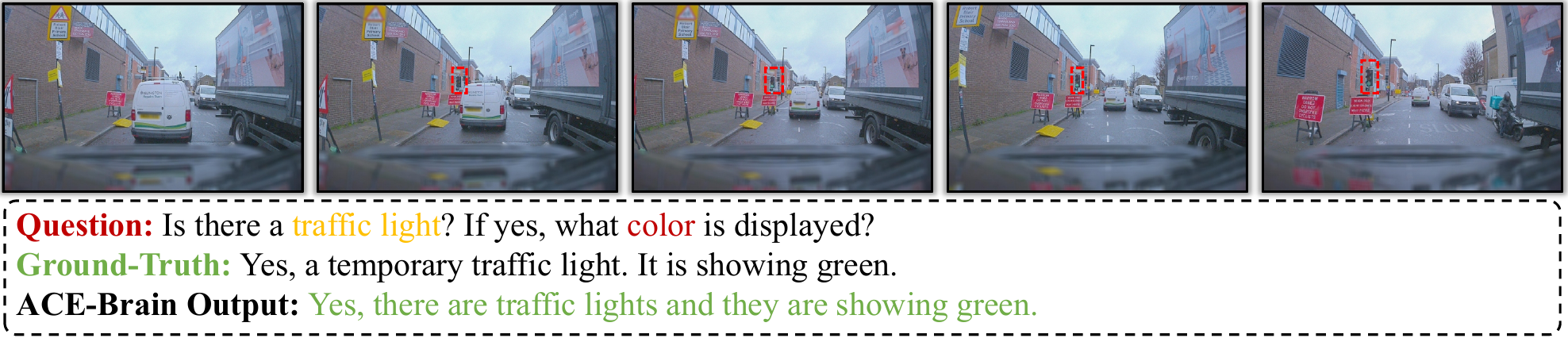}
    \caption{\textbf{Example 1 of LingoQA Benchmark.}}
    \label{lingoqa1}
\end{figure*}
\begin{figure*}
     \centering
    \includegraphics[width=1\textwidth]{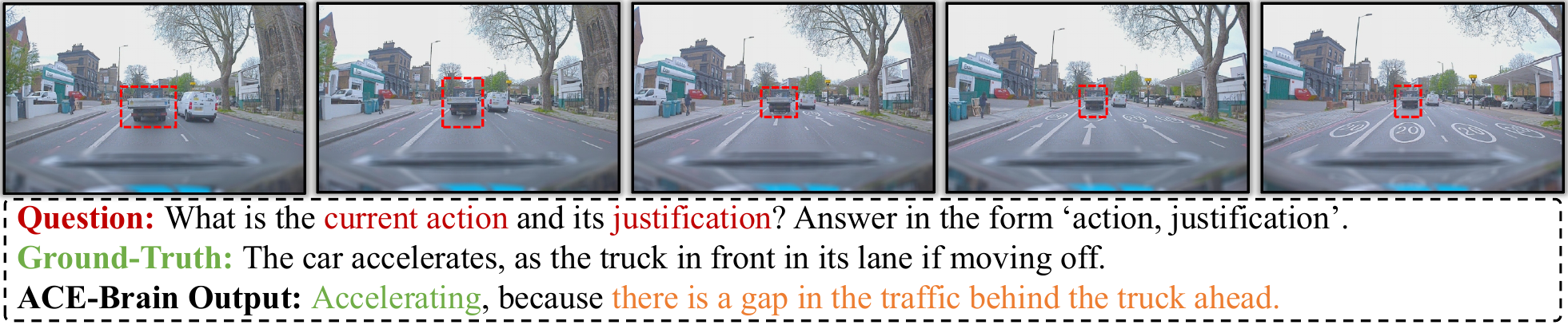}
    \caption{\textbf{Example 2 of LingoQA Benchmark.}}
    \label{lingoqa2}
\end{figure*}
\begin{figure*}
     \centering
    \includegraphics[width=1\textwidth]{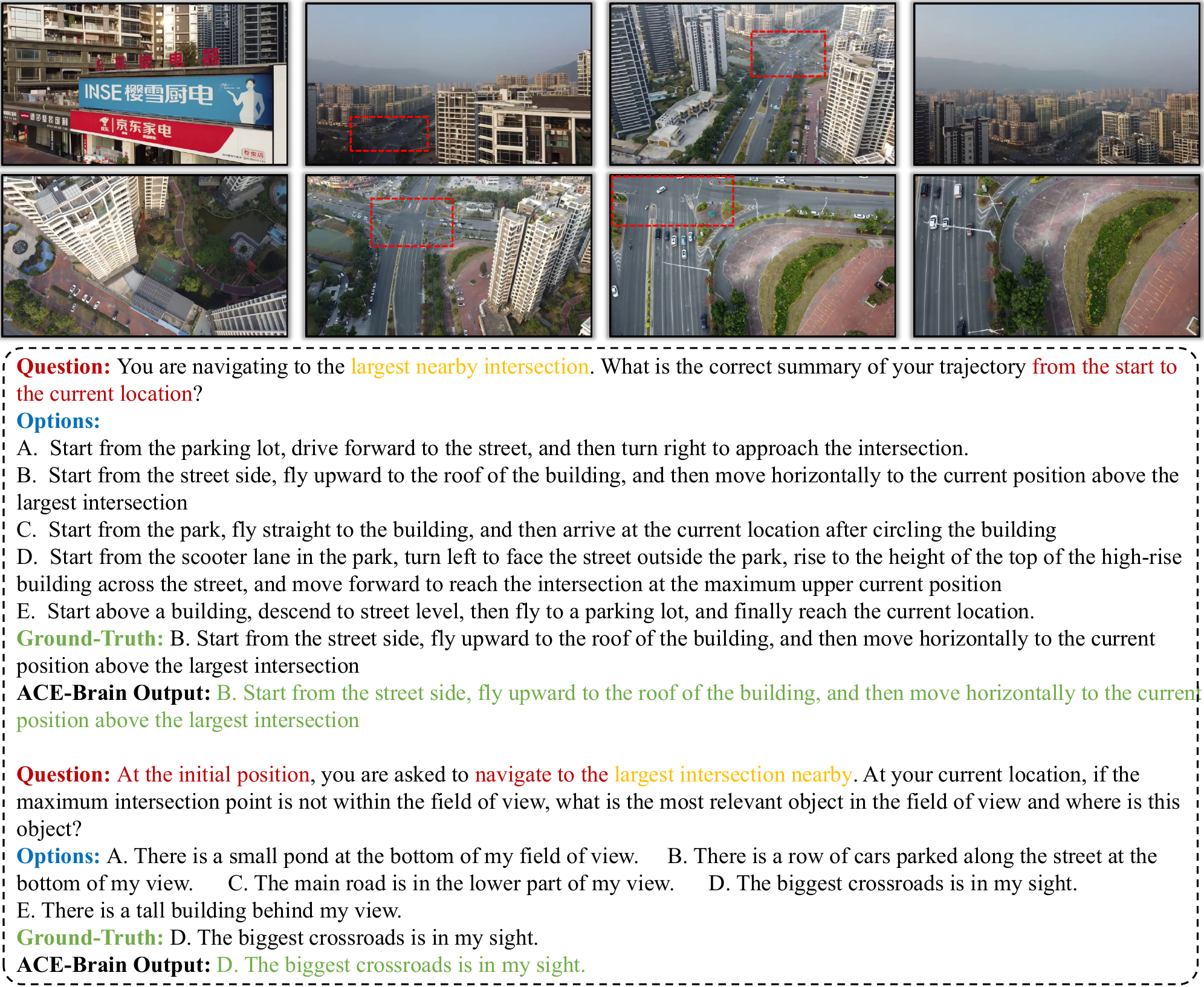}
    \caption{\textbf{Example 1 and 2 of UrbanVideo Benchmark.}}
    \label{urbanvideo}
\end{figure*}
\begin{figure*}
     \centering
    \includegraphics[width=1\textwidth]{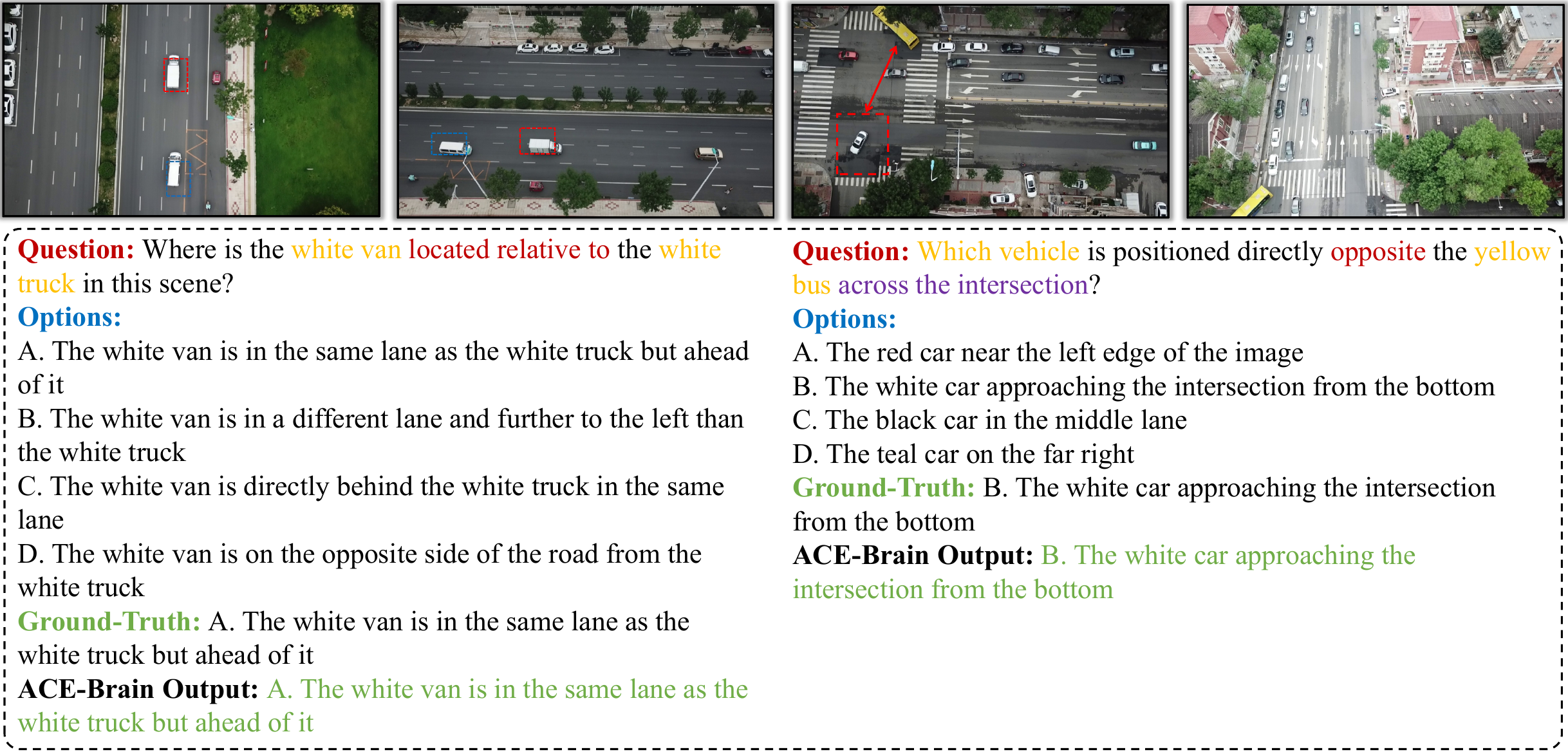}
    \caption{\textbf{Example 1 and 2 of AirCopBench.}}
    \label{aircop12}
\end{figure*}

\begin{figure*}
     \centering
    \includegraphics[width=1\textwidth]{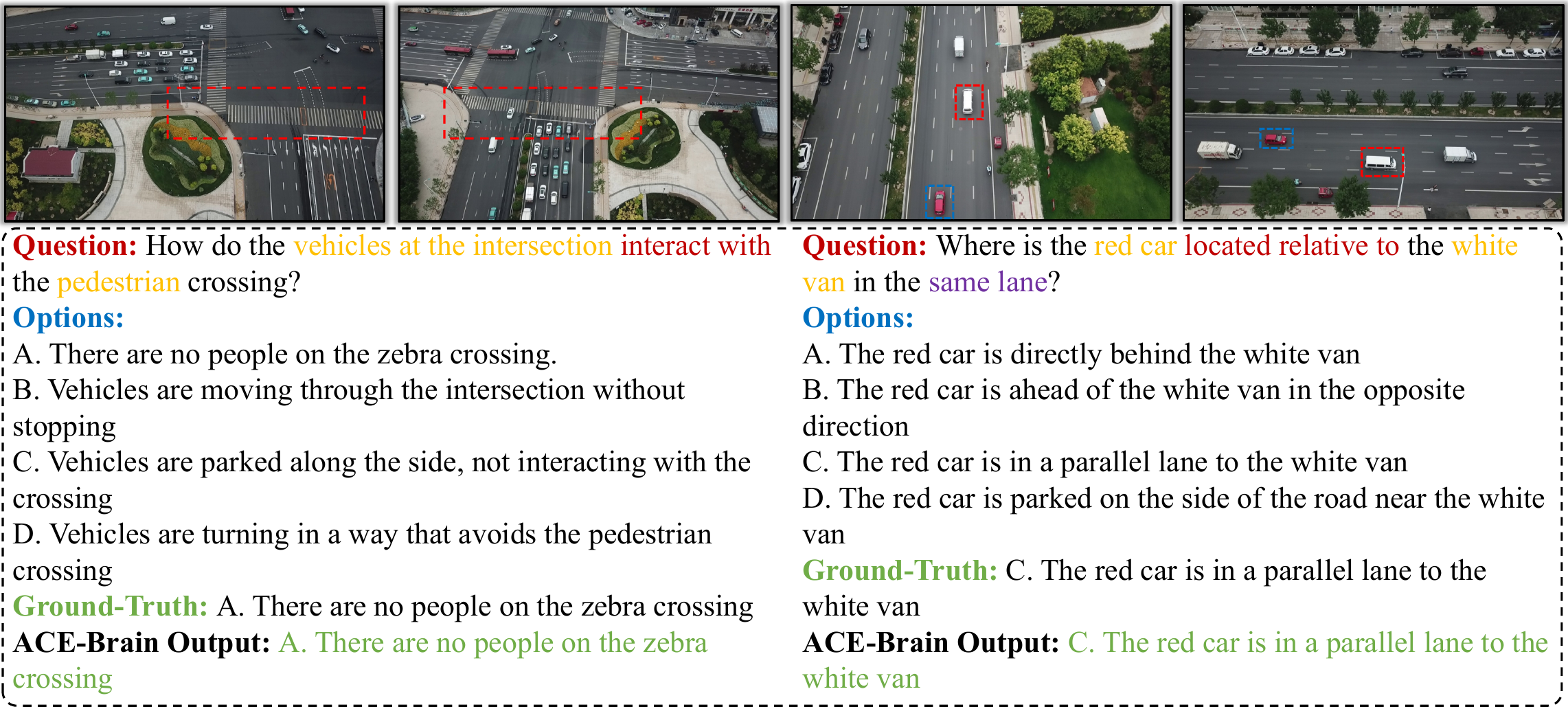}
    \caption{\textbf{Example 3 and 4 of AirCopBench.}}
    \label{aircop34}
\end{figure*}

\begin{figure*}
     \centering
    \includegraphics[width=1\textwidth]{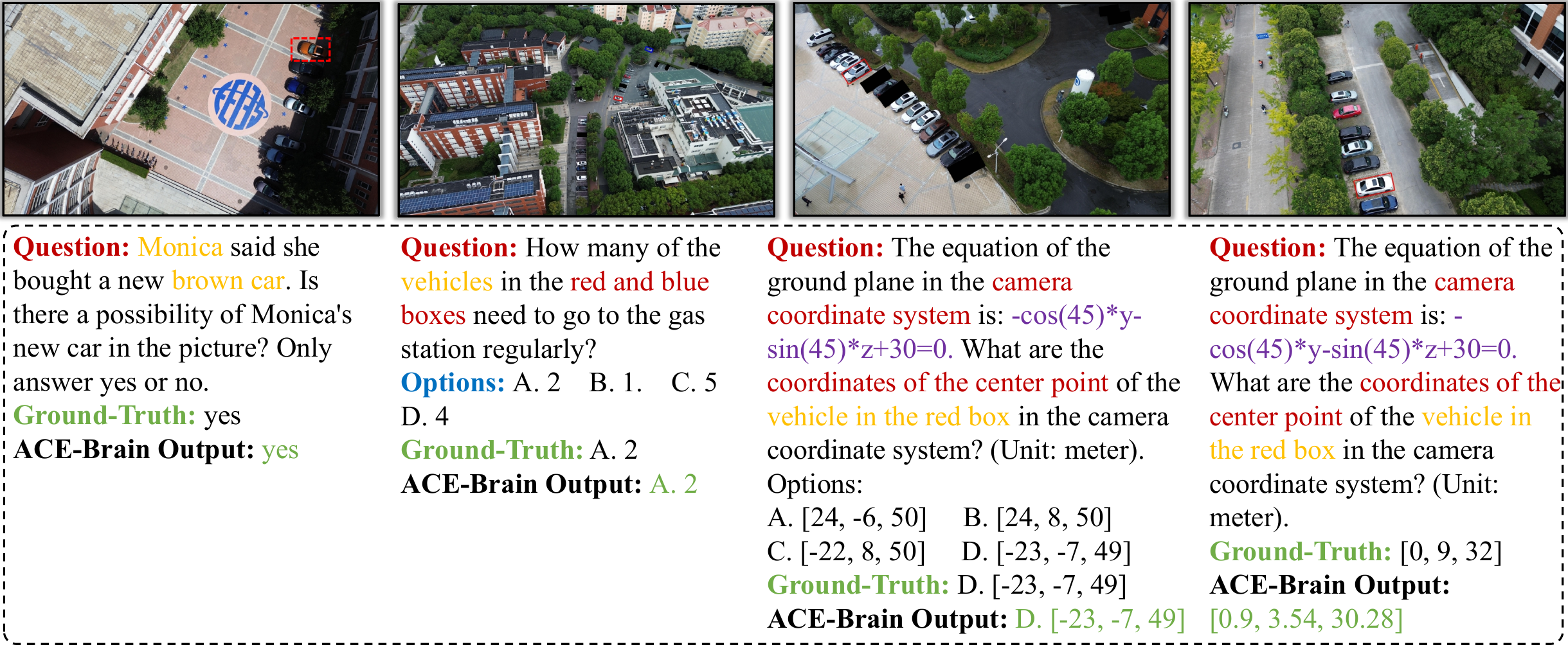}
    \caption{\textbf{Example 1-4 of AVI-Math.}}
    \label{avimath14}
\end{figure*}

\begin{figure*}
     \centering
    \includegraphics[width=1\textwidth]{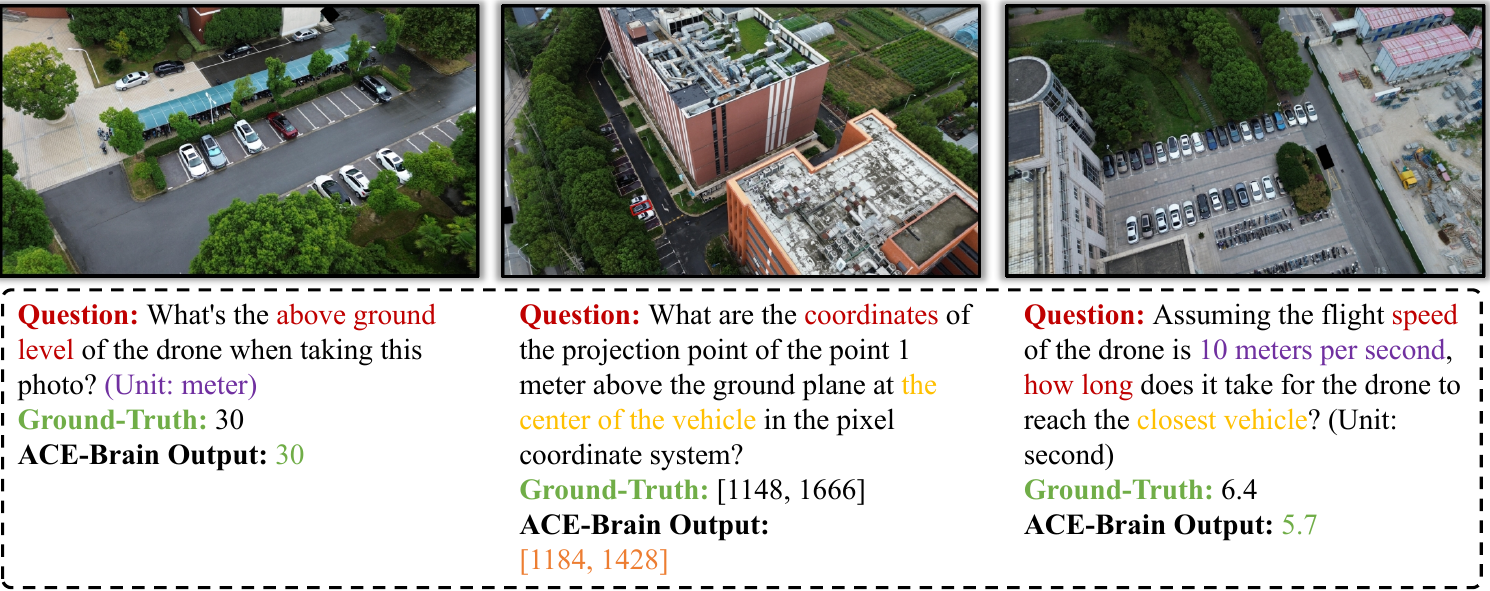}
    \caption{\textbf{Example 5-7 of AVI-Math.}}
    \label{avimath57}
\end{figure*}
\begin{figure*}
     \centering
    \includegraphics[width=1\textwidth]{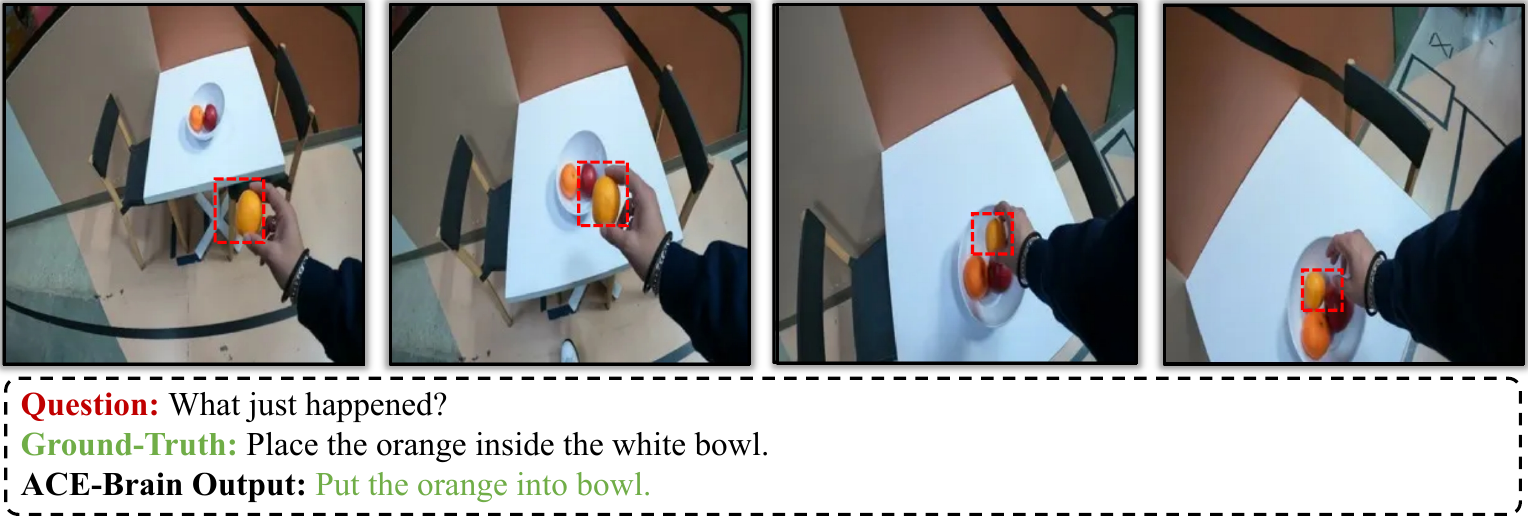}
    \caption{\textbf{Example 1 of RoboVQA Benchmark.}}
    \label{robovqa1}
\end{figure*}
\begin{figure*}
     \centering
    \includegraphics[width=1\textwidth]{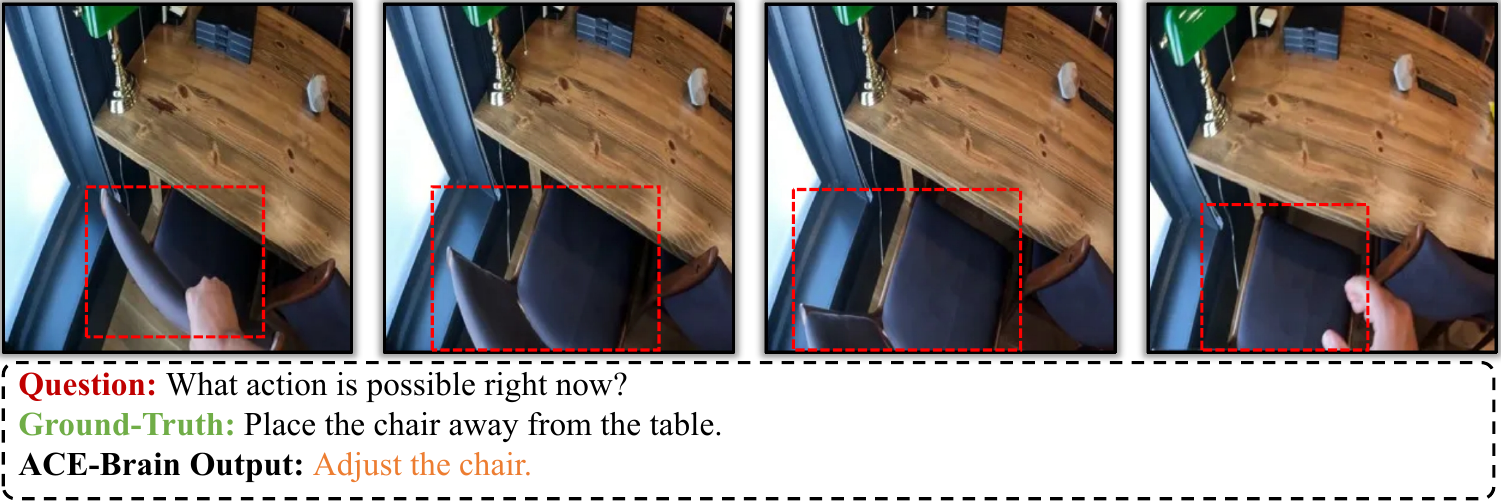}
    \caption{\textbf{Example 2 of RoboVQA Benchmark.}}
    \label{robovqa2}
\end{figure*}
\begin{figure*}
     \centering
    \includegraphics[width=1\textwidth]{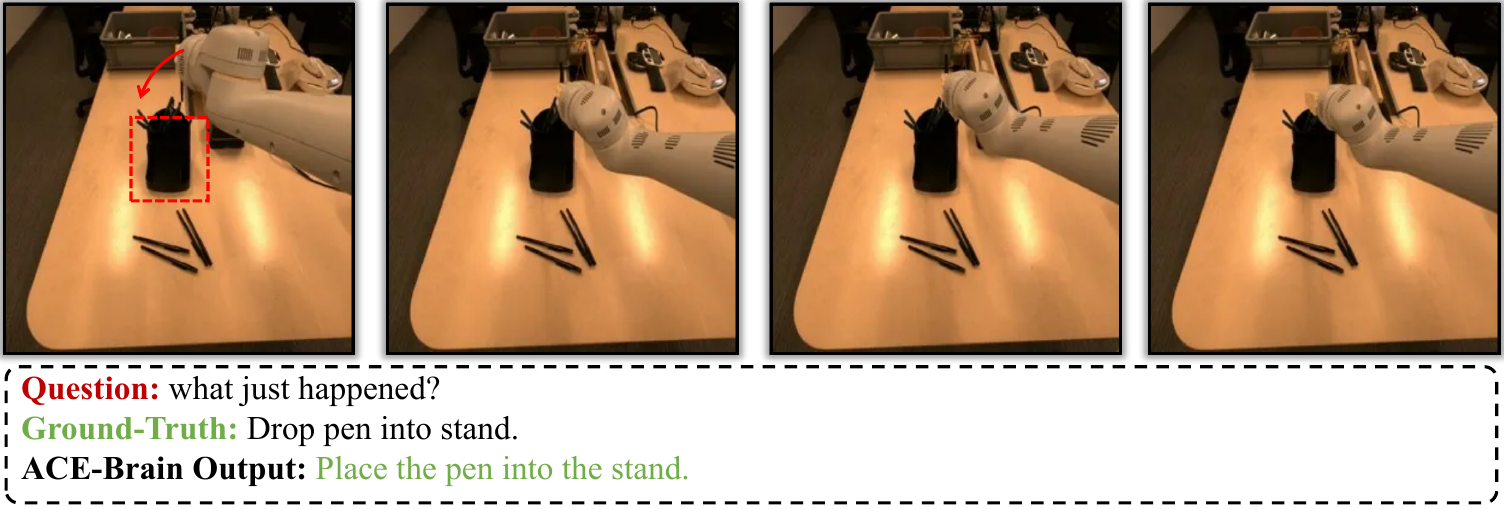}
    \caption{\textbf{Example 2 of RoboVQA Benchmark.}}
    \label{robovqa3}
\end{figure*}
\begin{figure*}
     \centering
    \includegraphics[width=1\textwidth]{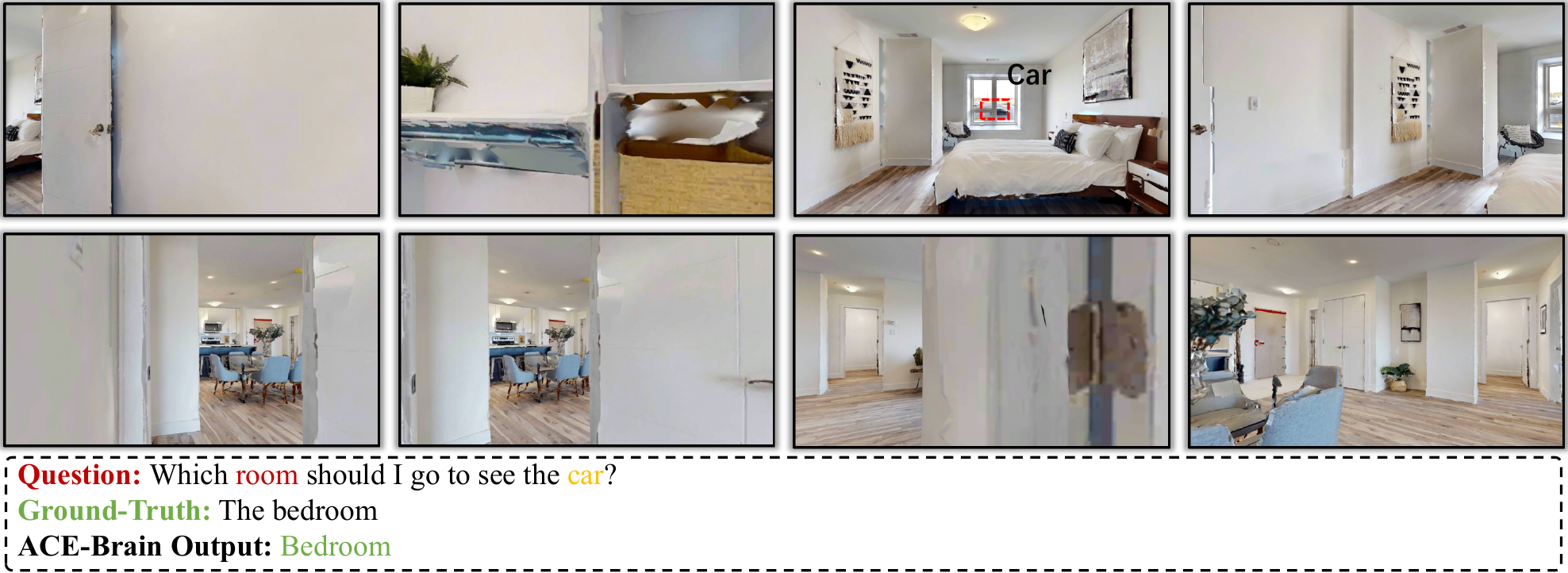}
    \caption{\textbf{Example 1 of OpenEQA Benchmark.}}
    \label{openeqa1}
\end{figure*}
\begin{figure*}
     \centering
    \includegraphics[width=1\textwidth]{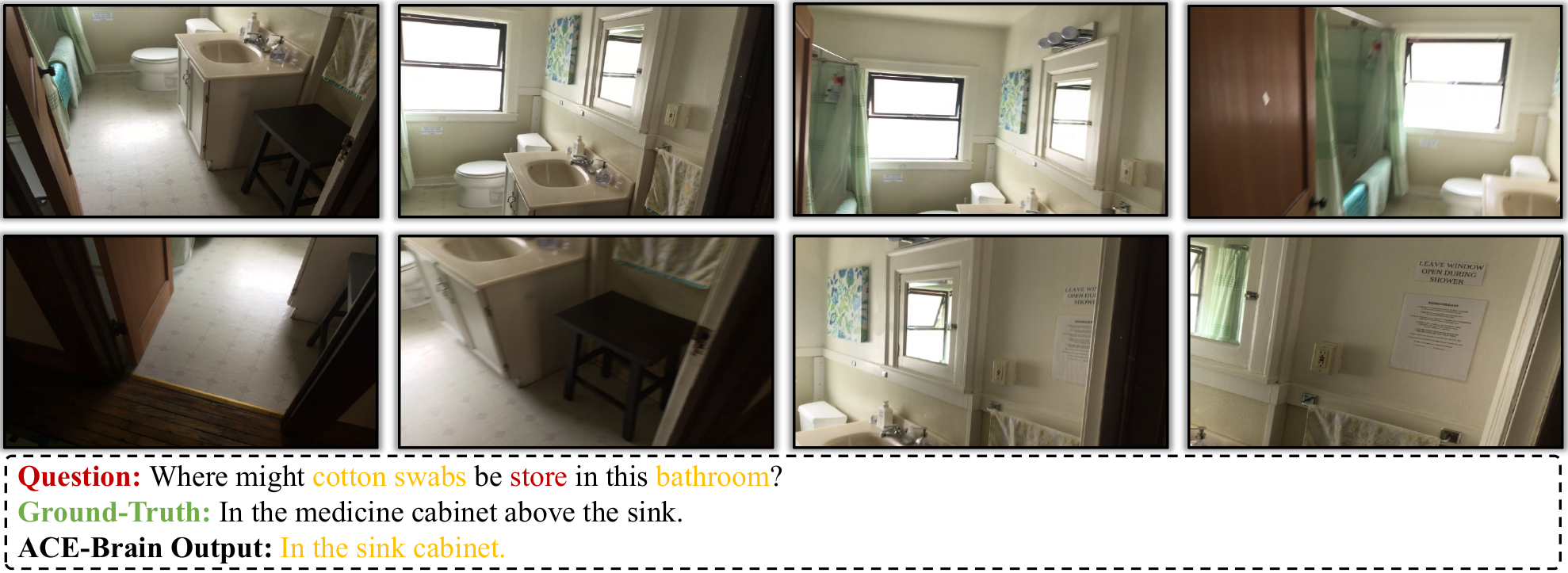}
    \caption{\textbf{Example 2 of OpenEQA Benchmark.}}
    \label{openeqa2}
\end{figure*}
\begin{figure*}
     \centering
    \includegraphics[width=1\textwidth]{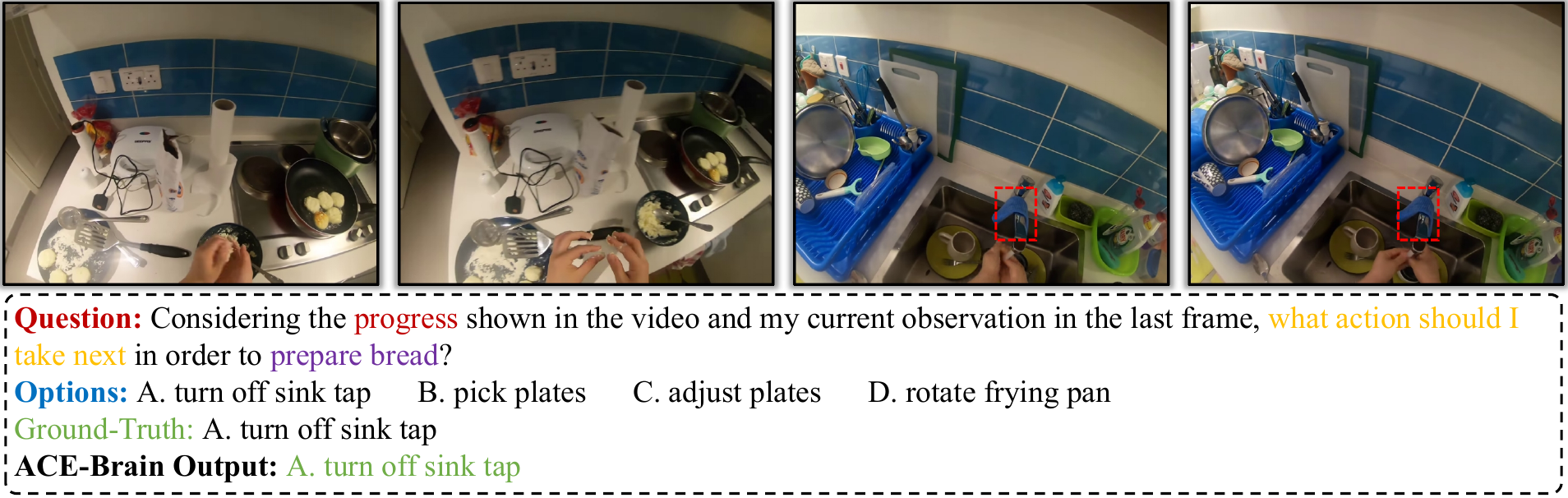}
    \caption{\textbf{Example 1 of EgoPlan-Bench2 Benchmark.}}
    \label{egoplan1}
\end{figure*}
\begin{figure*}
     \centering
    \includegraphics[width=1\textwidth]{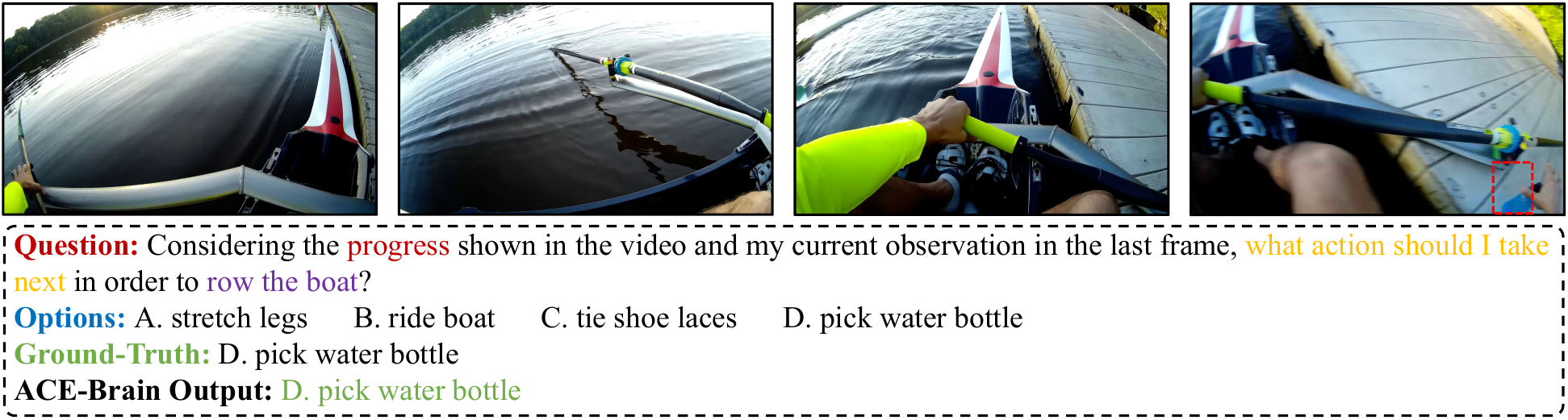}
    \caption{\textbf{Example 2 of EgoPlan-Bench2 Benchmark.}
}
    \label{egoplan2}
\end{figure*}

\end{document}